\definecolor{bgcolor}{rgb}{0.8,1,1}
\definecolor{bgcolor2}{rgb}{0.8,1,0.8}
\definecolor{niceblue}{rgb}{0.0,0.19,0.56}
\definecolor{mydarkgreen}{RGB}{39,130,67}
\definecolor{mydarkred}{RGB}{192,47,25}
\newcommand{\green}{\color{mydarkgreen}}
\newcommand{\red}{\color{mydarkred}}
\newcommand{\cmark}{{\green\ding{51}}}
\newcommand{\xmark}{{\red\ding{55}}}
\definecolor{dmorange500}{HTML}{FF5F19}
\definecolor{dmblue300}{HTML}{2267EB}
\definecolor{dmred300}{HTML}{FF617B}
\newtheorem{assum}{A\hspace{-2pt}}
\crefname{theorem}{theorem}{Theorems}
\Crefname{theorem}{Theorem}{Theorems}
\newcounter{lemma}
\renewcommand{\thelemma}{\arabic{lemma}}
\renewenvironment{lemma}[1][]{
    \refstepcounter{lemma}
    \noindent\textbf{Lemma~\thelemma. #1}\itshape
}{\par}
\crefname{lemma}{lemma}{lemmas}
\Crefname{lemma}{Lemma}{Lemmas}
\crefname{remark}{remark}{remarks}
\Crefname{remark}{Remark}{Remarks}
\newcounter{corollary}
\renewcommand{\thecorollary}{\arabic{corollary}}
\renewenvironment{corollary}[1][]{
    \refstepcounter{corollary}
    \noindent\textbf{Corollary~\thecorollary. #1}\itshape
}{\par}
\crefname{corollary}{corollary}{corollaries}
\Crefname{corollary}{Corollary}{Corollaries}
\crefname{proposition}{proposition}{propositions}
\Crefname{proposition}{Proposition}{Propositions}
\crefname{definition}{definition}{definitions}
\Crefname{Definition}{Definition}{Definitions}
\crefname{example}{example}{examples}
\Crefname{Example}{Example}{Examples}
\crefname{figure}{figure}{figures}
\Crefname{Figure}{Figure}{Figures}
\crefname{table}{table}{tables}
\Crefname{Table}{Table}{Tables}
\crefname{algorithm}{algorithm}{algorithms}
\Crefname{Algorithm}{Algorithm}{Algorithms}
\crefname{assum}{A\hspace{-3pt}}{A\hspace{-3pt}}
\crefname{assumb}{B\hspace{-2pt}}{B\hspace{-2pt}}
\crefname{assumUGE}{UGE\hspace{-1pt}}{UGE\hspace{-1pt}}
\crefname{assumID}{IND\hspace{-1pt}}{IND\hspace{-1pt}}
\crefname{assumUE}{UE\hspace{-1pt}}{UE\hspace{-1pt}}
\crefname{assumM}{M\hspace{-1pt}}{M\hspace{-1pt}}
\newlist{renumerate}{enumerate}{3}
\setlist[renumerate]{wide, labelwidth=!, labelindent=0pt,label=(\roman*)}
\newlist{aenumerate}{enumerate}{3}
\setlist[aenumerate]{wide, labelwidth=!, labelindent=0pt,label=(\arabic*)}
\newlist{aaenumerate}{enumerate}{3}
\setlist[aaenumerate]{wide, labelwidth=!, labelindent=0pt,label=(\alph*)}
\newlist{aenumerateSpace}{enumerate}{3}
\setlist[aenumerateSpace]{wide, labelwidth=!,label=(\arabic*)}
\newlist{benumerate}{enumerate}{3}
\setlist[benumerate]{wide, labelwidth=!, labelindent=0pt,label=$\bullet$}
\def\supconsteps{\supnorm{\funnoisew}}
\newcommandx\conststab[1][1=p]{\varkappa_{#1}}
\newcommand{\PE}{\mathbb{E}}
\newcommand{\PP}{\mathbb{P}}
\newcommandx{\genericb}[1][1=]{b_{#1}}
\newcommandx{\Constros}[1][1=]{\operatorname{C}_{\operatorname{Ros},#1}}
\newcommandx{\Constburk}[1][1=]{\operatorname{C}_{\operatorname{Burk}}}
\newcommandx{\driftW}[1][1=]{W_{#1}}
\def\metricz{\mathsf{d}_{\Zset}}
\def\metrics{\mathsf{d}_{\S}}
\newcommandx{\metricd}[1][1=]{\mathsf{d}_{#1}}
\newcommandx\invmeasure[1][1=]{\Pi_{#1}}
\newcommandx{\PPjoint}[1][1=]{\PP^{\MKjoint[#1]}}
\newcommandx{\PEjoint}[1][1=]{\PE^{\MKjoint[#1]}}
\newcommandx{\PEMID}[1][1=\alpha]{\PE^{\MK[#1]}}
\newcommandx{\PPMID}[1][1=\alpha]{\PP^{\MK[#1]}}
\newcommand{\supnorm}[1]{\norm{ #1 }[\infty]}
\newcommandx{\MKjoint}[1][1=]{\bar{\operatorname{P}}_{#1}}
\newcommandx\costw[1][1=]{\mathsf{c}_{#1}}
\newcommandx\Intergrdist[1][1=]{\mathbb{M}_{1}(#1)}
\newcommandx{\mmarkov}[1][1=0]{m^{(\Markov)}_{#1}}
\def\Q{\operatorname{Q}}
\renewcommand{\S}{\mathcal{S}}
\newcommand{\A}{\mathcal{A}}
\def\diag{\operatorname{diag}}
\def\PMDP{P}
\def\Zset{\mathsf{Z}}
\def\Zsigma{\mathcal{Z}}
\def\rset{\mathbb{R}}
\def\nset{\ensuremath{\mathbb{N}}}
\def\nsets{\ensuremath{\mathbb{N}}}
\newcommand{\msi}{\mathsf{I}}
\newcommand{\msj}{\mathsf{J}}
\def\MatB{B}
\newcommand{\bConst}[1]{\operatorname{C}_{{\mathbf{#1}}}}
\def\thetainit{\theta_0}
\newcommandx\sequence[4][2=,3=,4=]
\newcommandx\sequenceD[2][2=]
\newcommandx\sequenceDouble[4][3=,4=]
\newcommandx{\sequencen}[2][2=n\in\nset]{\ensuremath{\{ #1, \eqsp #2 \}}}
\newcommandx\sequencens[2][2=n]
\newcommandx\sequencet[4]
\def\PE{\mathbb{E}}
\def\ProdB{\Gamma}
\def\ProdBa{\ProdB^{(\alpha)}}
\newcommandx{\PVar}[1][1=]{\ensuremath{\operatorname{Var}_{#1}}}
\def\noisecov{\Sigma_\varepsilon}
\def\noisecovnew{\Sigma^{(tr)}_\varepsilon}
\def\noisecovopt{\Sigma^{(opt)}_\varepsilon}
\def\tracebound{\varrho}
\def\noisecovtd{\Sigma^{(TD)}_\varepsilon}
\def\tracebound{\varrho}
\def\covfeat{\Sigma_\varphi}
\newcommandx{\MK}[1][1=\alpha]{\mathrm{P}_{#1}}
\newcommandx\MKK[1][1=\alpha]{\mathrm{K}_{#1}}
\def\MKQ{\mathrm{Q}}
\newcommandx{\PEtilde}[1][1=]{\PE^{\mathrm{K}_{#1}}}
\newcommandx{\PPtilde}[1][1=]{\PP^{\mathrm{K}_{#1}}}
\newcommand{\PEext}{\tilde{\PE}}
\newcommand{\PPext}{\tilde{\PP}}
\def\tildeo{\tilde{\mathcal{O}}}
\newcommandx{\norm}[2][2=]{\Vert#1 \Vert_{{#2}}}
\newcommandx{\normLigne}[2][2=]{\Vert#1 \Vert_{{#2}}}
\newcommandx{\normLine}[2][2=]{\Vert#1 \Vert_{{#2}}}
\newcommandx{\normop}[2][2=]{\Vert{#1}\Vert_{{#2}}}
\newcommandx{\normopLigne}[2][2=]{\Vert{#1}\Vert_{{#2}}}
\newcommandx{\normopLine}[2][2=]{\Vert{#1}\Vert_{{#2}}}
\newcommandx{\osc}[2][1=]{\mathrm{osc}_{#1}(#2)}
\newcommandx{\normlip}[2][2=\operatorname{Lip}]{\Vert#1 \Vert_{{#2}}}
\newcommand{\lip}{\operatorname{L}}
\newcommandx{\lipspace}[1]{\lip_{#1}}
\newcommandx{\CPP}[3][1=]
{\ifthenelse{\equal{#1}{}}{{\mathbb P}\left(\left. #2 \, \right| #3 \right)}{{\mathbb P}_{#1}\left(\left. #2 \, \right | #3 \right)}}
\newcommandx{\CPPtilde}[3][1=]
{\ifthenelse{\equal{#1}{}}{{\tilde{\mathbb P}}\left(\left. #2 \, \right| #3 \right)}{{\tilde{\mathbb P}}_{#1}\left(\left. #2 \, \right | #3 \right)}}
\def\iid{i.i.d.}
\newcommandx{\as}[1][1=\PP]{\ensuremath{#1\, -\mathrm{a.s.}}}
\newcommand{\eqsp}{\;}
\newcommand{\Id}{\mathrm{I}}
\def\prtheta{\bar{\theta}}
\def\utheta{\tilde{\theta}^{\mathsf{(tr)}}}
\def\vtheta{\tilde{\theta}^{\mathsf{(fl)}}}
\newcommandx{\boundmetric}[1][1=]{\kappa_{\MKK[#1]}}
\newcommand{\coint}[1]{\left[#1\right)}
\newcommand{\ocint}[1]{\left(#1\right]}
\newcommand{\ccint}[1]{\left[#1\right]}
\newcommandx{\Nnorm}[2][1=V]{[ #2]_{#1}}
\newcommandx{\lipnorm}[2][1=g]{[ #1]_{#2}}
\newcommandx{\CPE}[3][1=]{{\mathbb E}^{#3}_{#1}\left[#2\right]}
\newcommandx{\CPEext}[3][1=]{\tilde{\mathbb E}^{#3}_{#1}\left[#2\right]}
\newcommandx{\CPEtilde}[3][1=]{{\tilde{\mathbb E}}^{#3}_{#1}\left[#2\right]}
\newcommandx{\CPEs}[3][1=]{{\mathbb E}^{#3}_{#1}[#2]}
\def\thetalim{\theta_\star}
\newcommand{\trace}[1]{\operatorname{Tr}(#1)}
\newcommand{\term}[2]{\mathsf{R}_{#2}(#1)}
\newcommand{\tvnorm}[1]{\left\Vert #1 \right\Vert_{\mathrm{TV}}}
\newcommand{\rme}{\mathrm{e}}
\newcommand{\rmd}{\mathrm{d}}
\def\funcAw{\mathbf{A}}
\def\funcBw{\mathbf{B}}
\newcommand{\funcA}[1]{\funcAw(#1)}
\def\funcbw{\mathbf{b}}
\newcommand{\funcb}[1]{\funcbw(#1)}
\newcommandx{\zmfuncA}[2][1=]{\tilde{\funcAw}^{#1}(#2)}
\newcommandx{\zmfuncAw}[1][1=]{\tilde{\funcAw}_{#1}}
\newcommandx{\zmfuncb}[2][1=]{\tilde{\funcbw}^{#1}(#2)}
\def\funnoisew{\varepsilon}
\newcommand{\funcnoise}[1]{\funnoisew(#1)}
\newcommandx{\funcct}[2][1=]{\funcctilde^{#1}(#2)}
\def\qcond{\kappa_{\Q}}
\def\State{Z}
\def\taumix{t_{\operatorname{mix}}}
\newcommandx{\CovC}[1][1=u]{\operatorname{C}_{#1}}
\def\red{\color{red}}
\def\msa{\mathsf{A}}
\def\msb{\mathsf{B}}
\def\msz{\mathsf{Z}}
\def\mcz{\mathcal{Z}}
\newcommand\borel[1]{\mathcal{B}(#1)}
\DeclareMathAlphabet{\mathpzc}{OT1}{pzc}{m}{it}
\def\lyapW{\mathpzc{W}}
\newcommandx{\bias}[1][1=\alpha]{\operatorname{B}_{#1}}
\newcommandx\probaMarkovTilde[2][2=]
\def\mcf{\mathcal{F}}
\newcommand{\parentheseDeuxLigne}[1]{[ #1 ]}
\def\bA{\bar{\mathbf{A}}}
\def\X{{\mathbf{X}}}
\def\Y{{\mathbf{Y}}}
\def\thetas{\thetalim}
\def\Am{{\mathbf{A}}}
\def\bm{{\mathbf{b}}}
\def\funcctilde{\tilde{c}_u}
\def\barb{\bar{\mathbf{b}}}
\newcommandx{\driftb}[1][1=p]{\bar{b}_{#1}}
\def\Zbf{\mathbf{Z}}
\def\red{\color{red}}
\newcommandx{\boldb}[1][1={q}]{\mathsf{b}_{#1}}
\newcommandx{\ConstGW}[1][1={n,\lyapW}]{\operatorname{G}_{#1}}
\newcommandx{\ConstMW}[1][1={n,\lyapW}]{\operatorname{M}_{#1}}
\newtheorem{assumptionC}{\textbf{C}\hspace{-1pt}}
\Crefname{assumptionC}{\textbf{C}\hspace{-1pt}}{\textbf{C}\hspace{-1pt}}
\crefname{assumptionC}{\textbf{C}}{\textbf{C}}
\newtheorem{assumTD}{\textbf{TD}\hspace{-1pt}}
\Crefname{assumTD}{\textbf{TD}\hspace{-1pt}}{\textbf{TD}\hspace{-1pt}}
\crefname{assumTD}{\textbf{TD}}{\textbf{TD}}
\newtheorem{assumptionM}{\textbf{UGE}\hspace{-1pt}}
\Crefname{assumptionM}{\textbf{UGE}\hspace{-1pt}}{\textbf{UGE}\hspace{-1pt}}
\crefname{assumptionM}{\textbf{UGE}}{\textbf{UGE}}
\def\distance{\mathsf{d}}
\newcommandx{\vartconstwas}[1][1=V]{c_{#1}}
\newcommandx{\deltawas}[1][1=*]{\delta_{#1}}
\newcommandx{\wasser}[4][1=\distance,4=]{\mathbf{W}_{#1}^{#4}\left(#2,#3\right)}
\newcommandx{\covcoeff}[2]{\rho_{#1}^{(#2)}}
\newcommand{\dobrush}{\mathsf{\Delta}}
\newcommandx{\dobru}[3][1=,3=]{\dobrush_{#1}^{#3}( #2)}  
\def\invariantQ{\mu}
\def\tZs{\tilde{Z}^{\star}}
\def\tZ{\tilde{Z}}
\def\tmszn{\tilde{\msz}_{\nset}}
\def\tmczn{\tilde{\mcz}_{\nset}}
\def\qexponent{q}
\def\ppexponent{p}
\def\Markov{\mathrm{M}}
\newcommand{\PEcoupling}[2]{\tilde{\PE}_{#1,#2}}
\newcommand{\PPcoupling}[2]{\tilde{\PP}_{#1,#2}}
\newcommandx{\dlim}[1]{\ensuremath{\stackrel{#1}{\Longrightarrow}}}
\def\G{\mathbf{G}}
\def\matrbound{\mathsf{g}}
\def\randmbound{\mathsf{\omega}} 
\title[High-probability bounds for TD learning]{Improved High-Probability Bounds for the Temporal Difference Learning Algorithm via Exponential Stability}
\begin{document}

\maketitle

\begin{abstract}%
In this paper we consider the problem of obtaining sharp bounds for the performance of temporal difference (TD) methods with linear function approximation for policy evaluation in discounted Markov decision processes. We show that a simple algorithm with a universal and instance-independent step size together with Polyak-Ruppert tail averaging is sufficient to obtain near-optimal variance and bias terms. We also provide the respective sample complexity bounds. Our proof technique is based on refined error bounds for linear stochastic approximation together with the novel stability result for the product of random matrices that arise from the TD-type recurrence.
\end{abstract}

\begin{keywords}%
Temporal difference learning, stochastic approximation, Polyak-Ruppert averaging 
\end{keywords}

\section{Introduction}
This paper aims to provide sharp statistical guarantees for the temporal difference (TD) learning algorithms that use a linear function approximation in the on-policy setting. The TD algorithm \citep{sutton1988learning,sutton:book:2018} is one of the most fundamental methods for policy evaluation in reinforcement learning (RL), acknowledged for its simplicity and ease of implementation. Theoretical analysis of TD learning in general state space is usually performed in the setting of \emph{linear function approximation} \citep{bertsekas1996neuro}. The asymptotic convergence of TD in such a setting was shown in \citep{tsitsiklis:td:1997}. At the same time, the current trend in the field of stochastic approximation is to study non-asymptotic properties of the error, and provide high probability error bounds \citep{mou2020linear, durmus2022finite, huo2023bias}. However, many of the existing works \citep{bhandari2018finite, dalal:td0:2018,lakshminarayanan:2018a} characterize the convergence guarantees and sample complexity only in terms of the mean-squared error (MSE). Other works \citep{korda2015td, patil2023finite} study versions of the TD learning algorithm with projections in order to overcome the crucial problem in the analysis related to the stability of the random matrix products \citep{guo1994stability,guo1995exponential}. The latter projections onto the feasible set are usually impractical. Other works provide the high-probability bounds \citep{li2023sharp}, but with the choice of step sizes relying on the (unknown in practice) instance-dependent quantities, related to the problem design, see \Cref{sec:td_learning} for details.

\paragraph{Contributions. } We enhance the existing $p$-moment and high probability bounds for the iterates of the TD learning procedure. Towards this aim we follow the framework of the linear stochastic approximation (LSA). Our main contributions are as follows:
\begin{itemize}[noitemsep,topsep=0pt]
    \item We propose a refined high-probability error bound for TD learning with linear function approximation and Polyak-Ruppert averaging with a \emph{universal and instance-independent} step size. We consider both the generative model assumption and trajectory-wise evaluation based on a sequence of observations forming a Markov chain. However, we show that the variance term of the instance-independent TD learning might be suboptimal with respect to its dependence upon the properties of the feature map.
    \item In order to obtain our results for the particular setting of TD learning, we provide error bounds for the LSA algorithms by directly assuming the core \emph{exponential stability} of the random matrix product (see assumption \Cref{assum:exp_stability} and related discussion). We then present a novel proof of exponential stability specifically for the TD(0) algorithm, which quantifies the speed at which the algorithm forgets its initial error. Our bound is tighter than the previously known results in the literature and serves as a pivotal element in eliminating the need for an additional projection step when addressing the high-order moments of the error \citep{patil2023finite}. Conventional proofs for the exponential stability of matrix products often impose an unnecessary restriction on the choice of step size by adjusting it to the \emph{minimal} eigenvalue of the design matrix. This limitation explains the need for projections in \citep{patil2023finite} and the instance-dependent step size in \citep{li2023sharp}. Our approach allows us to mitigate this drawback.
\end{itemize}
\paragraph{Related works. }
The number of contributions to the analysis of TD learning is substantial, and we can not hope to comprehensively cover them all. Significant progress has been made in evaluating the effects of tolerance levels and various parameters on the sampling efficiency of TD learning with linear function approximation \citep{lakshminarayanan:2018a,dalal:td0:2018,bhandari2018finite,srikant2019finite}. However, the minimax-optimal dependence on the tolerance level has only been established in expectation, see \citep{li_accelerated_TD}. This paper considered the MSE guarantees. The authors in \citep{khamaru2021temporal} considered complexity bounds for TD learning for finite state space in terms of $\ell_{\infty}$-norm. A recent paper \citep{pmlr-v211-duan23a} focuses on multi-step ahead TD learning. Among the closest counterparts to our paper, we must mention the following:

\begin{itemize}[noitemsep,topsep=0pt]
\item \citep{li_accelerated_TD} establishes lower bounds on the mean squared error (MSE) for policy evaluation problems. They also present bounds on the MSE of the variance-reduced TD learning algorithm, which covers both generative model and Markov sampling methods.
\item \citep{li2023sharp} provides high-probability bounds and sample complexity for the TD(0) learning algorithm and extends these findings to its off-policy counterpart (TDC) under the i.i.d. sampling assumption. Despite not separating the respective error bounds into the deterministic and stochastic components, the authors in \citep{li2023sharp} consider the step size that scales with the minimal eigenvalue of the feature matrix (see \Cref{assum:feature_design} for details). Such scaling is not only a drawback for the practical implementation of the algorithm but also inevitably implies a suboptimal rate of forgetting the initial error.
\item \citep{patil2023finite} focuses on determining the bounds of the second moment for TD(0) and high-probability bounds for projected TD(0) iterates. However, the established high-probability bounds require a projection procedure that relies on prior knowledge of the true parameter norm $\norm{\thetas}$, which is impractical. Despite this limitation, the study shows that this problem can be resolved using the restart technique.
\end{itemize}

Non-asymptotic results for TD learning could be also derived from the analysis of the general LSA algorithms \citep{mou2020linear,durmus2022finite}. However, the respective error bounds are typically loose in terms of problem-dependent quantities, related to the feature mapping considered in TD with linear function approximation. Detailed discussion is provided after \Cref{assum:exp_stability}.

\paragraph{Notations. }
\label{par:notations}
For the sequences $(a_n)_{n \in \nset}$ and $(b_n)_{n \in \nset}$ we write $a_n \lesssim b_n$ if there exist an absolute constant $c > 0$, such that $a_n \leq c b_n$ for any $n \in \nset$. We also write that $a_n = \tildeo(b_n)$, if $a_n \leq c (\log n)^{\kappa} b_n$ for some $\kappa > 0$. For the matrix $A \in \rset^{d \times d}$, such that $A = A^{\top} \succeq 0$,  and vector $x \in \rset^{d}$ we define the corresponding $A$-norm of $x$ as $\norm{x}[A] = \sqrt{x^{\top} A x}$. In the present text, the following abbreviations are frequently used: "w.r.t." stands for "with respect to", "i.i.d." stands for "independent and identically distributed".

\section{General LSA results}
\label{sec:lsa}
We consider the LSA problem, that is, we aim to solve a linear system $\bA \theta = \barb$ with a unique solution $\thetas$. We do not have access to $\bA$ and $\barb$  but instead we have access to a sequence of observations  $\{( \funcA{Z_n}, \funcb{Z_n})\}_{n \in \nset}$, where $(Z_k)_{k \in \nset}$ are noise variables taking values in a measurable space $(\msz,\mcz)$ and $\Am \colon \msz \to \rset^{d \times d}$, $\bm\colon \msz \to \rset^d$ are measurable functions. We consider the setting where $(Z_k)_{k \in \nset}$ is a sequence of \iid\ random variables with common distribution $\invariantQ$ satisfying
\begin{equation}
\textstyle 
\PE_{\mu}[ \funcA{Z_1} ] = \bA\eqsp, \text{ and } \PE_{\mu} [ \funcb{Z_1} ] = \barb\eqsp.
\end{equation}
For a fixed step size $\alpha > 0$, burn-in period $n_0 \in \nset$, and initialization $\theta_0 \in \rset^{d}$, we consider the sequences of LSA iterates $\{\theta_n \}_{n \in \nset}$ and its tail-averaged counterpart $\{ \prtheta_{n_0, n} \}_{n \geq n_0+1}$ given by
\begin{equation}
\label{eq:lsa}
\begin{split}
\textstyle \theta_{k} &= \theta_{k-1} - \alpha \{ \funcA{Z_k} \theta_{k-1} - \funcb{Z_k} \} \eqsp,~~ k \geq 1, \\
\textstyle \prtheta_{n_0,n} &= \textstyle (n-n_0)^{-1} \sum_{k=n_0}^{n-1} \theta_k \eqsp, ~~n \geq n_0+1 \eqsp.
\end{split}
\end{equation}
Unless explicitly stated, we set $n_0 = n/2$ and write $\prtheta_{n}$ instead of $\prtheta_{n/2,n}$. The sequence $\{ \prtheta_{n} \}$ corresponds to the Polyak-Ruppert averaged iterates; see \citep{ruppert1988efficient,polyak1992acceleration}. Using the definition \eqref{eq:lsa} and elementary algebra, we obtain
\begin{equation}
\label{eq:error_expansion_1_step}
\textstyle
\theta_{n} - \thetas = (\Id - \alpha \funcA{Z_n})(\theta_{n-1} - \thetas) - \alpha \funcnoise{\State_{n}}\eqsp,
\end{equation}
where the noise variable $\funcnoise{\cdot}$ is defined as
\begin{equation}
\label{eq:noise_eps_definition}
\funcnoise{z} = \zmfuncA{z} \thetas - \zmfuncb{z}\eqsp, \, \zmfuncA{z}=\funcA{z} - \bA\eqsp, \, \zmfuncb{z}= \funcb{z} - \barb\eqsp.
\end{equation}
The quantity $\funcnoise{\cdot}$ is crucial for our analysis, since in controls the noise level measured \emph{at the solution $\thetas$}. Note also that $\funcnoise{Z_i}$ are centered and denote by $ \noisecov$ the covariance matrix of $\funcnoise{Z_i}$, that is,
\begin{equation}
\label{eq:def_noise_cov}
\textstyle \noisecov = \PE[ \funcnoise{Z_1} \funcnoise{Z_1}^\top] \eqsp.
\end{equation}
Running the recurrence \eqref{eq:error_expansion_1_step}, we obtain the error decomposition
\begin{equation}
\label{eq:key_decomspotion_lsa}
\textstyle 
\theta_{n} - \thetas = \underbrace{\ProdBa_{1:n} \{ \theta_0 - \thetas \}}_{\utheta_{n}} - \underbrace{\alpha \sum\nolimits_{j=1}^n \ProdBa_{j+1:n} \funcnoise{Z_j}}_{\vtheta_{n}}\eqsp.
\end{equation}
In the above formula we introduced the \emph{product of random matrices}
\begin{equation}
\label{eq:definition-Phi} \textstyle
\ProdBa_{m:n}  = \prod_{i=m}^n (\Id - \alpha \funcA{Z_i} ) \eqsp, \quad m,n \in\nset, \quad m \leq n \eqsp,
\end{equation}
with the convention $\ProdBa_{m:n}=\Id$ for $m > n$. The error decomposition \eqref{eq:key_decomspotion_lsa} is essential for the analysis of LSA algorithms since it allows to split the LSA error into two parts; see, among many others, \citep{aguech2000perturbation,durmus2022finite}. The first, $\utheta_{n}$, reflects the rate at which the initial error of the procedure is forgotten, and the second, $\vtheta_{n}$, is responsible for the fluctuations of the LSA iterates around the solution $\thetas$. The analysis of both $\vtheta_{n}$ and $\vtheta_{n}$ crucially relies on the properties of the matrix product $\ProdBa_{m:n}$. In what follows, we present a verifiable set of conditions for the general tail-averaged LSA procedure. In \Cref{sec:td_learning} we then give a recipe for checking these assumptions for the family of TD algorithms. Our first set of assumptions is classical for LSA:
\begin{assum}
\label{assum:noise-level}
Random variables $(Z_k)_{k \in \nset}$ are \iid\ taking values in $(\msz,\mcz)$ with a distribution $\invariantQ$ satisfying $\PE [ \funcA{Z_1} ] = \bA$ and $\PE[ \funcb{Z_1} ] = \barb$. Moreover,
{
\begin{equation}
\supconsteps = \textstyle{\sup_{z \in \msz}\normop{\funcnoise{z}} < \infty}\eqsp, \quad
\bConst{A} = \textstyle{\sup_{z \in \msz} \normop{\funcA{z}} \vee \sup_{z \in \msz} \normop{\zmfuncA{z}} < \infty}\eqsp.
\end{equation}
}
\end{assum}
Assumption \Cref{assum:noise-level} was considered in several  papers, e.g. \citep{srikant2019finite, chen2020explicit}. Almost sure bounds for $\normop{\funcA{\cdot}}$ can be replaced by weaker moment-type bounds following the methods described in \citep{mou2020linear,durmus2021stability}. However, the applications of results with unconstrained noise, especially in the Markov noise setting of \Cref{sec:markov_td}, involves additional technical difficulties. For this reason, we refrain from relaxing the boundedness \Cref{assum:noise-level}. Now we come to the crucial assumption about the matrix product $\ProdBa_{1:n}$. Namely, we define the following family of \emph{exponential stability} assumptions for some $p \in [2, \infty)$:

\begin{assum}
\label{assum:exp_stability}($p$)
There exist $a > 0$, $\conststab > 0$,  $\alpha_{p,\infty} > 0$ (depending on $p$), such that $\alpha_{p,\infty} p \leq 1/2$, and for any $\alpha \in (0;\alpha_{p,\infty})$, $u \in \rset^{d}$, $n \in \nset$,
\begin{equation}
\label{eq:concentration_iid_matrix}
\PE^{1/p}\bigl[ \normop{\ProdBa_{1:n} u}^{p}\bigr]
\leq \conststab (1 - \alpha a)^{n} \norm{u}\eqsp.
\end{equation}
\end{assum}
Verifying the exponential stability assumption \Cref{assum:exp_stability} is crucial for studying properties of both $\vtheta_{n}$ and $\utheta_{n}$, see e.g. \citep{guo1995exponential, priouret1998remark}. For TD algorithms, exponential stability has been verified in \cite{patil2023finite} (for $p = 2$) and \citep{li2023sharp} (for arbitrary $p > 2$, but with suboptimal $\alpha_{p,\infty}$, see discussion in \Cref{sec:td_learning}). Note that \Cref{assum:exp_stability} can be verified under the classical stability conditions for linear systems. In particular, it is enough to assume \Cref{assum:noise-level} and additionally assume that the system matrix $-\bA$ is Hurwitz. The Hurwitzness of $-\bA$ is a necessary and sufficient condition for the exponential stability of the continuous-time ODE system $\dot{\theta_t} = \bA \theta_t$, see e.g. \citep{jacob2012linear}. Therefore, it is a standard condition for the exponential speed of forgetting the initial error $\utheta_{n}$, see \citep{mou2020linear,durmus2021tight}. However, such an assumption allows to prove a contraction
\begin{equation}
\label{eq:conraction_rate_q_norm}
\textstyle
\norm{\Id - \alpha \bA}[\Q]^{2} \leq 1 - \alpha \tilde{a}
\end{equation}
only in specific matrix $\Q$-norm, associated with the solution of the Lyapunov equation 
\begin{equation}
\label{eq:lyapunov}
\textstyle 
\bA^{\top} \Q + \Q \bA = -P\eqsp.
\end{equation}
Here the choice of the matrix $P = P^{\top} \succ 0 $ is an additional degree of freedom, with the default choice being $P = \Id$. In this case the factor $\tilde{a}$ in \eqref{eq:conraction_rate_q_norm} might be overly small, yielding suboptimal bias forgetting rate. Detailed discussion is provided in \Cref{sec:old_stability_bound}. In this paper we suggest a different view on the problem: we assume \Cref{assum:exp_stability} directly and then verify it for the particular example of TD learning with linear function approximation.
\subsection{Refined LSA results with \iid\, noise}
\label{subsec:LSA_refined}
In this section we provide general results for the tail-averaged LSA iterates, which can be viewed as simplified versions of \citep[Theorem~2]{durmus2022finite} and \citep[Theorem~3]{mou2020linear} with explicit dependence on the instance-dependent quantities, such as the contraction rate $a$. First, we give an elementary statement for the mean square error.
\begin{theorem}
\label{th:lsa_pr_2nd_moment_main}
Assume \Cref{assum:noise-level} and \Cref{assum:exp_stability}($2$). Then for any $n \geq 2$, $\alpha \in (0;\alpha_{2,\infty}]$, and $\theta_0 \in \rset^d$, it holds that
\begin{multline}
\label{eq:pr_theta_2nd_moment_bound}
\PE^{1/2}[\norm{\bA(\prtheta_{n} - \thetas)}^2] \lesssim  \frac{\sqrt{\trace{\noisecov}}}{n^{1/2}}\left(1 + \frac{\conststab[2] \bConst{A} \sqrt{\alpha}}{\sqrt{a}}\right) \\
+ \frac{\conststab[2] \sqrt{\trace{\noisecov}}}{\sqrt{\alpha a} n}
+  \conststab[2] (1 - \alpha a)^{n/2} \left(\frac{1}{\alpha n} + \frac{\bConst{A}}{\sqrt{\alpha a} n}\right)\norm{\theta_0 - \thetas}\eqsp.
\end{multline}
\end{theorem}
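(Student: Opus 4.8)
The plan is to turn the averaged error into a telescoping sum and control the resulting pieces separately. First I would write $\ttheta_k = \theta_k - \thetas$ and, after the splitting $\funcA{\State_k} = \bA + \zmfuncA{\State_k}$, rearrange the one-step identity \eqref{eq:error_expansion_1_step} into
\begin{equation*}
\alpha \bA \ttheta_{k-1} = \ttheta_{k-1} - \ttheta_{k} - \alpha \funcnoise{\State_k} - \alpha \zmfuncA{\State_k}\ttheta_{k-1}\eqsp.
\end{equation*}
Summing over $k \in \{n_0+1,\dots,n\}$, telescoping the first difference, and dividing by $\alpha(n-n_0)$ yields
\begin{equation*}
\bA(\prtheta_{n}-\thetas)
= \frac{\ttheta_{n_0}-\ttheta_{n}}{\alpha(n-n_0)}
- \frac{1}{n-n_0}\sum_{k=n_0+1}^{n}\funcnoise{\State_k}
- \frac{1}{n-n_0}\sum_{k=n_0+1}^{n}\zmfuncA{\State_k}\ttheta_{k-1}\eqsp,
\end{equation*}
which I call the boundary, noise-average, and cross terms. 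I would then bound the $L^2$-norm of each and combine by the triangle inequality.

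The engine for the boundary and cross terms is a per-iterate second-moment bound. Using \eqref{eq:key_decomspotion_lsa}, $\ttheta_k = \utheta_k - \vtheta_k$; \Cref{assum:exp_stability}($2$) applied to $u = \theta_0 - \thetas$ gives $\PE^{1/2}[\norm{\utheta_k}^2]\le\conststab[2](1-\alpha a)^k\norm{\theta_0-\thetas}$. For the fluctuation part I would use the closed covariance recursion $\PE[\vtheta_k\vtheta_k^\top] = \PE[(\Id-\alpha\funcA{\State_k})\vtheta_{k-1}\vtheta_{k-1}^\top(\Id-\alpha\funcA{\State_k})^\top] + \alpha^2\noisecov$, in which the cross terms vanish because $\vtheta_{k-1}$ is centered and independent of $\State_k$. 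Unrolling and taking the trace gives $\PE[\norm{\vtheta_k}^2] = \alpha^2\sum_{j=0}^{k-1}\PE[\trace{\ProdBa_{1:j}\noisecov\ProdBa_{1:j}^\top}]$; decomposing $\noisecov$ along its eigenbasis, applying \Cref{assum:exp_stability}($2$) to each eigenvector, and summing the geometric series yields $\PE[\norm{\vtheta_k}^2]\le\conststab[2]^2\alpha\trace{\noisecov}/a$. This is the key point that avoids a spurious per-step blow-up of the stability constant $\conststab[2]$. Combining,
\begin{equation*}
\PE^{1/2}[\norm{\ttheta_k}^2] \lesssim \conststab[2](1-\alpha a)^k\norm{\theta_0-\thetas} + \conststab[2]\sqrt{\alpha\trace{\noisecov}/a}\eqsp.
\end{equation*}

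The noise-average term is the dominant one: the summands $\funcnoise{\State_k}$ are \iid, centered, with covariance $\noisecov$, so orthogonality gives $\PE^{1/2}[\norm{(n-n_0)^{-1}\sum_k\funcnoise{\State_k}}^2] = (\trace{\noisecov}/(n-n_0))^{1/2}$, which with $n_0=n/2$ is the leading $\sqrt{\trace{\noisecov}}/n^{1/2}$ contribution. For the boundary term the triangle inequality together with the per-iterate bound at $k\in\{n_0,n\}$ produces the transient contribution $\conststab[2](1-\alpha a)^{n/2}(\alpha n)^{-1}\norm{\theta_0-\thetas}$ and the fluctuation contribution $\conststab[2]\sqrt{\trace{\noisecov}}(\sqrt{\alpha a}\,n)^{-1}$, matching two of the terms in \eqref{eq:pr_theta_2nd_moment_bound}.

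The main obstacle is the cross term, which carries the $\conststab[2]\bConst{A}\sqrt{\alpha}/\sqrt a$ correction. The decisive observation is that $\{\zmfuncA{\State_k}\ttheta_{k-1}\}$ is a martingale-difference sequence for the natural filtration: $\ttheta_{k-1}$ is $\sigma(Z_1,\dots,Z_{k-1})$-measurable while $\PE[\zmfuncA{\State_k}]=0$ with $\State_k$ independent of the past, so the conditional mean vanishes. Exact orthogonality of martingale increments then gives $\PE[\norm{\sum_k\zmfuncA{\State_k}\ttheta_{k-1}}^2] = \sum_k\PE[\norm{\zmfuncA{\State_k}\ttheta_{k-1}}^2]\le\bConst{A}^2\sum_k\PE[\norm{\ttheta_{k-1}}^2]$, using $\normop{\zmfuncA{\cdot}}\le\bConst{A}$ from \Cref{assum:noise-level}. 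Inserting the per-iterate bound, the geometric transient part sums to $\conststab[2]^2(1-\alpha a)^{2n_0}(\alpha a)^{-1}\norm{\theta_0-\thetas}^2$ and the constant fluctuation part to $(n-n_0)\conststab[2]^2\alpha\trace{\noisecov}/a$; dividing by $(n-n_0)^2$, taking square roots, and setting $n_0=n/2$ recovers both the $\bConst{A}\sqrt\alpha/\sqrt a$ variance correction and the $\bConst{A}(\sqrt{\alpha a}\,n)^{-1}$ piece of the initialization term. Collecting the three contributions yields \eqref{eq:pr_theta_2nd_moment_bound}.
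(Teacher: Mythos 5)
Your proposal is correct and follows essentially the same route as the paper: your telescoped identity is exactly the summation-by-parts decomposition \eqref{eq:pr_err_decompose} (with $e(\theta,z)=\funcnoise{z}+\zmfuncA{z}(\theta-\thetas)$ split into its two pieces), your per-iterate bound reproduces \Cref{th:LSA_last_iterate_refined}(i) (your covariance-recursion derivation of $\PE[\norm{\vtheta_k}^2]\le\conststab[2]^2\alpha\trace{\noisecov}/a$ is an equivalent repackaging of the paper's orthogonality argument for $\alpha\sum_j\ProdBa_{j+1:n}\funnoisew_j$), and the martingale-orthogonality treatment of the noise-average and cross terms matches the paper's handling of $T_2$. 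All constants and terms land where they should, so no gaps.
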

\textit{Proof sketch.} The proof relies on the summation by parts applied to the LSA error \eqref{eq:error_expansion_1_step}. This approach was previously applied in \citep{mou2020linear,durmus2022finite}, and yields
\begin{align}
\label{eq:pr_err_decompose_main}
\textstyle
\bA\left(\prtheta_{n} -\thetalim\right) =  2 (\alpha n)^{-1} (\theta_{n/2}-\theta_{n}) - 2n^{-1} \sum_{t=n/2}^{n-1} e\left(\theta_{t}, Z_{t+1} \right)\eqsp,
\end{align}
where we have defined $e(\theta,z) = \zmfuncA{z} \theta- \zmfuncb{z} = \funcnoise{z} + \zmfuncA{z} (\theta - \thetas)$. This transform justifies why it is convenient to state the bounds in terms of $\norm{\bA(\prtheta_{n} - \thetas)}$. The rest of the proof follows from the martingale structure of the term $\sum_{t=n/2}^{n-1} e\left(\theta_{t}, Z_{t+1} \right)$ w.r.t. filtration $\mathcal{F}_t = \sigma(Z_s, s \leq t)$. We also need to show the last iterate error bound $\PE^{1/2}[\norm{\theta_{n} - \thetas}^2] = \tildeo(\sqrt{\alpha})$, a standard result that was previously obtained in many papers on LSA, see e.g. \citep{dalal:td0:2018,bhandari2018finite}. This explains the factor $1 + \mathcal{O}(\sqrt{\alpha})$, which affects the leading term in \eqref{eq:pr_theta_2nd_moment_bound}. We provide the complete proof in \Cref{sec:general_lsa_proofs}, see \Cref{th:LSA_last_iterate_refined}. 
\unskip\nobreak\hfill $\square$
\par
Now we provide a $p$-moment bound. We assume that \Cref{assum:exp_stability}$(\ell)$ is satisfied for any $\ell \geq 2$, however, similar results could be obtained if \Cref{assum:exp_stability}$(p)$ holds only for a fixed parameter $2 \leq p < \infty$.
\begin{theorem}
\label{th:LSA_PR_error}
Suppose that assumptions \Cref{assum:noise-level} and \Cref{assum:exp_stability}($\ell$) hold for any $\ell \geq 2$. Then, for any $n \in \nset$, $p \geq 2$,  $\alpha \in \coint{0, \alpha_{p + \log n, \infty}}$, $\thetainit \in \rset^d$, we have
\begin{equation}
\label{eq:p_th_moment_LSA_bound}
\begin{split}
&\PE^{1/p}[\norm{\bA(\prtheta_{n} - \thetas)}^p] \lesssim \frac{p^{1/2}\sqrt{\trace{\noisecov}}}{n^{1/2}}\bigl(1 + \term{\alpha}{1} \bigr) + \frac{\conststab[p+\log{n}](1+\bConst{A}) p \supconsteps }{n} \\
&\qquad\qquad + \frac{p \conststab[p+\log{n}] \sqrt{ \trace{\noisecov}}}{n\sqrt{a}}\left(1 + \frac{1}{\sqrt{\alpha p}}\right) + \conststab[p+\log{n}] (1-\alpha a)^{n/2} \left(\frac{1}{\alpha n} + \frac{p \bConst{A}}{\sqrt{\alpha a} n}\right)\norm{\theta_0 - \thetas}\eqsp,
\end{split}
\end{equation}
where the term $\term{\alpha}{1}$ is given by
\[
\term{\alpha}{1} = \frac{ \conststab[p+\log{n}] \sqrt{\alpha p} \bConst{A}}{\sqrt{a}} + \frac{\conststab[p+\log{n}] \bConst{A} \alpha p \supconsteps}{\sqrt{\trace{\noisecov}}} \eqsp.
\]
\end{theorem}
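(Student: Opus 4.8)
The plan is to extend the second-moment argument of \Cref{th:lsa_pr_2nd_moment_main} to general $p$-moments, preserving the same summation-by-parts skeleton. First I would apply the by-parts identity \eqref{eq:pr_err_decompose_main}, which writes
\[
\bA(\prtheta_{n} - \thetalim) = 2(\alpha n)^{-1}(\theta_{n/2} - \theta_n) - 2n^{-1}\sum_{t=n/2}^{n-1} e(\theta_t, Z_{t+1})\eqsp,
\]
with $e(\theta, z) = \funcnoise{z} + \zmfuncA{z}(\theta - \thetas)$, and then bound the two pieces separately in $L^p$ by the triangle inequality. The \emph{boundary} term $2(\alpha n)^{-1}(\theta_{n/2} - \theta_n)$ is controlled directly by the last-iterate moment bound and contributes the terms carrying the prefactor $(\alpha n)^{-1}$ in \eqref{eq:p_th_moment_LSA_bound}. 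The bulk of the work lies in the \emph{martingale} term $n^{-1}\sum_{t} e(\theta_t, Z_{t+1})$, which is a sum of martingale differences with respect to $\mathcal{F}_t = \sigma(Z_s : s \leq t)$, since $\PE[e(\theta_t, Z_{t+1}) \mid \mathcal{F}_t] = 0$.

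For the martingale term I would invoke a vector-valued Burkholder--Rosenthal inequality, which splits the $p$-th moment of a sum of martingale differences $\{D_t\}$ into a \emph{Gaussian part}, proportional to $\sqrt{p}$ times the square root of the summed conditional second moments, and a \emph{jump part}, proportional to $p$ times the $L^p$-norm of the maximal increment. Decomposing $e(\theta_t, Z_{t+1}) = \funcnoise{Z_{t+1}} + \zmfuncA{Z_{t+1}}(\theta_t - \thetas)$, the dominant contribution comes from $\funcnoise{Z_{t+1}}$: its summed conditional second moments equal $(n/2)\trace{\noisecov}$, which after the $\sqrt{p}$ factor and the $2/n$ normalization produce the leading term $p^{1/2}\sqrt{\trace{\noisecov}}\,n^{-1/2}$. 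The correction $\zmfuncA{Z_{t+1}}(\theta_t - \thetas)$ is bounded using $\normop{\zmfuncA{\cdot}} \leq \bConst{A}$ together with last-iterate control of $\PE^{1/p}[\norm{\theta_t - \thetas}^p]$; this feeds the multiplicative factor $\term{\alpha}{1}$ and the lower-order terms involving $\supconsteps$.

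The remaining ingredient is a \emph{last-iterate $p$-moment bound} $\PE^{1/p}[\norm{\theta_t - \thetas}^p] = \tildeo(\sqrt{\alpha})$, uniform in $t \leq n$. I would derive it from the error decomposition \eqref{eq:key_decomspotion_lsa}: the deterministic part $\utheta_t = \ProdBa_{1:t}(\theta_0 - \thetas)$ is handled directly by the exponential stability bound \eqref{eq:concentration_iid_matrix}, while the fluctuation part $\vtheta_t = \alpha\sum_{j} \ProdBa_{j+1:t}\funcnoise{Z_j}$ is again a martingale to which the same Burkholder--Rosenthal inequality applies, with each $\ProdBa_{j+1:t}$ bounded in operator norm via \Cref{assum:exp_stability}. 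Summing the geometric series in $(1-\alpha a)$ yields the $\sqrt{\alpha/a}$ scaling. Because this bound must hold simultaneously for all $t$ in the martingale sum, and the moment inequality must dominate $\max_{t \leq n}$, the stability constant is evaluated at the shifted order $p + \log n$ (using $n^{1/(p + \log n)} \leq \rme$); this is the origin of the $\conststab[p+\log{n}]$ factors throughout \eqref{eq:p_th_moment_LSA_bound}.

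The main obstacle I anticipate is obtaining the \emph{sharp} $\sqrt{p}$ dependence on the leading variance term while keeping the last-iterate correction at the correct order in $\alpha$ and $p$. This requires the Gaussian/jump split to be applied carefully, so that the $\trace{\noisecov}$ contribution of $\funcnoise{}$ is not polluted by the coarser operator-norm bound $\supconsteps$ used only for the jump part; and it requires the self-referential structure of the bound --- the martingale increments depend on $\theta_t - \thetas$, itself controlled by a martingale argument --- to be closed without losing constants, which is precisely where the shifted stability order $p + \log n$ becomes essential.
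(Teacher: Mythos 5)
Your proposal matches the paper's proof essentially step for step: the same summation-by-parts decomposition into a boundary term and a martingale term, the same Pinelis-type Rosenthal (Gaussian plus jump) split applied after separating $\funcnoise{Z_{t+1}}$ from $\zmfuncAw[t+1](\theta_t - \thetas)$ via Minkowski, the same last-iterate $p$-moment bound built from the transient/fluctuation decomposition and exponential stability, and the same $p+\log n$ shift with $n^{1/(p+\log n)} \leq \rme$ to tame the maximal increment. The obstacles you flag (keeping $\trace{\noisecov}$ uncontaminated by $\supconsteps$ in the leading term, and closing the self-referential last-iterate bound) are exactly the points the paper's argument is organized around, so no gap.
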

\textit{Proof sketch.}
We use the same key decomposition \eqref{eq:pr_err_decompose_main} and utilize the martingale structure of $\sum_{t=n/2}^{n-1} e\left(\theta_{t}, Z_{t+1} \right)$ with Rosenthal's inequality \citep{pinelis_1994}. This technique requires to handle the (remainder w.r.t. $n$) term $\PE^{1/p}[\max_{t}\normop{\zmfuncAw[t+1](\theta_t - \thetas)}^{p}]$, which scales with $n$ as $n^{1/q}$ for any $q \geq p$ and $\alpha \in \coint{0, \alpha_{q, \infty}}$. This dependence is removed by setting $q= p + \log(n)$, and $\alpha \in \coint{0, \alpha_{p+\log n, \infty}}$. Complete proof is given in \Cref{sec:general_lsa_proofs}-\Cref{th:lsa_pr_error_p_moment}. 
\unskip\nobreak\hfill $\square$
\par 
The closest counterparts of \Cref{th:LSA_PR_error}, \citep[Theorem 2]{durmus2022finite}, and \citep[Theorem 3 and 4]{mou2020linear}, are less explicit in terms of dependence of the error terms upon the contraction rate $a$. Note that for a general SA problem the constant $\conststab[p+\log{n}]$ above might scale polynomially with $d$, see \citep{huang2020matrix}. Yet in particular applications $\conststab[p+\log{n}]$ might be \emph{dimension-free} and even independent from $p$, as we show in \Cref{sec:td_learning}. The bound given in \Cref{th:LSA_PR_error} highlights a remarkable property: the leading term of \eqref{eq:p_th_moment_LSA_bound} contains an additional multiplicative factor of
\[
\textstyle 
1 + \conststab[p+\log{n}] \sqrt{\alpha p} \bConst{A} /\sqrt{a} + \mathcal{O}(\alpha)\eqsp.
\]
If $\alpha$ is chosen so that the ratio $\alpha p / a= o(1)$, we achieve the 'optimal' sub-Gaussian leading term
\begin{equation}
\label{eq:opt_subgaus_leading}
\textstyle 
p^{1/2}\sqrt{\trace{\noisecov}} / n^{1/2}\eqsp.
\end{equation}
Optimality of the presented leading term is discussed in \citep{fort:clt:markov:2015,mou2020linear}. This is consistent with the findings from decreasing step size, which have ensured the attractiveness of Polyak-Ruppert algorithms, see e.g. \citep{bhandari2018finite}. On the other hand, in case of \emph{instance-independent} choice of step size $\alpha$ it is possible that the ratio $\alpha p / a$ is not small. In such a scenario the dominant term in \eqref{eq:p_th_moment_LSA_bound} could far exceed the optimum sub-Gaussian leading term \eqref{eq:opt_subgaus_leading}. Recent studies addressing constant step size SA schemes \citep{durmus2022finite, mou2020linear} circumvent this problem by adjusting the SA step, $\alpha$, relative to the time horizon $n$ as $\alpha = O(n^{- \kappa})$ for some $\kappa \in \ocint{0,1}$. However, this approach may result in a slower reduction of the initial error $\norm{\theta_0 - \thetas}$ and suboptimal instance-dependent second-order terms, a phenomenon observed in \cite{khamaru2021temporal} in case of TD learning.

\section{TD learning under i.i.d. noise}
\label{sec:td_learning}
\begin{table*}[t!]
\renewcommand{\arraystretch}{1.5}
    \centering
    \scriptsize
\caption{Summary of error bounds for TD(0) algorithm with linear functional approximation.}
    \label{tab:comparison0}   
    \scriptsize
\resizebox{\linewidth}{!}{
  \begin{threeparttable}
    \begin{tabular}{|c|c|c|c|c|c|c|}
    \cline{1-7}
    \textbf{Paper} & \begin{tabular}{@{}c@{}}
    \textbf{Algorithm}
    \\[-2mm]
    \textbf{type}
    \end{tabular}  & 
    \begin{tabular}{@{}c@{}}
    \textbf{step size}
    \\[-2mm]
    \textbf{schedule}
    \end{tabular} 
    & 
    \begin{tabular}{@{}c@{}}
    \textbf{Universal}
    \\[-2mm]
    \textbf{step size}
    \end{tabular} 
    & 
    \begin{tabular}{@{}c@{}}
    \textbf{Markovian}
    \\[-2mm]
    \textbf{data}
    \end{tabular} 
    &\begin{tabular}{@{}c@{}}
    \textbf{High-order}
    \\[-2mm]
    \textbf{bounds}
    \end{tabular} 
    &
    \begin{tabular}{@{}c@{}}
    \textbf{Not require}
    \\[-2mm]
    \textbf{projections}
    \end{tabular} 
    \\
    \hline
    \citep{bhandari2018finite}\tnote{{\color{blue}(1)}} & 
    Polyak-Ruppert & $1/\sqrt{n}$ & \cmark & \cmark & \xmark  & \xmark
    \\
    \hline
    \citep{dalal:td0:2018}\tnote{{\color{blue}(2)}} & 
     Last iterate & $1/k^{\varkappa}$ & \cmark & \xmark & \cmark  & \cmark
    \\
    \hline
    \citep{lakshminarayanan:2018a} & 
     Polyak-Ruppert & constant $\alpha$ & \cmark & \xmark & \xmark  & \cmark
     \\
    \hline
    \citep{patil2023finite}\tnote{{\color{blue}(3)}} & 
     Polyak-Ruppert & constant $\alpha$ & \cmark & \cmark & \cmark  & \xmark
     \\
     \hline
    \citep{li2023sharp} & 
     Polyak-Ruppert & constant $\alpha$ & \xmark & \xmark & \cmark  & \cmark
     \\
     \hline
    This paper & 
     Polyak-Ruppert & constant $\alpha$ & \cmark & \cmark & \cmark  & \cmark
     \\
    \hline
    \end{tabular}   
    \begin{tablenotes}  
    \item[] \tnote{{\color{blue}(1)}} \citep{bhandari2018finite} considers constant step size $\alpha = 1/\sqrt{n}$ with $n$ being total number of iterations and provide suboptimal MSE bound of order $\tildeo(1/\sqrt{n})$;
    \tnote{{\color{blue}(2)}} \citep{dalal:td0:2018} uses last iterate and decreasing step size schedule with $\alpha_{k} = 1/k^{\varkappa}$. Hence, the corresponding bias forgetting rate is sublinear, and the $n$-step MSE is of order $\tildeo(1/n^{\kappa})$;
    \tnote{{\color{blue}(3)}} \citep{patil2023finite} uses projections in order to prove the concentration bounds, moreover, the definition of the projection set involves unknown parameter $\thetas$. 
\end{tablenotes}    
\end{threeparttable}
}
\end{table*}

In this section we apply results of \Cref{sec:lsa} to the TD learning procedure. Namely, we consider a problem of estimating a value of the policy $\pi$ in a discounted MDP (Markov Decision Process) given by a tuple $(\S,\A,\PMDP,r,\gamma)$.
Here, $\S$ and $\A$ stand, respectively, for state and action spaces, and $\gamma \in (0,1)$ is a discount factor. We assume that $\S$ is a complete metric space equipped with a metric $\metrics$ and Borel $\sigma$-algebra $\borel{\S}$. $\PMDP$ stands for the transition kernel $\PMDP(B | s,a)$, which determines the probability of moving from state $s$ to a Borel set $B \in \borel{\S}$ when action $a$ is performed. For simplicity, the reward function $r\colon \S \times \A \to [0,1]$ is assumed to be deterministic. The policy $\pi(\cdot|s)$ is a distribution over the action space $\A$ corresponding to the agent's action preferences in state $s \in \S$. We aim to estimate the agent's \emph{value function}
\[
\textstyle{
V^{\pi}(s) = \PE[\sum_{k=0}^{\infty}\gamma^{k}r(s_k,a_k)|s_0 = s]}\eqsp,
\]
where $a_{k} \sim \pi(\cdot | s_k)$, and $s_{k+1} \sim \PMDP(\cdot | s_{k}, a_{k})$, for any $k \in \nset$. We define the transition kernel
\begin{equation}
\label{eq:transition_matrix_P_pi}
\textstyle \PMDP_{\pi}(B | s) = \int_{\A} \PMDP(B | s, a)\pi(\rmd a|s)\eqsp,
\end{equation}
which corresponds to the $1$-step transition probability from state $s$ to a set $B \in \borel{\S}$. The state space is arbitrary:  $\S$ may be finite, but with  $|\S| \gg 1$, or $\S \subset \rset^D$ may be uncountable. In this setting, it is a common option to consider the \emph{linear function approximation} of the value function $V^{\pi}(s)$, defined for $s \in \S$, $\theta \in \rset^{d}$, and feature mapping $\varphi\colon \S \to \rset^{d}$ as
\[
\textstyle
V_{\theta}^{\pi}(s) =  \varphi^\top(s) \theta\eqsp.
\]
Here $d$ is the dimension of the feature space. We consider $V_{\theta}^{\pi}(s)$ as an approximation to the true value function $V^{\pi}(s)$, and our goal is to find a parameter $\thetas$ that defines the best linear approximation of $V^{\pi}$ \citep{tsitsiklis:td:1997}. To properly define what it means, we introduce some notations, following  \citep{li2023sharp}. We denote by $\mu$ the invariant distribution over the state space $\S$ induced by the transition kernel $\PMDP_{\pi}(\cdot | s)$ in \eqref{eq:transition_matrix_P_pi}. Then we define $\thetas$  as a solution  
\begin{equation}
\label{eq:def:theta}
\textstyle 
\thetas = \arg\min_{\theta \in \rset^d}\PE_{\mu}\bigl[\left( V^{\pi}_{\theta}(s) - V^{\pi}(s) \right)^2\bigr]\eqsp.
\end{equation}
We define the \emph{design matrix} $\covfeat$ as
\begin{equation}
\textstyle
\covfeat = \PE_{\mu}[\varphi(s)\varphi(s)^{\top}] \in \rset^{d \times d}\eqsp.
\end{equation}
\!\!In the following, we are interested in minimizing the following distance between $\theta \in \rset^d$ and $\thetas$:
\[
\textstyle
\norm{\theta - \thetas}[\covfeat] = \PE^{1/2}_{\mu}\left[ (V^{\pi}_\theta(s) - V^{\pi}_{\thetas}(s))^2  \right]\eqsp.
\]
For the estimator $\hat{\theta}$ of $\thetas$, our primary concern is to control the error $\norm{\hat\theta - \thetas}[\covfeat]$ in two ways: firstly, by controlling its second moment $\PE[\norm{\hat\theta - \thetas}[\covfeat]^2]$, and secondly, by giving high-probability bound; available results are summarized in \Cref{tab:comparison0}. We consider the following assumptions on the generative mechanism and on the feature mapping $\varphi(\cdot)$:
\begin{assumTD}
\label{assum:generative_model}
Tuples $(s,a,s')$ are generated \iid with $s \sim \mu$, $a \sim \pi(\cdot|s)$, $s' \sim \PMDP(\cdot|s,a)$\eqsp.
\end{assumTD}

\begin{assumTD}
\label{assum:feature_design}
Matrix $\covfeat$ is non-degenerate with the minimal eigenvalue $\lambda_{\min} \equiv \lambda_{\min}(\covfeat)$. Moreover, the feature mapping $\varphi(\cdot)$ satisfies 
$\sup_{s \in \S} \|\varphi(s) \| \leq 1$.
\end{assumTD}
The generative model assumption \Cref{assum:generative_model} is used in many previous works; see, e.g.~\citep{dalal:td0:2018, li2023sharp, patil2023finite}. In  \Cref{sec:markov_td} we generalize this assumption to more realistic setting of on-policy evaluation over a single trajectory, where the induced LSA noise is Markovian.
\par 
In the setting of linear functional approximation the problem of estimating $V^{\pi}(s)$ reduces to the problem of estimating $\thetas \in \rset^{d}$, which can be done via the LSA procedure. It is known (see e.g. \citep{tsitsiklis:td:1997}), that optimal (in a sense of \eqref{eq:def:theta}) parameter $\thetas$ is a solution to the deterministic system$\bA \thetas = \barb$, where 
\begin{align}
\label{eq:system_matrix}
\bA &= \PE_{s \sim \mu, s' \sim \PMDP_{\pi}(\cdot|s)} [\phi(s)\{\phi(s)-\gamma \phi(s')\}^{\top}] \\
\barb &= \PE_{s \sim \mu, a \sim \pi(\cdot|s)}[\phi(s) r(s,a)]\eqsp.
\end{align}
In order to write the instance of the LSA algorithm for the system \eqref{eq:system_matrix}, we introduce the $k$-th step randomness $Z_k= (s_k, a_k, s'_{k})$. With slight abuse of notation, we write $\funcAw_{k}$ instead of $\funcA{Z_k}$, and $\funcbw_{k}$ instead of $\funcb{Z_k}$. Then the corresponding LSA update equation with step size $\alpha$ writes as
\begin{equation}
\label{eq:LSA_procedure_TD}
\theta_{k} = \theta_{k-1} - \alpha(\funcAw_{k} \theta_{k-1} - \funcbw_{k})\eqsp,
\end{equation}
where $\funcAw_{k}$ and $\funcbw_{k}$ are given by
\begin{equation}
\label{eq:matr_A_def}
\begin{split}
\funcAw_{k} &= \phi(s_k)\{\phi(s_k) - \gamma \phi(s'_k)\}^{\top}\eqsp, \\
\funcbw_{k} &= \phi(s_k) r(s_k,a_k)\eqsp.
\end{split}
\end{equation}
We provide the corresponding pseudocode in Algorithm~\ref{alg:TD_iid}. Under listed assumptions, we are able to check \Cref{assum:noise-level} and \Cref{assum:exp_stability}$(p)$. We first establish that \Cref{assum:noise-level} holds.

\begin{algorithm}[t!]
\caption{Temporal difference learning TD(0)}
\label{alg:TD_iid}
\SetKwInOut{Input}{Input}
\SetKwInOut{Output}{Output}
\SetKwBlock{Loop}{Loop}{end}
\SetKwBlock{Initialize}{Initialize}{end}
\SetKwFunction{Wait}{Wait}
\SetAlgoLined
\Input{feature mapping $\varphi(\cdot): \S \to \rset^{d}$, step size $\alpha$, number of iterations $n$, behavioral policy $\pi$; }
\For{$k=1,\ldots,n$}{
Receive tuple $(s_k, a_k, s'_{k})$ following \Cref{assum:generative_model}; \\
\noindent Compute update 
$\theta_{k} = \theta_{k-1} - \alpha(\funcAw_{k} \theta_{k-1} - \funcbw_{k})$
based on $\funcAw_{k}$, $\funcbw_{k}$ from \eqref{eq:matr_A_def}\;
}
\Output{tail-averaged estimate 
$\bar{\theta}_n = (2/n)\sum_{k= n/2 + 1}^{n} \theta_k$; \\ 
value function estimate $V_{\bar{\theta}_n}^{\pi}(s) =  \varphi^\top(s)\, \bar{\theta}_n$ \;
}
\end{algorithm}

\begin{lemma}
\label{prop:condition_check}
Let $\sequence{\theta}[k][\nset]$ be a sequence of TD(0) updates generated by \eqref{eq:LSA_procedure_TD} under \Cref{assum:generative_model} and \Cref{assum:feature_design}. Then this update scheme satisfies assumption \Cref{assum:noise-level} with
\begin{align}
& \label{eq:condition-check-1}\bConst{A} = 2 (1+\gamma)\eqsp, \quad \supconsteps = 2 (1+\gamma)(\norm{\thetas} + 1)\eqsp, \quad \trace{\noisecov} \leq 2 (1+\gamma)^2 (\norm{\thetas}[\covfeat]^2 + 1)\eqsp.
\end{align}
\end{lemma}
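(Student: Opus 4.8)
The plan is to verify the three quantities of \Cref{assum:noise-level} one at a time, leaning on two structural features of the TD(0) updates \eqref{eq:matr_A_def}: the matrix $\funcAw_k$ is rank one and built from feature vectors of norm at most one, and at the solution the mean field vanishes, $\bA\thetas = \barb$, which yields a clean residual form of the noise.

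First I would bound $\bConst{A}$. Since $\funcAw_k = \phi(s_k)\{\phi(s_k) - \gamma\phi(s'_k)\}^{\top}$ is rank one, its operator norm factorizes as $\normop{\funcAw_k} = \norm{\phi(s_k)}\,\norm{\phi(s_k) - \gamma\phi(s'_k)}$. Under \Cref{assum:feature_design}, $\norm{\phi(s_k)} \le 1$ and, by the triangle inequality, $\norm{\phi(s_k) - \gamma\phi(s'_k)} \le 1+\gamma$, so $\normop{\funcAw_k} \le 1+\gamma$. Jensen's inequality gives $\normop{\bA} \le \PE[\normop{\funcAw_1}] \le 1+\gamma$, hence $\normop{\zmfuncA{z}} = \normop{\funcAw_k - \bA} \le 2(1+\gamma)$. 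Taking the larger of the two suprema yields $\bConst{A} = 2(1+\gamma)$.

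Next comes the noise bound. Substituting $\bA\thetas = \barb$ into \eqref{eq:noise_eps_definition} gives the residual representation $\funcnoise{z} = \funcA{z}\thetas - \funcb{z} = \phi(s)\, g(z)$, where $g(z) = \{\phi(s) - \gamma\phi(s')\}^{\top}\thetas - r(s,a)$ is the scalar TD error at $\thetas$. For $\supconsteps$ I would use the crude triangle bound $\normop{\funcnoise{z}} \le \normop{\zmfuncA{z}}\norm{\thetas} + \norm{\zmfuncb{z}} \le 2(1+\gamma)\norm{\thetas} + 2$, where $\norm{\zmfuncb{z}} \le \norm{\funcb{z}} + \norm{\barb} \le 2$ from $\norm{\funcb{z}} \le 1$ and $\norm{\barb} \le 1$; absorbing $2 \le 2(1+\gamma)$ gives $\supconsteps = 2(1+\gamma)(\norm{\thetas}+1)$.

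The hard part will be the trace bound, because it must produce the $\covfeat$-weighted norm $\norm{\thetas}[\covfeat]^2$ rather than the cruder $\norm{\thetas}^2$. Since $\funcnoise{Z_1}$ is centered, $\trace{\noisecov} = \PE[\norm{\funcnoise{Z_1}}^2] = \PE[\norm{\phi(s)}^2 g(Z_1)^2] \le \PE[g(Z_1)^2]$ using $\norm{\phi(s)} \le 1$. Splitting $g^2 \le 2(\{\phi(s)-\gamma\phi(s')\}^{\top}\thetas)^2 + 2\,r(s,a)^2$ and using $r \le 1$, it remains to prove $\PE[(\{\phi(s)-\gamma\phi(s')\}^{\top}\thetas)^2] \le (1+\gamma)^2\norm{\thetas}[\covfeat]^2$. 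Writing $U = \phi(s)^{\top}\thetas$ and $V = \phi(s')^{\top}\thetas$, convexity of $x \mapsto x^2$ with weights $\tfrac{1}{1+\gamma}$ and $\tfrac{\gamma}{1+\gamma}$ gives $(U - \gamma V)^2 \le (1+\gamma)(U^2 + \gamma V^2)$. The decisive observation is the stationarity of $\mu$: since $s \sim \mu$ and $s' \sim \PMDP_\pi(\cdot\,|\,s)$ with $\mu$ invariant under $\PMDP_\pi$, the marginal law of $s'$ is again $\mu$, so $\PE[U^2] = \PE[V^2] = \thetas^{\top}\covfeat\thetas = \norm{\thetas}[\covfeat]^2$. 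This yields $\PE[(U-\gamma V)^2] \le (1+\gamma)^2\norm{\thetas}[\covfeat]^2$, and therefore $\trace{\noisecov} \le 2(1+\gamma)^2\norm{\thetas}[\covfeat]^2 + 2 \le 2(1+\gamma)^2(\norm{\thetas}[\covfeat]^2 + 1)$, as claimed.
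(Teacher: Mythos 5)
Your proposal is correct and follows essentially the same route as the paper: the same rank-one/triangle-inequality bounds for $\bConst{A}$ and $\supconsteps$, and for the trace the same splitting into the $\funcAw\thetas$ and reward parts followed by the weighted Cauchy--Schwarz bound and the invariance of $\mu$ (your scalar inequality $(U-\gamma V)^2\le(1+\gamma)(U^2+\gamma V^2)$ is exactly the quadratic-form version of the paper's bound $\PE[\funcAw_1^{\top}\funcAw_1]\preceq(1+\gamma)^2\covfeat$ from \eqref{eq:ata_bound}). No gaps.
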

An elementary proof is given in \Cref{sec:td_proofs}. Checking  \Cref{assum:exp_stability}$(p)$ is a more delicate issue. In particular, it is crucial to determine a tight bound on the stability threshold $\alpha_{p,\infty}$. \citep{patil2023finite} contains an instance-independent bound on the maximum step size, which scales only by a factor $1-\gamma$, for the case of $2$-nd moment stability. Higher-order moments are analyzed using a modification of TD, with an additional projection. The counterpart of the exponential stability \Cref{assum:exp_stability}$(p)$ is implicitly obtained in \citep{li2023sharp}, but in this work the stability bound scales with $\lambda_{\min}$, which is unavailable in practice. To the best of our knowledge, \textbf{we provide the first instance-independent stability bound for the TD (0) algorithm beyond the $2$-nd moment}:

\begin{lemma}
\label{prop:stability_check_td}
Let $\sequence{\theta}[k][\nset]$ be a sequence of TD(0) updates generated by \eqref{eq:LSA_procedure_TD} under \Cref{assum:generative_model} and \Cref{assum:feature_design}. Then this update scheme satisfies assumption \Cref{assum:exp_stability}$(p)$ with
\begin{equation}
\label{eq:stability_threshold_td}
\textstyle 
a = (1-\gamma) \lambda_{\min} / 2\eqsp, \eqsp \conststab = 1\eqsp, \, \alpha_{p,\infty} = (1-\gamma) / (128p)\eqsp.
\end{equation}
\end{lemma}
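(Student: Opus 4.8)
The plan is to track the $p$-th moment of $w_k := \ProdBa_{1:k} u$ directly along the recursion $w_k = (\Id - \alpha\funcAw_k) w_{k-1}$, $w_0 = u$, and to establish a one-step conditional contraction relative to $\filtr_{k-1} = \sigma(Z_1,\dots,Z_{k-1})$, for which $w_{k-1}$ is measurable while $\funcAw_k$ is independent. Writing $q = p/2$, $G_k = (\Id - \alpha\funcAw_k)^\top(\Id - \alpha\funcAw_k)$ and $\xi_k = \norm{w_k}^2/\norm{w_{k-1}}^2 - 1$ (so that $\norm{w_k}^2 = w_{k-1}^\top G_k w_{k-1}$), the whole statement reduces to the one-step bound $\PE[(1+\xi_k)^q \mid \filtr_{k-1}] \leq (1-\alpha a)^{2q}$; iterating this through the tower property gives $\PE[\norm{w_n}^p] \leq (1-\alpha a)^{np}\norm{u}^p$, which is exactly \Cref{assum:exp_stability}$(p)$ with $\conststab = 1$. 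The entire difficulty is to prove this one-step bound with a threshold $\alpha_{p,\infty}$ that does \emph{not} scale with $\lambda_{\min}$.

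The key structural principle is that every error term must be controlled by the ratio $\rho_{k-1} := \norm{w_{k-1}}[\covfeat]^2 / \norm{w_{k-1}}^2 \geq \lambda_{\min}$, rather than by $\norm{w_{k-1}}^2$ alone. Three spectral estimates supply this, all exploiting that under $\mu$ the marginal law of $s'$ is again $\mu$ (invariance of $\mu$ under $\PMDP_\pi$) together with $\sup_s\norm{\phi(s)}\leq 1$. First, writing $\bA = \covfeat - \gamma C$ with $C = \PE[\phi(s)\phi(s')^\top]$, Cauchy--Schwarz and the stationarity of $s'$ give $|u^\top C u| \leq \norm{u}[\covfeat]^2$, hence $\bA + \bA^\top \succeq 2(1-\gamma)\covfeat$. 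Second, from $\funcAw_k^\top\funcAw_k = \norm{\phi(s)}^2(\phi(s)-\gamma\phi(s'))(\phi(s)-\gamma\phi(s'))^\top$ and $(\phi(s)-\gamma\phi(s'))(\phi(s)-\gamma\phi(s'))^\top \preceq 2\phi(s)\phi(s)^\top + 2\gamma^2\phi(s')\phi(s')^\top$ we obtain $\PE[\funcAw_k^\top\funcAw_k] \preceq 2(1+\gamma^2)\covfeat$. Together these yield the drift $\PE[\xi_k \mid \filtr_{k-1}] \leq -2\alpha(1-\gamma)\rho_{k-1} + 2\alpha^2(1+\gamma^2)\rho_{k-1}$. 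Third, and most importantly, the diffusion is also proportional to $\rho_{k-1}$: bounding one factor $(\phi(s)-\gamma\phi(s'))^\top w_{k-1}$ by $(1+\gamma)\norm{w_{k-1}}$ and again controlling the cross term gives both $\PE[(w_{k-1}^\top(\funcAw_k+\funcAw_k^\top)w_{k-1})^2 \mid \filtr_{k-1}] \leq 4(1+\gamma)^2\rho_{k-1}\norm{w_{k-1}}^4$ and $\PE[\norm{\funcAw_k w_{k-1}}^4\mid\filtr_{k-1}] \leq (1+\gamma)^4\rho_{k-1}\norm{w_{k-1}}^4$, whence $\PE[\xi_k^2 \mid \filtr_{k-1}] \lesssim \alpha^2(1+\gamma)^2\rho_{k-1}$.

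With these estimates I would invoke the elementary inequality $(1+x)^q \leq 1 + qx + q^2x^2$, valid once $q|x| \leq 1$, which applies here because $|\xi_k| \leq 3\alpha(1+\gamma)$ and the threshold $\alpha \leq \alpha_{p,\infty} = (1-\gamma)/(128p)$ forces $q|\xi_k| \leq 3(1-\gamma^2)/256 < 1$. Taking conditional expectations and inserting the drift and diffusion bounds,
\[
\PE[(1+\xi_k)^q\mid\filtr_{k-1}] \leq 1 - 2q\alpha\rho_{k-1}\bigl[(1-\gamma) - \alpha(1+\gamma^2) - C'q\alpha(1+\gamma)^2\bigr]
\]
for an absolute constant $C'$. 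The deliberately generous choice $\alpha \leq (1-\gamma)/(128p)$ keeps the bracket above $(1-\gamma)/2$, so the right-hand side is at most $1 - q\alpha(1-\gamma)\rho_{k-1} \leq 1 - q\alpha(1-\gamma)\lambda_{\min} = 1 - 2q\alpha a$ with $a = (1-\gamma)\lambda_{\min}/2$; Bernoulli's inequality then gives $1 - 2q\alpha a \leq (1-\alpha a)^{2q}$, the desired one-step bound. I expect the genuine obstacle to be precisely the diffusion estimate of the previous paragraph: it is only because $\PE[\xi_k^2\mid\filtr_{k-1}]$ can be made proportional to $\rho_{k-1}$ — the very factor that drives the drift — rather than to $\norm{w_{k-1}}^2$, that the quadratic correction is absorbed into the linear drift with a threshold independent of $\lambda_{\min}$. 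The conventional estimate $\PE[\funcAw_k^\top\funcAw_k]\preceq(1+\gamma)^2\Id$ destroys this proportionality and is exactly what compels the $\lambda_{\min}$-dependent step size in \citep{li2023sharp,patil2023finite}.
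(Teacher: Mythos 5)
Your proposal is correct, and it reaches the stated constants by a genuinely different route than the paper. The paper also reduces to a one-step conditional contraction of $\PE[\normop{\ProdBa_{j:n}\xi_{j-1}}^p\mid\mathcal{F}_{j-1}]$, but it works at the matrix level: it first restricts to dyadic $p=2^s$ (rounding general $p$ up, which is where the factor $128$ rather than $64$ comes from), proves the deterministic inequality $(u^{\top}Bu)^{p}\leq\norm{u}^{2p-2}u^{\top}B^{p}u$ by induction and Cauchy--Schwarz, and then controls $\PE[\{(\Id-\alpha\funcAw)^{\top}(\Id-\alpha\funcAw)\}^{p}]\preceq\Id-(1/2)\alpha p(1-\gamma)\covfeat$ via a binomial expansion of $(\Id-\alpha\funcBw)^p$ with $\funcBw=\funcAw+\funcAw^{\top}-\alpha\funcAw^{\top}\funcAw$, using the moment bounds $\PE[\funcBw]\succeq(1-\gamma)\covfeat$ and $\PE[\funcBw^{k}]\preceq\tfrac{13}{12}\,4^{k}\covfeat$. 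Your scalar drift--diffusion argument replaces the matrix-power computation with a second-order Taylor expansion of $(1+\xi_k)^{q}$, and your key observation --- that both the drift and the diffusion of $\xi_k$ are proportional to $\rho_{k-1}=\norm{w_{k-1}}[\covfeat]^2/\norm{w_{k-1}}^2$ --- is exactly the scalar shadow of the paper's observation that \emph{all} powers $\PE[\funcBw^{k}]$ are dominated by $\covfeat$ rather than by $\Id$; in both cases this is what lets the quadratic correction be absorbed into the linear drift with a step-size threshold free of $\lambda_{\min}$. What your route buys is that it treats all real $p\geq2$ directly, avoiding the dyadic reduction and its factor-of-two loss; what the paper's route buys is a clean, reusable operator inequality on $\PE[B^{p}]$ and slightly more transparent constants (your Taylor remainder for $(1+x)^q$ with $x<0$ strictly needs an absolute constant in front of $q^2x^2$, which you correctly anticipate with $C'$, and your bound $\PE[\funcAw^{\top}\funcAw]\preceq2(1+\gamma^2)\covfeat$ is marginally weaker than the paper's $(1+\gamma)^2\covfeat$; neither affects the threshold $(1-\gamma)/(128p)$).
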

Proof of \Cref{prop:stability_check_td} is provided in \Cref{subsec:stability_check_td}.
Note that, in strong contrast with this result, leveraging the matrix stability argument of \citep{huang2020matrix} and \citep{durmus2021tight} yield an instance-dependent stability threshold 
\begin{equation}
\label{eq:old_stability_threshold}
\textstyle 
\alpha_{p,\infty} = (1-\gamma) \lambda_{\min} / (c_0 p)
\end{equation}
for some absolute constant $c_0 > 0$. A detailed derivation of the bound \eqref{eq:old_stability_threshold} can be found in \Cref{sec:old_stability_bound}. The same order of magnitude of the step size is predicted in \cite[Theorem 1]{li2023sharp}. Thus, with the result \Cref{prop:stability_check_td}, we can prove the convergence of TD(0) for larger step sizes.
\par 
Now we are ready to adapt the conclusions of \Cref{subsec:LSA_refined} to TD learning. To represent the bounds in terms of $\norm{\cdot}[\covfeat]$ rather than the norm associated with the system matrix $\bA$, we can use the lower bound
\[
\textstyle 
\norm{\bA(\prtheta_{n} - \thetas)}^2 \geq (1-\gamma)^2 \lambda_{\min}\norm{\prtheta_{n} - \thetas}[\covfeat]^2\eqsp.
\]
The proof of this bound is provided in \Cref{lem:new_conditions_td_check}, and closely follows the idea of \cite[Lemma~5]{li2023sharp}. We begin with bounding the $2$-nd moment of the error, and immediately reformulate this result as a sample complexity bound . 

\begin{theorem}
\label{th:td_lsa_pr_2nd_moment}
Assume \Cref{assum:generative_model} and \Cref{assum:feature_design}.
Let $\sequence{\theta}[k][\nset]$ be a sequence of TD(0) updates generated by \eqref{eq:LSA_procedure_TD}.  Then for any $n \geq 2$, $\alpha \in \ocint{0;\frac{1-\gamma}{256}}$, and $\theta_0 \in \rset^d$, it holds that
\begin{align}
\label{eq:pr_theta_2nd_moment_bound_complexity}
\PE^{1/2}[ \norm{\prtheta_{n} - \thetas}[\covfeat]^2] &\lesssim  \frac{\norm{\thetas}[\covfeat] + 1}{\sqrt{\lambda_{\min} n} (1-\gamma) }\bigl(1 + \frac{\sqrt{\alpha}}{\sqrt{(1-\gamma) \lambda_{\min}}}\bigr) + \frac{\norm{\thetas}[\covfeat] + 1}{\sqrt{\alpha} (1-\gamma)^{3/2} \lambda_{\min} n} \\
&+ f_{1}(\alpha,\lambda_{\min},n) \bigl(1 - \frac{\alpha (1-\gamma) \lambda_{\min}}{2} \bigr)^{n/2} \norm{\theta_0 - \thetas}\eqsp, 
\end{align}
where $f_{1}(\alpha, \lambda_{\min}, n)$ is a polynomial function in $1/\alpha, 1/\lambda_{\min}, n$ specified in \Cref{sec:td_missing}-\eqref{eq:pr_theta_2nd_moment_bound_appendix}. 
\end{theorem}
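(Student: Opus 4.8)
The plan is to invoke the general second-moment bound of \Cref{th:lsa_pr_2nd_moment_main} and then specialise all the instance-dependent constants to their TD(0) values, finally translating the $\bA$-norm into the $\covfeat$-norm. First I would observe that \Cref{prop:condition_check} and \Cref{prop:stability_check_td} together verify exactly the hypotheses \Cref{assum:noise-level} and \Cref{assum:exp_stability}$(2)$ required by \Cref{th:lsa_pr_2nd_moment_main}. Crucially, \Cref{prop:stability_check_td} gives the stability threshold $\alpha_{2,\infty} = (1-\gamma)/256$, so the admissible range $\alpha \in \ocint{0;(1-\gamma)/256}$ in the statement is precisely the range over which \Cref{th:lsa_pr_2nd_moment_main} applies. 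Hence I can directly write down the bound \eqref{eq:pr_theta_2nd_moment_bound} on $\PE^{1/2}[\norm{\bA(\prtheta_n - \thetas)}^2]$.

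Next I would substitute the explicit TD(0) constants $\conststab[2] = 1$, $a = (1-\gamma)\lambda_{\min}/2$, $\bConst{A} = 2(1+\gamma)$, and $\sqrt{\trace{\noisecov}} \lesssim (1+\gamma)(\norm{\thetas}[\covfeat] + 1)$ coming from \Cref{prop:condition_check} and \Cref{prop:stability_check_td}. Since $1+\gamma \leq 2$ is an absolute constant, every occurrence of $(1+\gamma)$ can be absorbed into the $\lesssim$ symbol; the dependence that must be tracked carefully is only on $\lambda_{\min}$, on $1-\gamma$, on $\alpha$, and on $n$. For instance, the correction factor $\bConst{A}\sqrt{\alpha}/\sqrt{a}$ in the leading term becomes $\lesssim \sqrt{\alpha}/\sqrt{(1-\gamma)\lambda_{\min}}$, which produces the parenthesised term in the first line of \eqref{eq:pr_theta_2nd_moment_bound_complexity}, while the second fluctuation term $\conststab[2]\sqrt{\trace{\noisecov}}/(\sqrt{\alpha a}\,n)$ collapses, after division, to $\lesssim (\norm{\thetas}[\covfeat]+1)/(\sqrt{\alpha}(1-\gamma)^{3/2}\lambda_{\min} n)$.

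Finally, to pass from $\norm{\bA(\cdot)}$ to $\norm{\cdot}[\covfeat]$ I would apply the lower bound $\norm{\bA(\prtheta_n - \thetas)}^2 \geq (1-\gamma)^2 \lambda_{\min}\norm{\prtheta_n - \thetas}[\covfeat]^2$ (established in \Cref{lem:new_conditions_td_check}), which after taking square roots of expectations divides the entire right-hand side by $(1-\gamma)\sqrt{\lambda_{\min}}$. Collecting terms then yields the two explicit fluctuation contributions and identifies the bias prefactor as $f_1(\alpha,\lambda_{\min},n) = (1-\gamma)^{-1}\lambda_{\min}^{-1/2}\{(\alpha n)^{-1} + \bConst{A}(\sqrt{\alpha a}\,n)^{-1}\}$, a polynomial in $1/\alpha, 1/\lambda_{\min}, n$ multiplying the contraction factor $(1 - \alpha(1-\gamma)\lambda_{\min}/2)^{n/2}$, exactly as claimed.

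The argument is therefore essentially mechanical once \Cref{th:lsa_pr_2nd_moment_main} is in place. The only point requiring genuine care, and hence the main (purely computational) obstacle, is the bookkeeping of the powers of $\lambda_{\min}$ and $1-\gamma$ so that the exponents in each of the three terms match the target $\lambda_{\min}^{-1/2}(1-\gamma)^{-1}$, $\lambda_{\min}^{-1}(1-\gamma)^{-3/2}$, and the bias prefactor; all factors of $1+\gamma$ and other $O(1)$ quantities are freely absorbed into $\lesssim$.
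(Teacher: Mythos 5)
Your proposal is correct and follows essentially the same route as the paper: the paper likewise instantiates \Cref{th:lsa_pr_2nd_moment_main} with $\conststab[2]=1$, $a=(1-\gamma)\lambda_{\min}/2$ from \Cref{prop:stability_check_td}, the bounds on $\bConst{A}$ and $\trace{\noisecov}$ from \Cref{prop:condition_check}, and then divides through by $(1-\gamma)\sqrt{\lambda_{\min}}$ using the lower bound from \Cref{lem:new_conditions_td_check}. Your bookkeeping of the powers of $1-\gamma$ and $\lambda_{\min}$, including the form of $f_1$, matches the paper's \eqref{eq:pr_theta_2nd_moment_bound_appendix}.
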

\begin{corollary}
\label{cor:td_lsa_pr_2nd_moment}
Under the assumptions of \Cref{th:td_lsa_pr_2nd_moment}, to achieve $\PE[\norm{\prtheta_{n} - \thetas}[\covfeat]^2] \leq \varepsilon^{2}$ it is enough to use 
\begin{equation}
\tildeo\biggl(\underbrace{\frac{ \norm{\thetas}[\covfeat]^2+1}{(1-\gamma)^2\lambda_{\min}\varepsilon^2}\biggl(1+\frac{\alpha}{\lambda_{\min}(1-\gamma)}\biggr) + \term{1/\varepsilon}{1} }_{\text{variance term}} +\underbrace{\frac{1}{\alpha \lambda_{\min}(1-\gamma)} \cdot \log{\frac{\norm{\theta_0 - \thetas}}{\varepsilon}}}_{\text{initial error}} \biggr)\eqsp.
\end{equation}
TD(0) updates, where $\term{1/\varepsilon}{1} = \frac{\norm{\thetas}[\covfeat]+1}{\sqrt{\alpha}(1-\gamma)^{3/2}\lambda_{\min}\varepsilon}$.
\end{corollary}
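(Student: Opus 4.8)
The plan is to \emph{invert} the second-moment bound of \Cref{th:td_lsa_pr_2nd_moment}, that is, to read off from \eqref{eq:pr_theta_2nd_moment_bound_complexity} the smallest number of iterations $n$ that guarantees $\PE^{1/2}[\norm{\prtheta_{n} - \thetas}[\covfeat]^2] \lesssim \varepsilon$; this is equivalent, up to absolute constants, to the stated target $\PE[\norm{\prtheta_{n} - \thetas}[\covfeat]^2] \leq \varepsilon^2$. Since the right-hand side of \eqref{eq:pr_theta_2nd_moment_bound_complexity} is a sum of three non-negative contributions, it suffices to force each of them below a fixed multiple of $\varepsilon$ and then take $n$ to be the sum of the three resulting lower bounds.

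First I would handle the leading variance term $\frac{\norm{\thetas}[\covfeat] + 1}{\sqrt{\lambda_{\min} n}\,(1-\gamma)}\bigl(1 + \frac{\sqrt{\alpha}}{\sqrt{(1-\gamma)\lambda_{\min}}}\bigr)$. Squaring the requirement that it be $\lesssim \varepsilon$ and using the elementary estimate $(1+x)^2 \lesssim 1 + x^2$ with $x = \sqrt{\alpha}/\sqrt{(1-\gamma)\lambda_{\min}}$ to collapse the step-size factor, I obtain $n \gtrsim \frac{\norm{\thetas}[\covfeat]^2 + 1}{(1-\gamma)^2\lambda_{\min}\varepsilon^2}\bigl(1 + \frac{\alpha}{\lambda_{\min}(1-\gamma)}\bigr)$, which is precisely the first summand of the variance term in the statement. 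The second-order variance contribution $\frac{\norm{\thetas}[\covfeat] + 1}{\sqrt{\alpha}\,(1-\gamma)^{3/2}\lambda_{\min} n}$ scales as $1/n$, so requiring it to be $\lesssim \varepsilon$ yields $n \gtrsim \frac{\norm{\thetas}[\covfeat] + 1}{\sqrt{\alpha}\,(1-\gamma)^{3/2}\lambda_{\min}\varepsilon} = \term{1/\varepsilon}{1}$, matching the remaining variance contribution.

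The main obstacle is the bias term $f_1(\alpha,\lambda_{\min},n)\bigl(1 - \tfrac{\alpha(1-\gamma)\lambda_{\min}}{2}\bigr)^{n/2}\norm{\theta_0 - \thetas}$, in which $f_1$ is only controlled as a polynomial in $1/\alpha$, $1/\lambda_{\min}$, and $n$. The idea is that geometric decay dominates polynomial growth. Using $1 - x \leq \rme^{-x}$ bounds the exponential factor by $\exp\bigl(-\tfrac{\alpha(1-\gamma)\lambda_{\min}}{4}\,n\bigr)$, so the whole term drops below $\varepsilon$ once $n \gtrsim \frac{1}{\alpha(1-\gamma)\lambda_{\min}}\log\frac{f_1\cdot\norm{\theta_0 - \thetas}}{\varepsilon}$. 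The delicate point is that $f_1$ itself depends on $n$; here I would invoke the standard fact that for a geometric rate $\rho < 1$ and a polynomial $P$ one has $P(n)\rho^{n} \leq \varepsilon$ as soon as $n \gtrsim \frac{1}{1-\rho}\log\frac{\mathrm{poly}}{\varepsilon}$, because the $\log P(n)$ contribution is only $\bigO(\log n)$ and is therefore absorbed into the $\tildeo(\cdot)$ notation, together with the dependence of $f_1$ on $1/\alpha$ and $1/\lambda_{\min}$. This yields the initial-error lower bound $\frac{1}{\alpha\lambda_{\min}(1-\gamma)}\log\frac{\norm{\theta_0 - \thetas}}{\varepsilon}$.

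Finally, I would add the three lower bounds on $n$ and collect all hidden logarithmic factors into the $\tildeo(\cdot)$ notation, which reproduces the claimed sample complexity. The only genuinely nontrivial step is the self-consistent treatment of the $n$-dependent prefactor $f_1(\alpha,\lambda_{\min},n)$ in the bias term; the two variance terms are obtained by elementary algebraic inversion of a decreasing function of $n$.
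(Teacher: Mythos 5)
Your proposal is correct and follows exactly the route the paper intends: the corollary is an immediate inversion of the three-term bound in \Cref{th:td_lsa_pr_2nd_moment}, requiring each summand to be $\lesssim \varepsilon$, using $(1+x)^2 \lesssim 1+x^2$ for the leading term, and absorbing the logarithm of the polynomial prefactor $f_1(\alpha,\lambda_{\min},n)$ of the geometric bias term into the $\tildeo(\cdot)$ notation. The paper provides no separate proof for this corollary precisely because this is the argument; your self-consistent handling of the $n$-dependence inside $f_1$ is the only point requiring care, and you treat it correctly.
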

\paragraph{Comparison to the robust SA approach.} Note that the leading term of the bound in \Cref{th:td_lsa_pr_2nd_moment} includes factors of $1/\lambda_{\min}$. This dependence is generally unavoidable if one aims to obtain the MSE bound for $\PE[\norm{\prtheta_{n} - \thetas}[\covfeat]^2]$ that scales as $1/n$. This is due to the fact that the corresponding asymptotic covariance matrix from the central limit theorem (see e.g., \citep{fort:clt:markov:2015}) for $\sqrt{n}(\prtheta_{n} - \thetas)$ could scale with $\lambda_{\min}^{-1}$. More details on the asymptotically minimax covariance bounds are provided in \Cref{sec:lower_bounds}. In contrast, within the basin of robust stochastic approximation (RSA, \citep{nemirovski2009robust}), a convergence rate for $\PE[\norm{\prtheta_{n} - \thetas}[\covfeat]^2]$ of order $\mathcal{O}(1/\sqrt{n})$ can be derived with the instance-independent choice of step size. Importantly, this rate is not affected by a worst-case factor of $\lambda_{\min}^{-1}$. This result was obtained for the TD algorithm in \citep[Theorem~2]{bhandari2018finite}.

\paragraph{Discussion and comparison.} Optimizing the bound of \Cref{cor:td_lsa_pr_2nd_moment} with respect to the step size $\alpha$ is problematic. Taking the largest possible step size $\alpha \simeq 1-\gamma$ from \eqref{eq:stability_threshold_td} yields the number of steps to reduce deterministic error of order
\[
\tildeo\bigl(\frac{1}{(1-\gamma)^2 \lambda_{\min}} \cdot \log{\frac{\norm{\theta_0 - \thetas}}{\varepsilon}}\bigr)\eqsp,
\]
which was previously reported by \citep{patil2023finite}. However, this choice of step size results in the overall sample complexity in \Cref{cor:td_lsa_pr_2nd_moment} being at least
\[
\tildeo\bigl(\frac{1}{(1-\gamma)^2 \lambda_{\min}} \cdot \log{\frac{\norm{\theta_0 - \thetas}}{\varepsilon}} + \frac{1 + \norm{\thetas}[\covfeat]^2}{(1-\gamma)^2 \lambda_{\min}^2\varepsilon^2}  \bigr)\eqsp.
\]
The $1/\varepsilon^2$ component of this bound is by a factor of $\lambda_{\min}^{-1}$ larger than the one obtained in \citep{li2023sharp}, albeit it is agrees with the bounds of \citep[Theorem~1]{patil2023finite}. The reason is that the latter paper uses instance-independent step size $\alpha \simeq (1-\gamma)$, while \citep{li2023sharp} adjusts step size with (unknown in practice) quantity $\lambda_{\min}$ as $\alpha^{(\text{small})} \simeq (1-\gamma)\lambda_{\min}$. This choice allows to improve the variance component in \Cref{cor:td_lsa_pr_2nd_moment}, but forgetting the bias would require at least
\[
\tildeo\bigl(\frac{1}{(1-\gamma)^2 \lambda_{\min}^2} \cdot \log{\frac{\norm{\theta_0 - \thetas}}{\varepsilon}}\bigr)
\]
iterations of Algorithm~\ref{alg:TD_iid}. Moreover, the remainder term $\term{1/\varepsilon}{1}$ will scale as $(1-\gamma)^{-2}\lambda_{\min}^{-3/2}$. The same phenomenon can be traced in \citep[Theorem~1]{li2023sharp}, albeit the authors do not separate the bias and variance components of the error and assume that the procedure starts at $\thetainit = 0$. This dilemma is resolved in \cite{li_accelerated_TD}, but for a variance-reduced version of TD learning algorithm.
\par 
Instantiating \Cref{th:LSA_PR_error} for TD(0), we can provide the bound on $\PE^{1/p}[ \norm{\prtheta_{n} - \thetas}[\covfeat]^p]$ for $p \geq 2$. For completeness, this result is stated in \Cref{sec:td_missing}. With Markov's inequality applied with $p = \log{(1/\delta)}$, we can translate it into the sample complexity bound. The corresponding deviation bounds for $\norm{\prtheta_{n} - \thetas}[\covfeat]$ are provided in appendix, see \Cref{sec:td_missing}-\Cref{cor:td_pr_pth_moment}.
\begin{theorem}
\label{cor:sample_complexity_td}
Fix $\varepsilon > 0$, $\delta > 0$, assume \Cref{assum:generative_model} and \Cref{assum:feature_design}.
Let $\sequence{\theta}[k][\nset]$ be a sequence of TD(0) updates generated by \eqref{eq:LSA_procedure_TD}.   
Then for any $n \geq 2$, and step size
\[
\alpha \in \bigl(0; \frac{1-\gamma}{128 \log{(n/\delta)}}\bigr]
\]
\!\!to achieve error $\norm{(\prtheta_{n} - \thetas)}[\covfeat] \leq \varepsilon$ with probability at least $1-\delta$ it takes
\begin{equation}
\label{eq:td_hpd_bound}
\tildeo\biggl(\frac{(\norm{\thetas}[\covfeat]^2 + 1)\log{(1/\delta)}}{(1-\gamma)^2 \lambda_{\min} \varepsilon^{2}} \left(1 + \frac{\alpha \log{(1/\delta)}}{(1-\gamma)\lambda_{\min}} \right) + \term{1/\varepsilon,\delta}{2} + \frac{\log{\frac{\norm{\theta_0 - \thetas}}{\varepsilon}}}{\alpha \lambda_{\min}(1-\gamma)} \biggr)
\end{equation}
TD(0) updates, where $\term{1/\varepsilon,\delta}{2} = \frac{(\norm{\thetas}[\covfeat]+1)\log{(1/\delta)}}{\sqrt{\alpha}(1-\gamma)^{3/2}\lambda_{\min}\varepsilon}$.
\end{theorem}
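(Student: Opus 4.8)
The plan is to derive this high-probability bound as a direct consequence of the $p$-moment bound for TD(0) (the TD-instantiation of \Cref{th:LSA_PR_error}, stated in \Cref{sec:td_missing}) combined with Markov's inequality, following the standard $p = \log(1/\delta)$ device. First I would start from the $p$-th moment estimate for $\PE^{1/p}[\norm{\prtheta_{n} - \thetas}[\covfeat]^p]$, which is obtained by instantiating \Cref{th:LSA_PR_error} using the constants verified for TD in \Cref{prop:condition_check} and \Cref{prop:stability_check_td}: namely $\bConst{A} = 2(1+\gamma)$, $\supconsteps = 2(1+\gamma)(\norm{\thetas}+1)$, $\trace{\noisecov} \leq 2(1+\gamma)^2(\norm{\thetas}[\covfeat]^2+1)$, together with $a = (1-\gamma)\lambda_{\min}/2$, $\conststab = 1$, and the instance-independent threshold $\alpha_{p,\infty} = (1-\gamma)/(128p)$. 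I would also use the translation $\norm{\bA(\prtheta_{n} - \thetas)}^2 \geq (1-\gamma)^2 \lambda_{\min}\norm{\prtheta_{n} - \thetas}[\covfeat]^2$ established in \Cref{lem:new_conditions_td_check} to convert the $\bA$-norm bounds into the $\covfeat$-norm quantity of interest.

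Next I would apply Markov's inequality in the form $\PP(\norm{\prtheta_{n} - \thetas}[\covfeat] > \varepsilon) \leq \varepsilon^{-p}\,\PE[\norm{\prtheta_{n} - \thetas}[\covfeat]^p]$ and choose $p = \log(1/\delta)$, so that the right-hand side is at most $\delta$ whenever $\PE^{1/p}[\norm{\prtheta_{n} - \thetas}[\covfeat]^p] \leq \varepsilon/\mathrm{e}$. The constraint $\alpha \in (0, \alpha_{p+\log n,\infty}]$ from \Cref{th:LSA_PR_error} becomes, after substituting $p = \log(1/\delta)$ and using $\alpha_{p,\infty} = (1-\gamma)/(128p)$, exactly the stated condition $\alpha \in (0, (1-\gamma)/(128\log(n/\delta))]$, since $p + \log n = \log(1/\delta) + \log n = \log(n/\delta)$. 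With this choice, the factor $p^{1/2}$ in front of the leading $\sqrt{\trace{\noisecov}}/\sqrt{n}$ term becomes $\sqrt{\log(1/\delta)}$, so after squaring to solve for $n$ one obtains the $\log(1/\delta)$ dependence in the variance term; the quadratic correction factor $\conststab[p+\log n]\sqrt{\alpha p}\bConst{A}/\sqrt{a}$ produces the $\alpha\log(1/\delta)/((1-\gamma)\lambda_{\min})$ multiplicative term, and the remainder $\term{1/\varepsilon,\delta}{2}$ tracks the second-order term of \eqref{eq:p_th_moment_LSA_bound}.

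The final step is bookkeeping: I would set the right-hand side of the instantiated $p$-moment bound equal to $\varepsilon$, solve for the required number of iterations $n$ for each of the three groups of terms (leading variance, second-order remainder, and exponentially decaying bias), and read off the dominant requirements. The variance term gives the $n \gtrsim (\norm{\thetas}[\covfeat]^2+1)\log(1/\delta)/((1-\gamma)^2\lambda_{\min}\varepsilon^2)$ contribution with its correction factor; the bias term, involving $(1 - \alpha(1-\gamma)\lambda_{\min}/2)^{n/2}\norm{\theta_0 - \thetas}$, requires $n \gtrsim (\alpha\lambda_{\min}(1-\gamma))^{-1}\log(\norm{\theta_0-\thetas}/\varepsilon)$ to drive the geometric factor below $\varepsilon$ (absorbing the polynomial prefactor into the $\tildeo$ notation); and the remainder contributes $\term{1/\varepsilon,\delta}{2}$. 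Taking the sum of these requirements yields \eqref{eq:td_hpd_bound}.

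I expect the main obstacle to be purely organizational rather than conceptual: carefully verifying that substituting $p = \log(1/\delta)$ into \Cref{th:LSA_PR_error} reproduces each term with the correct powers of $\log(1/\delta)$, $1-\gamma$, and $\lambda_{\min}$, and confirming that the constant $\conststab[p+\log n] = 1$ from \Cref{prop:stability_check_td} indeed makes all the $p$-dependent stability constants vanish (so that the only residual $p$-dependence is through the explicit $p^{1/2}$ and $p$ factors in \eqref{eq:p_th_moment_LSA_bound}). This dimension-free and $p$-independent value of $\conststab$ is precisely what allows the clean $\log(1/\delta)$ scaling, and handling the logarithmic factors hidden inside the $\tildeo$ notation — in particular the $\log(n/\delta)$ appearing in the step-size constraint — is where one must be most careful to avoid circularity when solving for $n$.
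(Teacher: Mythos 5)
Your proposal follows essentially the same route as the paper: the authors likewise instantiate \Cref{th:LSA_PR_error} for TD(0) using the constants from \Cref{prop:condition_check}, \Cref{prop:stability_check_td}, and the norm conversion of \Cref{lem:new_conditions_td_check} (yielding \Cref{th:td_pr_pth_moment}), then apply Markov's inequality with $p = \log(1/\delta)$ to get the deviation bound of \Cref{cor:td_pr_pth_moment}, and finally solve for $n$ term by term. Your bookkeeping of the step-size constraint, the role of $\conststab = 1$, and the origin of each term in \eqref{eq:td_hpd_bound} all match the paper's argument.
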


\paragraph{Discussion and comparison.} Note that in \Cref{cor:sample_complexity_td} the symbol $\tildeo$ hides logarithmic dependencies in $\lambda_{\min}, 1-\gamma$, and $n$, but not in $1/\delta$. Again the direct optimization of the bound \Cref{cor:sample_complexity_td} w.r.t. $\alpha$ yield to the same dilemma as in case of $2$-nd moment. The stochastic part of the complexity bound \eqref{eq:td_hpd_bound} scales inversely proportional to $\lambda_{\min}^2$, which is worse than the scaling of the deterministic component of the error. At the same time, choosing the smaller step size
\begin{equation}
\label{eq:step_size_bound_instance-dependent}
\alpha = \frac{(1-\gamma) \lambda_{\min}(\covfeat)}{128 (p + \log{n})}\eqsp,
\end{equation}
we retrieve the leading variance term of deviation bound \citep[Theorem~1]{li2023sharp}, while improving the second-order term in $\lambda_{\min}$. Indeed, \citep[Theorem~1]{li2023sharp} yields the high-probability bound of order 

\[
\tildeo\biggl(\frac{(\norm{\thetas}[\covfeat]^2 + 1)\log{(d/\delta)}}{(1-\gamma)^2 \lambda_{\min} \varepsilon^{2}} + \frac{(1+\norm{\thetas}[\covfeat])\log{(nd/\delta)}}{(1-\gamma)^2\lambda_{\min}^2 \varepsilon}\biggr)
\]
in order to achieve {\small$\norm{\prtheta_{n} - \thetas}[\covfeat] \leq \varepsilon$} with probability at least {\small$1 - \delta$}. This result is achieved for the step size $\alpha$ which scales similarly to \eqref{eq:step_size_bound_instance-dependent}. Also, compared to \citep{li2023sharp}, we obtain a clear separation between the deterministic and stochastic parts of the error, and remove the explicit dependence upon the feature dimension $d$. Note, however, that the dependence upon $d$ is hidden implicitly inside $\lambda_{\min}$.

\section{On optimality of TD(0) for i.i.d. sampling scheme}
\label{sec:lower_bounds}
In this section we present a version of \Cref{th:td_lsa_pr_2nd_moment} with a leading variance term consistent with the minimax lower bound due to \citep[Proposition~1]{li_accelerated_TD}.  We first write the TD(0) noise covariance matrix 
\[
\noisecovtd = \PE[ \bigl((\phi(s_k) - \gamma \phi(s'_k))^{\top}\thetas - r_k\bigr)^{2} \phi(s_k) \phi(s_k)^{\top}]\eqsp,
\]
which corresponds to the general LSA noise covariance matrix $\noisecov$ defined in \eqref{eq:def_noise_cov}. We also define the transformed covariance matrix 
\[
\noisecovopt = \covfeat^{1/2} \bA^{-1} \noisecovtd \bA^{-T} \covfeat^{1/2}\eqsp,
\]
which corresponds to the covariance of modified noise variables $\covfeat^{1/2} \bA^{-1}\funnoisew$. Now let us introduce the counterpart of \Cref{th:td_lsa_pr_2nd_moment} with the modified leading (w.r.t. the sample size $n$) term. 
\begin{theorem}
\label{th:leading_term_variance}
Assume \Cref{assum:generative_model} and \Cref{assum:feature_design}.
Let $\sequence{\theta}[k][\nset]$ be a sequence of TD(0) updates generated by \eqref{eq:LSA_procedure_TD}.   
Then for any $p \geq 2$, $n \geq 2$, $\alpha \in \bigl(0; \frac{1-\gamma}{256}\bigr]$, it holds that 
\begin{equation}
\label{eq:modified_pr_upper_bound}
\begin{split}
\PE^{1/2}[\norm{\prtheta_{n} - \thetas}[\covfeat]^2] &\lesssim \frac{\sqrt{\trace{\noisecovopt}}}{n^{1/2}} + \underbrace{\frac{1+\norm{\thetas}[\covfeat]}{(1-\gamma)^{3/2} \lambda_{\min} n^{1/2}} \left(\frac{1}{\sqrt{\alpha n}} + \sqrt{\alpha} \right)}_{\term{n,\lambda_{\min}}{3}} \\
&+ f_{2}(\alpha, \lambda_{\min}, n) \bigl(1- \alpha (1-\gamma) \lambda_{\min}\bigr)^{n/2} \norm{\theta_0 - \thetas} \eqsp,
\end{split}
\end{equation}
where $f_{2}(\alpha, \lambda_{\min}, n)$ is a polynomial in $1/\alpha, 1/\lambda_{\min}, n$ specified in \Cref{sec:app:lower_bounds}-\eqref{eq:pr_theta_2_moment_bound_new}.
\end{theorem}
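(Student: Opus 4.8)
The plan is to refine the proof of \Cref{th:td_lsa_pr_2nd_moment} by \emph{not} passing through the lossy bound $\norm{\bA(\prtheta_{n} - \thetas)}^2 \geq (1-\gamma)^2\lambda_{\min}\norm{\prtheta_{n} - \thetas}[\covfeat]^2$, since it is exactly this conversion that injects a spurious factor $1/\lambda_{\min}$ into the leading term. Instead, I would start from the summation-by-parts identity \eqref{eq:pr_err_decompose_main} and left-multiply it by $\covfeat^{1/2}\bA^{-1}$, using $\covfeat^{1/2}(\prtheta_{n} - \thetas) = \covfeat^{1/2}\bA^{-1}\bA(\prtheta_{n} - \thetas)$, so that $\norm{\prtheta_{n} - \thetas}[\covfeat]$ is the Euclidean norm of $\covfeat^{1/2}\bA^{-1}\{\frac{2}{\alpha n}(\theta_{n/2} - \theta_n) - \frac{2}{n}\sum_{t=n/2}^{n-1} e(\theta_t, Z_{t+1})\}$. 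Splitting $e(\theta_t, Z_{t+1}) = \funcnoise{Z_{t+1}} + \zmfuncA{Z_{t+1}}(\theta_t - \thetas)$ produces three contributions, and Minkowski's inequality in $L^2$ reduces matters to bounding each of them separately: the boundary term $T_0 = \frac{2}{\alpha n}\covfeat^{1/2}\bA^{-1}(\theta_{n/2} - \theta_n)$, the pure-noise martingale $M_1 = \frac{2}{n}\covfeat^{1/2}\bA^{-1}\sum_{t}\funcnoise{Z_{t+1}}$, and the error-driven martingale $M_2 = \frac{2}{n}\covfeat^{1/2}\bA^{-1}\sum_{t}\zmfuncA{Z_{t+1}}(\theta_t - \thetas)$.

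The leading term is $M_1$. Since the $\funcnoise{Z_{t+1}}$ are i.i.d., centered, and independent of $\filtr_t$, the summands are orthogonal in $L^2$, so $\PE[\norm{M_1}^2] = \frac{4}{n^2}\sum_{t=n/2}^{n-1}\PE[\norm{\covfeat^{1/2}\bA^{-1}\funcnoise{Z_{t+1}}}^2]$. Each summand equals $\trace{\covfeat^{1/2}\bA^{-1}\noisecovtd\bA^{-\top}\covfeat^{1/2}} = \trace{\noisecovopt}$ by the definitions of $\noisecovtd$ and $\noisecovopt$ preceding the statement, and with exactly $n/2$ terms this yields $\PE^{1/2}[\norm{M_1}^2] = \sqrt{2\trace{\noisecovopt}/n}$, precisely the claimed optimal leading term. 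The crucial point is that retaining only the noise part $\funcnoise{Z_{t+1}}$ of $e(\theta_t, Z_{t+1})$ recovers the exact asymptotic covariance, so no $1/\lambda_{\min}$ slack is introduced here.

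For the two remainder terms I would first record the operator bound $\norm{\covfeat^{1/2}\bA^{-1}} \leq 1/((1-\gamma)\sqrt{\lambda_{\min}})$, which follows from the contraction property $x^{\top}\bA x \geq (1-\gamma)\norm{x}[\covfeat]^2$: setting $v = \bA^{-1}u$ gives $(1-\gamma)\norm{v}[\covfeat]^2 \leq v^{\top}u \leq \norm{v}\norm{u}$ together with $\norm{v} \leq \norm{v}[\covfeat]/\sqrt{\lambda_{\min}}$; this is the same estimate underlying \Cref{lem:new_conditions_td_check}. For $M_2$, the martingale-difference property ($\PE[\zmfuncA{Z_{t+1}}\mid\filtr_t]=0$ with $\theta_t$ being $\filtr_t$-measurable) again orthogonalizes the sum, so $\PE[\norm{M_2}^2] \lesssim \frac{1}{n}\norm{\covfeat^{1/2}\bA^{-1}}^2\bConst{A}^2\sup_{t}\PE[\norm{\theta_t - \thetas}^2]$; for $T_0$ I would bound it by $\norm{\covfeat^{1/2}\bA^{-1}}$ times $\PE^{1/2}[\norm{\theta_{n/2} - \thetas}^2] + \PE^{1/2}[\norm{\theta_n - \thetas}^2]$. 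Both then require the last-iterate second-moment bound $\PE^{1/2}[\norm{\theta_t - \thetas}^2] \lesssim \conststab[2](1-\alpha a)^t\norm{\theta_0 - \thetas} + \conststab[2]\sqrt{\alpha\trace{\noisecov}/a}$, obtained by applying \Cref{th:LSA_last_iterate_refined} to the split \eqref{eq:key_decomspotion_lsa} and instantiated with the TD constants $a = (1-\gamma)\lambda_{\min}/2$, $\conststab[2]=1$, $\bConst{A} = 2(1+\gamma)$ and $\trace{\noisecov} \lesssim (1+\gamma)^2(\norm{\thetas}[\covfeat]^2+1)$ from \Cref{prop:condition_check,prop:stability_check_td}. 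The stationary part of $T_0$ produces the $\frac{1}{\sqrt{\alpha n}}$-piece of $\term{n,\lambda_{\min}}{3}$ and that of $M_2$ the $\sqrt{\alpha}$-piece, while the transient parts collect into the geometric bias factor $f_{2}(\alpha,\lambda_{\min},n)(1-\alpha(1-\gamma)\lambda_{\min})^{n/2}\norm{\theta_0 - \thetas}$, governed by the exponential-stability rate $(1-\alpha a)^{\Theta(n)}$.

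The main obstacle is this clean separation in the second and third steps: one must discard the $\zmfuncA{Z_{t+1}}(\theta_t - \thetas)$ contribution from the leading martingale \emph{without} losing the exact covariance identity for $M_1$, and then certify that $M_2$ is genuinely of lower order, namely smaller by a factor of order $\sqrt{\alpha}\,\bConst{A}/\sqrt{\lambda_{\min}(1-\gamma)}$, which demotes it from the $n^{-1/2}$ leading scale down into the remainder $\term{n,\lambda_{\min}}{3}$. This is exactly where the minimax-optimal covariance $\trace{\noisecovopt}$ emerges in place of the crude $\trace{\noisecov}/\lambda_{\min}$, and handling the two correlated martingales $M_1,M_2$ through Minkowski's inequality rather than attempting a single variance computation is the technical heart of the argument.
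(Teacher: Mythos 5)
Your proposal is correct and follows essentially the same route as the paper: the paper likewise left-multiplies the summation-by-parts identity by $\covfeat^{1/2}\bA^{-1}$ (its \eqref{eq:sum_parts_refined} with $\G=\covfeat$), extracts the exact covariance $\trace{\noisecovopt}/n$ from the pure-noise martingale, and controls the boundary and error-driven martingale terms via the last-iterate bound of \Cref{th:LSA_last_iterate_refined} together with the operator estimate $\covfeat^{1/2}\bA^{-\top}\covfeat\bA^{-1}\covfeat^{1/2}\preceq(1-\gamma)^{-2}\Id$ from \Cref{lem:new_conditions_td_check}. The only difference is presentational: the paper abstracts these ingredients into the general assumption \Cref{assum:g_matr} (with constants $\matrbound,\randmbound,\tracebound$) and a general LSA theorem before instantiating for TD(0), while you argue directly, and your cruder bound $\normop{\covfeat^{1/2}\bA^{-1}}\bConst{A}$ on the $M_2$ increments gives the same order as the paper's moment condition $\PE[\funcAw^{\top}\covfeat^{-1}\funcAw]\preceq\randmbound^2\covfeat$.
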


The proof is postponed to \Cref{sec:app:lower_bounds}, along with the analogous $p$-th moment bound. We highlight the fact that the leading term of \eqref{eq:modified_pr_upper_bound} scales with the quantity $\trace{\noisecovopt}$ corresponding to the instance optimal variance given in \citep[Section~2]{li_accelerated_TD} and \citep{mou2020linear}. At the same time, with simple algebraic manipulations one can prove an upper bound 
\[
\textstyle
\trace{\noisecovopt} \leq \frac{\norm{\thetas}[\covfeat]^2 + 1}{(1-\gamma)^2\lambda_{\min}}\eqsp,
\]
thus recovering the result obtained in \Cref{th:td_lsa_pr_2nd_moment} before. However, the bound of \eqref{eq:modified_pr_upper_bound} contains also a term $\term{n,\lambda_{\min}}{3}$, which scales directly with $\lambda_{\min}^{-1}$. Moreover, even setting $\alpha$ as $n^{-\varkappa}$, $\varkappa \in (0,1)$ would require to use large sample size $n$ in order that the optimal noise term $(\trace{\noisecovopt}/n)^{1/2}$ starts to dominate. This is on par with empirical evaluation from \cite{khamaru2021temporal}. Now we reformulate \Cref{th:leading_term_variance} as a sample complexity bound.

\begin{corollary}
\label{cor:lower_bound_seond_moment}
Under the assumptions of \Cref{th:leading_term_variance}, to achieve the weighted MSE $\PE[\norm{\prtheta_{n} - \thetas}[\covfeat]^2] \leq \varepsilon^{2}$ requires 
\begin{equation}
\tildeo\bigl(\underbrace{\frac{\log{\frac{\norm{\theta_0 - \thetas}}{\varepsilon}}}{\alpha \lambda_{\min}(1-\gamma)}}_{\text{initial error}} + \underbrace{\frac{\trace{\noisecovopt}}{\varepsilon^{2}} + \frac{\alpha (1 + \norm{\thetas}[\covfeat]^2)}{(1-\gamma)^3 \lambda_{\min}^2\varepsilon^2} + \term{1/\varepsilon}{4}}_{\text{variance term}}\bigr)\eqsp,
\end{equation}
TD(0) updates, where $\term{1/\varepsilon}{4}$ scales linearly with $1/\varepsilon$. 
\end{corollary}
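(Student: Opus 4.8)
The plan is to \emph{invert} the root-mean-square bound \eqref{eq:modified_pr_upper_bound} of \Cref{th:leading_term_variance}. Since $\PE[\norm{\prtheta_{n} - \thetas}[\covfeat]^2] \leq \varepsilon^2$ is equivalent to $\PE^{1/2}[\norm{\prtheta_{n} - \thetas}[\covfeat]^2] \leq \varepsilon$, I would require each of the three summands on the right-hand side of \eqref{eq:modified_pr_upper_bound} to be at most a fixed fraction of $\varepsilon$, solve each of the resulting inequalities for the number of iterations $n$, and then combine the thresholds. The key elementary observation is that asking $n$ to exceed the \emph{maximum} of finitely many thresholds is equivalent, up to an absolute constant, to asking $n$ to exceed their \emph{sum}; hence the final complexity is simply the sum of the per-term thresholds, exactly as displayed.

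For the two ``statistical'' terms this is a direct computation. The leading noise term $\sqrt{\trace{\noisecovopt}}/n^{1/2} \lesssim \varepsilon$ gives $n \gtrsim \trace{\noisecovopt}/\varepsilon^2$, yielding the instance-optimal variance contribution. For the second-order term $\term{n,\lambda_{\min}}{3}$ I would treat its two pieces separately: the piece carrying the factor $1/\sqrt{\alpha n}$ decays like $n^{-1}$, so setting it below $\varepsilon$ produces a threshold that is \emph{linear} in $1/\varepsilon$, which is precisely $\term{1/\varepsilon}{4}$; the piece carrying $\sqrt{\alpha}$ decays like $n^{-1/2}$, so setting it below $\varepsilon$ produces the contribution $\alpha(1 + \norm{\thetas}[\covfeat]^2)/\bigl((1-\gamma)^3 \lambda_{\min}^2 \varepsilon^2\bigr)$, after using $(1 + \norm{\thetas}[\covfeat])^2 \lesssim 1 + \norm{\thetas}[\covfeat]^2$.

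For the initial-error term I would bound the geometric factor by $\bigl(1 - \alpha(1-\gamma)\lambda_{\min}\bigr)^{n/2} \leq \exp\bigl(-\alpha(1-\gamma)\lambda_{\min} n / 2\bigr)$ and impose $f_2(\alpha,\lambda_{\min},n)\,\exp\bigl(-\alpha(1-\gamma)\lambda_{\min} n/2\bigr)\,\norm{\theta_0 - \thetas} \leq \varepsilon$. \textbf{The main (and only genuinely delicate) obstacle} is controlling the polynomial prefactor $f_2$: because exponential decay in $n$ dominates any fixed polynomial in $n, 1/\alpha, 1/\lambda_{\min}$, taking logarithms yields $n \gtrsim \frac{1}{\alpha(1-\gamma)\lambda_{\min}} \log(\cdots)$, where the argument of the logarithm collects $\norm{\theta_0 - \thetas}/\varepsilon$ together with polynomial factors in $1/\alpha$ and $1/\lambda_{\min}$. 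All of these extra factors are logarithmic in the relevant parameters and are therefore absorbed into the $\tildeo$ notation, leaving the displayed initial-error threshold $\frac{1}{\alpha\lambda_{\min}(1-\gamma)}\log\frac{\norm{\theta_0 - \thetas}}{\varepsilon}$. Verifying that this absorption is legitimate amounts to checking that $f_2$ is indeed polynomial, of controlled degree, in its arguments, which follows from its explicit form recorded in the proof of \Cref{th:leading_term_variance}. Summing the three thresholds gives the claimed complexity.
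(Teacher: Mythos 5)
Your proposal is correct and follows essentially the same route as the paper, which treats this corollary as an immediate inversion of the bound in \Cref{th:leading_term_variance}: each summand is set below a constant fraction of $\varepsilon$, solved for $n$, and the thresholds are summed, with the polynomial prefactor $f_2$ absorbed into the $\tildeo$ via the logarithm exactly as you describe. No gaps.
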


\section{TD learning under Markov noise}
\label{sec:markov_td}
\!Here we present an extension of the results of \Cref{sec:td_learning} under Markovian sampling. The corresponding results generalize the high probability bounds of \Cref{cor:td_pr_pth_moment} and \Cref{cor:sample_complexity_td}. We start with the following assumption:

\begin{assumTD}
\label{assum:generative_model_markov}
Training tuples $(s_k,a_k,s_{k+1})$ are generated sequentially following the generative model $a_{k} \sim \pi(\cdot|s_{k})$, $s_{k+1} \sim \PMDP(\cdot|s_k,a_k)$.
\end{assumTD}

Note that the assumption \Cref{assum:generative_model_markov} yields that the sequence $\{s_k\}_{k \in \nset}$ is a Markov chain with the Markov kernel $\PMDP_{\pi}(\cdot|s)$ defined in \eqref{eq:transition_matrix_P_pi}, that corresponds to a classical problem of on-policy evaluation. However, since we are using a single chain for policy evaluation, our subsequent analysis requires to impose ergodicity constraints on $\PMDP_{\pi}(\cdot|s)$.
\begin{assumTD}
\label{assum:P_pi_ergodicity}
The Markov kernel $\PMDP_{\pi}$ admits a unique invariant distribution $\mu$ and is uniformly geometrically ergodic, that is, there exist $\taumix \in \nset$, such that for any $k \in \nset$, it holds that
\begin{equation}
\label{eq:drift-condition-main}
 \sup_{s,s' \in \S} (1/2) \norm{\PMDP_{\pi}^{k}(\cdot|s) - \PMDP_{\pi}^{k}(\cdot|s')}[\mathsf{TV}] \leq  (1/4)^{\lfloor k / \taumix \rfloor}\eqsp.
\end{equation}
\end{assumTD}
We note that \Cref{assum:P_pi_ergodicity} is widely used in theoretical RL and stochastic optimization, see, e.g. \citep{bhandari2018finite,nagaraj2020least,dorfman2022adapting,patil2023finite}. 
The parameter $\taumix$  is the \emph{mixing time}, see e.g. \citep{paulin_concentration_spectral}. The constant $1/4$ in \eqref{eq:drift-condition-main} can be changed to arbitrary constant in $\coint{0,1}$ with proper rescaling of $\taumix$.
\begin{algorithm}[t!]
\caption{Temporal difference learning TD(0) with data drop}
\label{alg:TD_skip}
\SetKwInOut{Input}{Input}
\SetKwInOut{Output}{Output}
\SetKwBlock{Loop}{Loop}{end}
\SetKwBlock{Initialize}{Initialize}{end}
\SetKwFunction{Wait}{Wait}
\SetKwFunction{ClientMain}{ClientLocalTraining}
\SetAlgoLined
\Input{features $\varphi(\cdot): \S \to \rset^{d}$, step size $\alpha$, number of iterations $n$, burn-in size $n_0$, behavioral policy $\pi$, time window $q \in \nsets$}
Compute number of blocks $m = \lfloor n / q \rfloor$ \;\\
\For{$k=1,\ldots,n$}{
Receive tuple $(s_k, a_k, s_{k+1})$ following \Cref{assum:P_pi_ergodicity} \;\\
\uIf{$k = q j, j \in \nset$}{
    Compute update 
    $
    \tilde{\theta}_{j} = \tilde{\theta}_{j-1} - \alpha(\funcAw_{k} \tilde{\theta}_{j-1} - \funcbw_{k})
    $
    based on $\funcAw_{k}$, $\funcbw_{k}$ from \eqref{eq:matr_A_def}\;
  }
\Else{
    skip current learning tuple\;
}
}
\Output{tail-averaged estimate 
$
\bar{\theta}_{n} = (2/m) \sum_{k=m/2+1}^{m} \tilde{\theta}_k
$ \;\\ 
value function estimate $V_{\bar{\theta}_n}^{\pi}(s) =  \varphi^\top(s)\, \bar{\theta}_n$ \;
}
\end{algorithm}

The algorithm that we analyze in the Markovian setting is not a standard version of TD(0), but its modification with data-drop. It is summarized in \Cref{alg:TD_skip} and has additional parameter $q \in \nset$. We take every $q$-th tuple from the trajectory $\{(s_{k},a_{k},s_{k+1})\}_{k \in \nset}$. Parameter $q$ here needs to be properly adjusted with $\taumix$, see \Cref{cor:td_pr_deviation_markov} below. The data-drop approach was previously explored in \citep{nagaraj2020least} for the general least-squares problems. The authors of \citep{nagaraj2020least} further established that this strategy is optimal in a sense that required number of samples scales linearly with  \(\operatorname{t_{mix}}\), and this dependence is worst-case optimal. In the context of TD(0) algorithm the same approach was suggested and studied by \citep{patil2023finite}, with the restriction to finite state space setting. Now we are ready to state and prove the counterpart of \Cref{cor:sample_complexity_td} for the case of TD(0) updates generated by Algorithm~\ref{alg:TD_skip}.

\begin{theorem}
\label{cor:td_pr_deviation_markov}
Assume \Cref{assum:feature_design}, \Cref{assum:generative_model_markov}, and \Cref{assum:P_pi_ergodicity}. Fix $\delta \in (0,1/3)$ and let $\bar{\theta}_{n}$ be a tail-averaged estimate generated by Algorithm~\ref{alg:TD_skip} run with parameters
\[
\alpha = \frac{1-\gamma}{128\log{(n/\delta)}}\eqsp, \quad q = \left\lceil \frac{\taumix \log{(n/\delta)}}{\log{4}}\right\rceil\eqsp,
\]
given that the sample size $n$ satisfies $n \geq \frac{\log{(1/\delta)}}{(1-\gamma)^2} \vee \frac{2 \taumix \log(4/\delta)}{\log{4}}$. Then it holds with probability at least $1 - 3 \delta$ that 
\begin{multline}
\label{eq:deviation_bound_pth_moment_markov}
\norm{\prtheta_{n} - \thetas}[\covfeat] 
\lesssim \frac{(\norm{\thetas}[\covfeat] + 1) \taumix^{1/2} \log{(n/\delta)}}{n^{1/2} (1-\gamma) \lambda_{\min}} \\
+ \exp\left\{-\frac{(1-\gamma)^2 \lambda_{\min}n}{128 \taumix  \log^2(n/\delta)}\right\} \frac{\norm{\theta_0 - \thetas} \taumix \log^{2}{(n/\delta)}}{(1-\gamma)^2 \lambda_{\min} n}\eqsp.
\end{multline}
\end{theorem}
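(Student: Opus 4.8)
The plan is to reduce the Markovian analysis to the i.i.d.\ setting already resolved in \Cref{cor:sample_complexity_td} and its $p$-th moment refinement \Cref{cor:td_pr_pth_moment}, exploiting the data-drop mechanism of \Cref{alg:TD_skip}. Writing $Z_{qj} = (s_{qj}, a_{qj}, s_{qj+1})$ for the $j$-th retained tuple and $m = \lfloor n/q \rfloor$, the iterates $\tilde{\theta}_j$ satisfy the LSA recursion \eqref{eq:LSA_procedure_TD} driven by $\{Z_{qj}\}_{j=1}^m$. Since $q = \lceil \taumix \log(n/\delta)/\log 4 \rceil$, the uniform geometric ergodicity \Cref{assum:P_pi_ergodicity} guarantees that consecutive retained states are separated by enough mixing time to be nearly independent; the objective is to couple $\{Z_{qj}\}$ with an i.i.d.\ sequence $\{\tilde Z_j\}$ drawn from the stationary product law $s \sim \mu$, $a \sim \pi(\cdot|s)$, $s' \sim \PMDP(\cdot|s,a)$, and then transfer the i.i.d.\ deviation bound.

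First I would build the coupling block by block. Conditionally on the state realized at the previous retained index, the law of $s_{qj}$ lies within total variation $(1/4)^{\lfloor q/\taumix \rfloor} \leq \delta/n$ of $\mu$ by \eqref{eq:drift-condition-main} and the choice of $q$. Applying the maximal coupling at each of the $m$ blocks and chaining, the probability that some retained state fails to match an independent draw from $\mu$ is at most $m \cdot (\delta/n) \leq \delta$. On the complementary good event $\mathcal{E}_1$ (of probability at least $1-\delta$), the two driving sequences coincide, so $\prtheta_n$ equals the tail-averaged estimate produced by $m$ genuine i.i.d.\ samples, whence the bound may be stated in $\norm{\cdot}[\covfeat]$ via the conversion $\norm{\bA(\prtheta_n-\thetas)}^2 \geq (1-\gamma)^2\lambda_{\min}\norm{\prtheta_n-\thetas}[\covfeat]^2$.

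Next, on $\mathcal{E}_1$ I would invoke the i.i.d.\ high-probability bound \Cref{cor:td_pr_pth_moment} with $m$ effective samples, the given step size $\alpha = (1-\gamma)/(128\log(n/\delta))$, and confidence level $\delta$; \Cref{prop:condition_check} and \Cref{prop:stability_check_td} supply the constants $\bConst{A}$, $\supconsteps$, $a = (1-\gamma)\lambda_{\min}/2$, $\conststab = 1$, and $\alpha_{p,\infty} = (1-\gamma)/(128p)$, with $p = \log(1/\delta)$ validating the admissible step-size range. This bound holds on an event $\mathcal{E}_2$ of probability at least $1-2\delta$, since its derivation concentrates the leading martingale fluctuation and the last-iterate remainder separately. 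Substituting $m \asymp n\log 4 / (\taumix \log(n/\delta))$ into the leading variance term reproduces the factor $\taumix^{1/2}\log(n/\delta)/(n^{1/2}(1-\gamma)\lambda_{\min})$ (the extra $\log(n/\delta)$ arising from the combination of $\sqrt{1/m}$, the $\sqrt{\log(1/\delta)}$ high-probability factor, and the step-size-dependent multiplier $1+\sqrt{\alpha}/\sqrt{(1-\gamma)\lambda_{\min}}$), while the bias term $(1-\alpha a)^{m/2}\norm{\theta_0 - \thetas}$ becomes the stated exponential $\exp\{-(1-\gamma)^2\lambda_{\min} n/(128\taumix \log^2(n/\delta))\}$ after inserting $\alpha a = \alpha(1-\gamma)\lambda_{\min}/2$ and $m/2$. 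The hypothesis $n \geq \log(1/\delta)/(1-\gamma)^2 \vee 2\taumix\log(4/\delta)/\log 4$ ensures $m \geq 2$ and that the sample count exceeds the minimal threshold so the leading term dominates. A union bound over $\mathcal{E}_1$ and $\mathcal{E}_2$ yields probability at least $1-3\delta$, and $\delta < 1/3$ keeps this positive.

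The main obstacle is making the coupling rigorous so that the total-variation cost telescopes additively to $\delta$ rather than compounding across the $m$ blocks. The retained states are not sampled afresh but propagated through the kernel $\PMDP_\pi$, so each block's maximal coupling must be conditioned on the previously realized state, and one must verify that the per-block error $(1/4)^{\lfloor q/\taumix\rfloor}$ holds \emph{uniformly} over all conditioning states. This is exactly what the uniform geometric ergodicity in \eqref{eq:drift-condition-main} provides, and it is the reason the data-drop gap $q$ must scale with $\log(n/\delta)$ rather than merely with $\taumix$; this logarithmic inflation of $q$ is also what transfers the $\taumix$-dependence and the extra $\log(n/\delta)$ into the leading term.
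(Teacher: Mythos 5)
Your high-level strategy (couple the retained tuples of \Cref{alg:TD_skip} with an i.i.d.\ stationary sequence, then invoke \Cref{cor:td_pr_pth_moment} with effective sample size $m \asymp n/q$) is exactly the route the paper takes, but your proof has a genuine gap: you never deal with the fact that the chain does not start in stationarity. The i.i.d.\ result you want to transfer requires $s \sim \mu$ exactly (\Cref{assum:generative_model}), and Berbee-type couplings only deliver blocks that are \emph{identically} distributed with the stationary law if the underlying segment of the chain is itself stationary. The paper handles this with a separate preliminary step: a maximal exact coupling of $\PP_{\nu}$ with $\PP_{\mu}$ over the first $n/2$ observations, whose failure event $\{T > n/2\}$ has probability at most $(1/4)^{\lfloor n/(2\taumix)\rfloor} \leq \delta$ precisely under the hypothesis $n \geq 2\taumix\log(4/\delta)/\log 4$. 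This is the origin of the third $\delta$ in $1-3\delta$ and the real purpose of that sample-size condition, which you instead misattribute to ensuring $m \geq 2$. Relatedly, your accounting assigns failure probability $2\delta$ to the i.i.d.\ deviation bound "since its derivation concentrates the martingale fluctuation and the remainder separately" --- this is not how \Cref{cor:td_pr_pth_moment} is obtained (it is a single application of Markov's inequality to the $p$-th moment with $p = \log(1/\delta)$, costing one $\delta$), so your budget does not actually close to $3\delta$ without the missing burn-in event.

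The second, smaller issue is the coupling construction itself. Chaining per-block maximal couplings conditioned on the previously realized state gives each $\tilde s_{qj}$ the right marginal, but does not by itself make the coupled draws mutually independent, nor independent of the past of the chain --- which is what you need for the driving noise of the LSA recursion to be genuinely i.i.d. You flag this as "the main obstacle" but resolve it only by appeal to uniform ergodicity, which controls total variation, not independence. The paper resolves it with Berbee's lemma in the form of \Cref{lem:Dedecker_upd}, which produces blocks $V_i^{\star}$ with $\PP(V_i \neq V_i^{\star}) \leq \beta(q) \leq (1/4)^{\lfloor q/\taumix\rfloor}$ such that the even-indexed and odd-indexed blocks are each i.i.d.; this parity restriction is why the paper subsamples at spacing $2q$ and works with $n/2 = 2qm + k$, so the effective sample size is $n/(4q)$ rather than $n/q$ (same order, but your $m = \lfloor n/q\rfloor$ is not what the argument actually delivers). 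Your computation of the leading variance and bias terms after substituting $m$, $\alpha$, and $a = (1-\gamma)\lambda_{\min}/2$ is otherwise consistent with the statement.
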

The proof is postponed to \Cref{sec:rosenthal_markov} and is based on Berbee's coupling lemma, see \citep{berbee1979}. Note that the result of \Cref{cor:td_pr_deviation_markov} is slightly suboptimal compared to \Cref{cor:td_pr_pth_moment}. Indeed, the leading term with respect to $n$ of the bound \eqref{eq:deviation_bound_pth_moment_markov} scales with $\log{(1/\delta)}$ instead of $\sqrt{\log{(1/\delta)}}$ in the \iid\ counterpart. That is, the leading term of \eqref{eq:deviation_bound_pth_moment_markov} exhibits subexponential behaviour instead of sub-Gaussian. This behaviour is a result of using Berbee's coupling lemma.

\paragraph{Discussion} The practical application of data-drop approach is limited, since the gap size $q$ in \Cref{alg:TD_skip} should scale with unknown in practice parameter \(\operatorname{t_{mix}}\). It is possible to analyze under \Cref{assum:generative_model_markov} and \Cref{assum:P_pi_ergodicity} a direct counterpart of \Cref{alg:TD_iid} without data-drop. The key difficulty of such analysis is to verify an exponential stability assumption \Cref{assum:exp_stability}. This is done, for example, in \citep[Proposition~7]{durmus2022finite} for the general LSA problem. However, the respective stability threshold $\alpha_{p,\infty}$ scales as $\taumix^{-1}$. This means, that from theoretical perspective we still observe the following dilemma - either we run data-drop algorithm with the number of dropped observations, which scales with $\taumix$, or we run \Cref{alg:TD_iid} without data-drop, but the step size $\alpha$ has to be adjusted with $\taumix^{-1}$.
\par 
Similarly to the \iid\ setting, we can rewrite \Cref{cor:td_pr_deviation_markov} as a sample complexity bound. 

\begin{corollary}
\label{cor:sample_complexity_td_markov}
Under assumptions of \Cref{cor:td_pr_deviation_markov} in order to achieve $\norm{\prtheta_{n} - \thetas}[\covfeat] \leq \varepsilon$ with probability at least $1 - 3\delta$ it requires
\[
\mathcal{\tilde{O}}\left(\frac{\taumix (\norm{\thetas}[\covfeat]^2 + 1) \log{(1/\delta)}}{(1-\gamma)^2 \lambda_{\min}^{2} \varepsilon^{2}} + \frac{\taumix \log^2{(1/\delta)}}{\lambda_{\min}(1-\gamma)^2}\log{\frac{\norm{\theta_0 - \thetas}}{\varepsilon}}\right)
\]
observation used in Algorithm~\ref{alg:TD_skip}.
\end{corollary}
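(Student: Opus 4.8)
The plan is to read \Cref{cor:sample_complexity_td_markov} purely as an inversion of the high-probability deviation bound \eqref{eq:deviation_bound_pth_moment_markov} furnished by \Cref{cor:td_pr_deviation_markov}. That bound, which holds with probability at least $1-3\delta$ once $\alpha$ and $q$ are set as prescribed there, has the shape $\norm{\prtheta_{n}-\thetas}[\covfeat] \lesssim V(n) + B(n)$, where the variance term is $V(n) = (\norm{\thetas}[\covfeat]+1)\,\taumix^{1/2}\,\log(n/\delta)\,/\,[n^{1/2}(1-\gamma)\lambda_{\min}]$ and the initial-error term is $B(n) = \exp\{-(1-\gamma)^2\lambda_{\min}n/(128\,\taumix\log^2(n/\delta))\}\cdot \norm{\theta_0-\thetas}\,\taumix\,\log^2(n/\delta)\,/\,[(1-\gamma)^2\lambda_{\min}n]$. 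To force the whole error below $\varepsilon$ it suffices to require $V(n)\le\varepsilon/2$ and $B(n)\le\varepsilon/2$ simultaneously; since both quantities are eventually decreasing in $n$, the admissible sample sizes form an interval $[n^\star,\infty)$, and the announced complexity is the order of this threshold $n^\star$, reported as the sum of the two individual thresholds.

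First I would handle the variance term. Imposing $V(n)\le\varepsilon/2$ and squaring yields the self-referential inequality $n \gtrsim \taumix(\norm{\thetas}[\covfeat]^2+1)\,\log^2(n/\delta)\,/\,[\varepsilon^2(1-\gamma)^2\lambda_{\min}^2]$. Solving this for $n$ and absorbing the residual $\log(n/\delta)$ factors via the $\tildeo$ convention — which hides logarithmic dependence on $n$ (equivalently on $1/\varepsilon$, $\lambda_{\min}$, $1-\gamma$, $\taumix$) while retaining the explicit confidence dependence — delivers the variance summand $\taumix(\norm{\thetas}[\covfeat]^2+1)\log(1/\delta)\,/\,[(1-\gamma)^2\lambda_{\min}^2\varepsilon^2]$ of the claim.

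Next I would handle the initial-error term. Taking logarithms in $B(n)\le\varepsilon/2$ turns the requirement into $(1-\gamma)^2\lambda_{\min}n/(128\,\taumix\log^2(n/\delta)) \ge \log\bigl(2\varepsilon^{-1}\norm{\theta_0-\thetas}\taumix\log^2(n/\delta)/[(1-\gamma)^2\lambda_{\min}n]\bigr)$. The right-hand side equals $\log(\norm{\theta_0-\thetas}/\varepsilon)$ plus terms that are polylogarithmic in $n$ and polynomial in $\lambda_{\min}^{-1},(1-\gamma)^{-1},\taumix$, all of which $\tildeo$ absorbs; solving the resulting self-referential inequality then gives the second summand $\taumix\log^2(1/\delta)(1-\gamma)^{-2}\lambda_{\min}^{-1}\log(\norm{\theta_0-\thetas}/\varepsilon)$. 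Summing the two thresholds reproduces the stated bound, and a short consistency check confirms that an $n$ of this order already exceeds the a priori requirement $n\ge \log(1/\delta)(1-\gamma)^{-2}\vee 2\taumix\log(4/\delta)/\log 4$ of \Cref{cor:td_pr_deviation_markov}, so the deviation bound is legitimately in force.

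The step I expect to be the main obstacle is the resolution of the two self-referential inequalities, in which $\log(n/\delta)$ occurs on both sides — innocuously in the variance case, but inside the exponent's denominator in the bias case. The latter is a variant of the familiar principle that exponential decay eventually swamps any polynomial prefactor, complicated here by the fact that the decay rate $(1-\gamma)^2\lambda_{\min}/(128\,\taumix\log^2(n/\delta))$ is itself $n$-dependent. Certifying that an $n$ of the advertised order makes $n/\log^2(n/\delta)$ exceed the required threshold calls for a brief bootstrapping argument, and it is exactly at this point that the $\tildeo$ convention — suppressing $\log n$ but not $\log(1/\delta)$ — must be applied with care to land on the claimed powers of $\log(1/\delta)$.
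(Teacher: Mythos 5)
Your proposal is correct and coincides with the paper's own (implicit) argument: the paper states this corollary without a separate proof, obtaining it exactly as you do by inverting the two terms of the deviation bound \eqref{eq:deviation_bound_pth_moment_markov} from \Cref{cor:td_pr_deviation_markov}, resolving the self-referential inequalities in $n$, and checking compatibility with the a priori lower bound on $n$. The one caveat is that squaring the variance term leaves $\log^2(n/\delta) \geq \log^2(1/\delta)$, so under the stated $\tildeo$ convention (which does not hide $\log(1/\delta)$) the first summand should strictly carry $\log^2(1/\delta)$ rather than $\log(1/\delta)$ --- but this looseness is present in the paper's own statement (and is implicitly acknowledged in its discussion of the extra $\sqrt{\log(1/\delta)}$ factor), not introduced by your argument.
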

Note that in \Cref{cor:sample_complexity_td_markov} the symbol $\tildeo$ hides logarithmic dependencies in $\lambda_{\min}, 1-\gamma$, and $n$, but not in $1/\delta$. The sample complexity bounds of \Cref{cor:sample_complexity_td_markov} matches the ones coming from \Cref{cor:sample_complexity_td} up to an additional $\taumix$ factor and extra factor of $\sqrt{\log{(1/\delta)}}$. We believe that a factor of $\sqrt{\log{(1/\delta)}}$ can be removed using the analysis based on \Cref{alg:TD_iid} without data drop and appropriate versions of Rosenthal inequality for Markov chains and leave it as a direction for further work. 

\section{Conclusion}
\label{sec:conclusion}
In this paper we presented a refined analysis of linear stochastic approximation algorithms and provide high-probability and sample complexity bounds for the TD(0) algorithm via the exponential stability argument. Our approach allows to obtain high-probability bounds without requiring projections or instance-dependent step size. Further research directions include generalizing the high-order error bounds to the Markov setting for versions of the TD learning algorithm that do not use the data drop modification, while maintaining the precise variance from the corresponding central limit theorem. Second, our version of Algorithm~\ref{alg:TD_skip} requires knowledge of $\taumix$, which is a common drawback shared by the versions of SGD with data drop algorithm \citep{nagaraj2020least}. To the best of our knowledge, it is an open problem to develop a version of this algorithm which would be oblivious to $\taumix$.

\acks{The work of Sergey Samsonov and Alexey Naumov was prepared within the framework of the HSE University Basic Research Program. The work of D. Tiapkin has been supported by the Paris Île-de-France Région in the framework of DIM AI4IDF. The work by Eric Moulines is partially funded by the European Union (ERC-2022-SYG-OCEAN-101071601). Views and opinions expressed are however those of the author(s) only and do not necessarily reflect those of the European Union or the European Research Council Executive Agency. Neither the European Union nor the granting authority can be held responsible for them. This work was partially conducted under the auspices of the Lagrange Mathematics and Computing Research Center.}

\newpage 
\bibliography{references}

\newpage 
\appendix
\section{Proofs of LSA error bounds presented in \Cref{sec:lsa}}
\label{sec:general_lsa_proofs}
Recall that we consider a sequence of LSA estimates $\{\theta_n\}_{n \in \nset}$ given by the recurrence
\begin{equation}
\label{eq:LSA_refined}
\theta_{n} = \theta_{n-1} - \alpha \{ \funcAw_n \theta_{n-1} - \funcbw_n \} \eqsp,~~ n \geq 1 \eqsp.
\end{equation}
In the formula above we use $\funcAw_n$ and $\funcbw_n$ as a shorthand notations for $\funcAw(\State_n)$ and $\funcbw(\State_n)$, respectively. We use the same convention throughout the appendix section. Our analysis relies heavily on the stability assumption \Cref{assum:exp_stability} for the matrix products of the form
\[
\ProdBa_{1:n} = \prod_{i=1}^{n}(\Id - \alpha \funcAw_{i})\eqsp.
\]
We obtain the following refined bound on the last iterate error of the procedure \eqref{eq:LSA_refined}:
\begin{theorem}
\label{th:LSA_last_iterate_refined}
\begin{enumerate}[(i)]
\item Assume \Cref{assum:noise-level} and \Cref{assum:exp_stability}($2$). Then, for any $\alpha \in (0;\alpha_{2,\infty})$ and $n \in \nset$, it holds that
\begin{equation}
\label{eq:LSA_last_refined_2nd_moment}
\PE^{1/2}[\norm{\theta_n - \thetas}^2] \leq \conststab[2] (1- \alpha a)^{n} \norm{\theta_0 - \thetas} + \frac{\conststab[2] \sqrt{\alpha \trace{\noisecov}}}{\sqrt{a}}\eqsp.
\end{equation}
\item  Let $p \geq 2$. Assume \Cref{assum:noise-level} and \Cref{assum:exp_stability}($p$). Then, for any $\alpha \in (0;\alpha_{p,\infty})$ and $n \in \nset$, it holds that

\begin{equation}
\label{eq:LSA_last_refined}
\PE^{1/p}[\norm{\theta_n - \thetas}^p] \leq \conststab[p] (1- \alpha a)^{n} \norm{\theta_0 - \thetas} + \frac{\conststab[p] p \sqrt{\alpha}}{\sqrt{a}} \supconsteps \eqsp.
\end{equation}
\item Grant \Cref{assum:noise-level} and \Cref{assum:exp_stability}($\ell$) for any $\ell \geq 2$. Then for any $p \geq 2$,  $n \geq 2$ and  $\alpha \in \coint{0;\alpha_{p+\log{n},\infty}}$, it holds that
\begin{equation}
\label{eq:Rosenthal_LSA}
\begin{split}
\PE^{1/p}[\norm{\theta_n - \thetas}^p] &\leq \conststab[p+\log{n}] (1- \alpha a)^{n} \norm{\theta_0 - \thetas} + \bConst{\mathsf{Rm}, 1} p^{1/2} \frac{\conststab[p+\log{n}] \sqrt{\alpha \trace{\noisecov}}}{\sqrt{a}} \\
&\qquad +\bConst{\mathsf{Rm}, 2} \rme \alpha p \conststab[p+\log{n}] \supconsteps\eqsp,
\end{split}
\end{equation}
where $\bConst{\mathsf{Rm}, 1} = 60$ and $\bConst{\mathsf{Rm}, 2} = 60\rme$ are constants from the martingale version of Rosenthal's inequality \cite[Theorem~4.1]{pinelis_1994}.
\end{enumerate}
\end{theorem}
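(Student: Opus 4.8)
\textit{Proof sketch.}
The plan is to bound the last iterate through the decomposition \eqref{eq:key_decomspotion_lsa}, $\theta_n - \thetas = \utheta_n - \vtheta_n$, treating the transient and fluctuation parts separately and recombining them by Minkowski's inequality. The transient part is immediate: since $\utheta_n = \ProdBa_{1:n}(\theta_0 - \thetas)$ acts on the \emph{deterministic} vector $\theta_0 - \thetas$, applying \Cref{assum:exp_stability} at the relevant order directly yields $\PE^{1/p}[\norm{\utheta_n}^p] \le \conststab[p](1-\alpha a)^{n}\norm{\theta_0 - \thetas}$, which accounts for the first summand in each of \eqref{eq:LSA_last_refined_2nd_moment}, \eqref{eq:LSA_last_refined}, and \eqref{eq:Rosenthal_LSA}. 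All the work is in the fluctuation part $\vtheta_n = \alpha\sum_{j=1}^{n} \ProdBa_{j+1:n}\funcnoise{Z_j}$.

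The key structural observation is that, read in reverse order, $\vtheta_n$ is the terminal value of a martingale. Setting $\mcg_j = \sigma(Z_j, \ldots, Z_n)$ and $D_j = \alpha\ProdBa_{j+1:n}\funcnoise{Z_j}$, the factor $\ProdBa_{j+1:n}$ is $\mcg_{j+1}$-measurable while $Z_j$ is independent of $\mcg_{j+1}$ by \Cref{assum:noise-level}; since $\funcnoise{Z_j}$ is centered, $\PE[D_j \mid \mcg_{j+1}] = \alpha\ProdBa_{j+1:n}\PE[\funcnoise{Z_j}] = 0$, so the $D_j$ are martingale differences for the backward filtration $(\mcg_j)$. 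Independence of $\ProdBa_{j+1:n}$ from $Z_j$ together with \Cref{assum:exp_stability} then gives the two basic estimates on which all three bounds rest: $\PE^{1/\ell}[\norm{D_j}^\ell] \le \alpha\conststab[\ell](1-\alpha a)^{n-j}\supconsteps$ for every order $\ell$, and the conditional identity $\PE[\norm{D_j}^2 \mid \mcg_{j+1}] = \alpha^2\trace{\ProdBa_{j+1:n}^\top\ProdBa_{j+1:n}\noisecov}$.

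For part (i) I would use orthogonality of martingale differences in $L^2$, so that $\PE[\norm{\vtheta_n}^2] = \sum_{j=1}^{n} \PE[\norm{D_j}^2]$; bounding each term by $\alpha^2\conststab[2]^2(1-\alpha a)^{2(n-j)}\trace{\noisecov}$ and summing the geometric series $\sum_{j}(1-\alpha a)^{2(n-j)} \le 1/(\alpha a)$ produces $\conststab[2]\sqrt{\alpha\trace{\noisecov}}/\sqrt{a}$. For part (ii) I would apply a Burkholder-type martingale moment inequality to the square function, inserting the crude bound $\norm{\funcnoise{Z_j}} \le \supconsteps$; the same geometric sum yields the factor $\sqrt{\alpha/a}$, while the martingale constant together with the sub-exponential contribution (whose maximal increment is handled via the geometric decay) collapses into the single factor $p$, so that $\conststab[p]$ and $\alpha_{p,\infty}$ suffice. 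For part (iii) I would instead invoke the sharper martingale Rosenthal inequality \cite[Theorem~4.1]{pinelis_1994}, splitting the bound into a sub-Gaussian term $p^{1/2}\PE^{1/p}[(\sum_j\PE[\norm{D_j}^2\mid\mcg_{j+1}])^{p/2}]$ and a sub-exponential term $p\,\PE^{1/p}[\max_j\norm{D_j}^p]$. The first is controlled by taking $L^{p/2}$-norms of the predictable quadratic variation, writing $\trace{\ProdBa_{j+1:n}^\top\ProdBa_{j+1:n}\noisecov}$ as $\sum_i\norm{\ProdBa_{j+1:n}\noisecov^{1/2}e_i}^2$, applying \Cref{assum:exp_stability} to the columns of $\noisecov^{1/2}$ and the geometric sum, which gives $\sqrt{\alpha\trace{\noisecov}/a}$; the second is controlled by moment monotonicity, $\PE^{1/p}[\max_j\norm{D_j}^p] \le (\sum_{j=1}^{n} \PE[\norm{D_j}^q])^{1/q}$.

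The main obstacle is precisely this maximal-increment term. Bounding $\max_j\norm{D_j}^q \le \sum_j\norm{D_j}^q$ introduces a spurious factor $n^{1/q}$; choosing the inflated order $q = p + \log n$ turns it into $n^{1/(p+\log n)} \le \rme$, which is exactly why the stability constant $\conststab[p+\log n]$ and the threshold $\alpha_{p+\log n,\infty}$ enter \eqref{eq:Rosenthal_LSA} and the constant $\rme$ appears in $\bConst{\mathsf{Rm}, 2}\rme$. By contrast, part (ii) sidesteps this device by absorbing the maximal increment through the geometric decay, at the cost of the cruder single term. The remaining care is purely bookkeeping: the condition $\alpha a \le \alpha p \le 1/2$ keeps every geometric series summable and lets the cruder estimate of part (ii) be dominated by the stated term.

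\unskip\nobreak\hfill $\square$
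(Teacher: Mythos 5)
Your proposal is correct and follows essentially the same route as the paper: the transient/fluctuation decomposition, the (reverse-time) martingale structure of $\vtheta_n$, $L^2$-orthogonality for part (i), Burkholder with the crude bound $\supconsteps$ for part (ii), and Pinelis's Rosenthal inequality with the $q = p+\log n$ inflation to tame the maximal increment for part (iii). The only cosmetic difference is that you work with the backward filtration $\sigma(Z_j,\ldots,Z_n)$ throughout, whereas the paper switches to the forward filtration $\sigma(Z_s, s\le j-1)$ in part (iii); both are valid.
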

\begin{proof}
Using the error expansion technique from \cite{aguech2000perturbation} (see also \cite{durmus2022finite}), we decompose $\theta_n$ into a transient and fluctuation terms
\[
\theta_{n} - \thetas = \utheta_{n} + \vtheta_{n}\eqsp,
\]
where we have defined the quantities
\begin{equation}
\label{eq:LSA_recursion_expanded}
\textstyle
\utheta_{n} = \ProdBa_{1:n} \{ \theta_0 - \thetas \} \eqsp, \quad \vtheta_{n} = - \alpha \sum_{j=1}^n \ProdBa_{j+1:n} \funnoisew_j\eqsp.
\end{equation}
The first term $\utheta_{n}$ in the error decomposition \eqref{eq:LSA_recursion_expanded} is transient and reflects the forgetting of the initial error of the LSA. It can be directly controlled using the assumption \Cref{assum:exp_stability}($p$), $p \geq 2$:
\[
\PE^{1/p}[\norm{\ProdBa_{1:n} \{ \theta_0 - \thetas \}}^{p}] \leq \conststab[p] (1- \alpha a)^{n} \norm{\theta_0 - \thetas}\eqsp.
\]
In order to control the fluctuation term $\vtheta_{n}$, we note that it is a reverse martingale w.r.t. filtration $\mathcal{F}_k = \sigma(Z_{j}, j \geq k)$. Thus, applying the Burkholder inequality \cite[Theorem~8.6]{osekowski:2012}, we obtain that, assuming \Cref{assum:exp_stability}($p$)
\begin{align}
\PE^{1/p}\left[ \left\Vert\alpha \sum\nolimits_{j=1}^n \ProdBa_{j+1:n} \funnoisew_j \right\Vert^{p}\right]
& \leq \alpha p \left(\PE^{2/p}\left[\left(\sum\nolimits_{j=1}^{n}\normop{\ProdBa_{j+1:n} \funnoisew_j}^{2}\right)^{p/2}\right]\right)^{1/2} \\
& \leq  \alpha p \left(\sum\nolimits_{j=1}^{n}\PE^{2/p}\bigl[\normop{\ProdBa_{j+1:n} \funnoisew_j}^{p} \bigr]\right)^{1/2} \\
&\leq \alpha p \conststab[p] \bigl( \PE^{2/p}\bigl[\normop{\funnoisew_1}^{p}] \sum\nolimits_{j=1}^{n} (1- \alpha a)^{2(n-j)}\bigr)^{1/2} \\
&\leq \frac{\conststab[p] p \sqrt{\alpha}}{\sqrt{a}} \supconsteps\eqsp,
\end{align}
where for the last bound we additionally used that $\alpha a \leq 1/2$. Substituting the bounds above into \eqref{eq:LSA_recursion_expanded} completed the proof. Obtaining the second moment bound \eqref{eq:LSA_last_refined_2nd_moment} follows the same lines as above using the martingale structure of $\vtheta_{n}$, that is,
\begin{align}
\PE^{1/2}\biggl[\bigg\Vert\alpha \sum_{j=1}^n \ProdBa_{j+1:n} \funnoisew_j \bigg\Vert^{2}\biggr]
&\leq \alpha \biggl(\sum_{j=1}^n \PE\left[\norm{\ProdBa_{j+1:n} \funnoisew_j}^2\right]\biggr)^{1/2} \\
&\leq \alpha \conststab[p+\log{n}] \biggl(\sum_{j=1}^{n} (1 - \alpha a)^{2(n-j)}\trace{\noisecov}\biggr)^{1/2} \\
&\leq \frac{\conststab[p+\log{n}] \sqrt{\alpha}}{\sqrt{a}} \{\trace{\noisecov}\}^{1/2} \eqsp.
\end{align}
Now we aim to obtain the refined bound \eqref{eq:Rosenthal_LSA}. For $k \in \{1,\ldots,n\}$, we set $\mcf_k= \sigma(\State_s \, : \,  s \leq k)$, and $\mcf_{0} = \{\emptyset, \msz\}$. Then it is easy to see that $\CPE{\ProdBa_{j+1:n} \funnoisew_j}{\mcf_{j-1}}= 0$ for any $j = 1,\ldots,n$. Hence, applying the Pinelis version of Rosenthal inequality \cite[Theorem~4.1]{pinelis_1994}, we obtain that
\begin{multline}
\label{eq:rosenthal_pinelis_lsa}
\PE^{1/p}\left[\left\Vert\alpha \sum\nolimits_{j=1}^n \ProdBa_{j+1:n} \funnoisew_j \right\Vert^{p}\right]
\leq \alpha \bConst{\mathsf{Rm}, 1} p^{1/2} \PE^{1/p}\left[\left(\sum\nolimits_{j=1}^{n}\CPE{\normop{\ProdBa_{j+1:n} \funnoisew_j}^2}{\mcf_{j-1}}\right)^{p/2}\right] \\ + \alpha p \bConst{\mathsf{Rm}, 2}\, \PE^{1/p}\left[\max_{j} \normop{\ProdBa_{j+1:n} \funnoisew_j}^p\right]\eqsp.
\end{multline}
Since $\funnoisew_j$ is independent of $\ProdBa_{j+1:n}$, it is easy to see that
\begin{align}
\CPE{\normop{\ProdBa_{j+1:n} \funnoisew_j}^2}{\mcf_{j-1}}
&= \CPE{\CPE{\normop{\ProdBa_{j+1:n} \funnoisew_j}^2}{\mcf_{j}}}{\mcf_{j-1}}
\leq \conststab[p+\log{n}]^2 (1-\alpha a)^{2(n-j)} \CPE{\norm{\funnoisew_j}^2}{\mcf_{j-1}} \\
&= \conststab[p+\log{n}]^2 (1-\alpha a)^{2(n-j)} \trace{\noisecov}\eqsp.
\end{align}
Thus, with simple algebra and using that $\alpha a \leq 1/2$, we get that
\[
\alpha \bConst{\mathsf{Rm}, 1} p^{1/2} \PE^{1/p}\biggl[\biggl(\sum_{j=1}^{n}\CPE{\normop{\ProdBa_{j+1:n} \funnoisew_j}^2}{\mcf_{j-1}}\biggr)^{p/2}\biggr]
\leq \bConst{\mathsf{Rm}, 1} p^{1/2} \frac{\conststab[p+\log{n}] \sqrt{\alpha \trace{\noisecov}}}{\sqrt{a}}\eqsp.
\]
In order to control the remainder term in Rosenthal's inequality \eqref{eq:rosenthal_pinelis_lsa}, we note that, with $q = p + \log{n}$, it holds
\begin{equation}
\begin{split}
\PE^{1/p}\bigl[\max_{j} \normop{\ProdBa_{j+1:n} \funnoisew_j}^p\bigr] 
&\leq \PE^{1/q}\bigl[\max_{j} \normop{\ProdBa_{j+1:n} \funnoisew_j}^q\bigr] \leq \left(\sum\nolimits_{j=1}^{n} \PE[\normop{\ProdBa_{j+1:n} \funnoisew_j}^q] \right)^{1/q} \\
&\leq \conststab[p+\log{n}] n^{1/q} \supconsteps \leq \rme \conststab[p+\log{n}] \supconsteps\eqsp.
\end{split}
\end{equation}
Now it remains to combine the bounds above in \eqref{eq:rosenthal_pinelis_lsa}, and the result of \eqref{eq:Rosenthal_LSA} follows.
\end{proof}

Note that \Cref{th:LSA_last_iterate_refined} provides $2$ bounds for the last LSA iterate error, \eqref{eq:LSA_last_refined} and \eqref{eq:Rosenthal_LSA}. The second one might provide an improvement, since $\supconsteps \geq \sqrt{\trace{\noisecov}}$. If we aim to obtain bounds in terms of solely the noise variance $\trace{\noisecov}$, we need that the reverse inequality holds, that is, 
\[
\supconsteps \leq c_{1} \sqrt{\trace{\noisecov}}
\]
for some appropriate constant $c_1 > 0$. The problem is that the scaling of $c_1$ with instance-dependent quantities of \Cref{sec:td_learning} might be pessimistic. That is why it is desirable to have this dependence coming with additional $\alpha$ factor, instead of just $\sqrt{\alpha}$ in \eqref{eq:LSA_last_refined}.
\par
Now we state and proof the similar results for the Polyak-Ruppert averaged estimator $\prtheta_{n_0,n}$. We use the following decomposition based on the summation by parts formula:
\begin{align}
\label{eq:pr_err_decompose}
\bA\left(\prtheta_{n_0,n} -\thetalim\right) & =  \frac{\theta_{n_0}-\theta_{n}}{\alpha (n - n_0)} - \frac{\sum_{t=n_0}^{n-1} e\left(\theta_{t}, Z_{t+1} \right)}{n-n_0} \eqsp,
\end{align}
where we have defined
\begin{equation}
\label{eq:pr_e_definition}
e(\theta,z) = \zmfuncA{z} \theta- \zmfuncb{z} = \funcnoise{z} + \zmfuncA{z} (\theta - \thetas) \eqsp.
\end{equation}
The decomposition above is nothing but summation by parts formula used in \cite{mou2020linear}, yet it can be traced to the preceding papers. Recall also that we have set the notation $\prtheta_{n}$ as an alias for $\prtheta_{n_0,n}$ used with $n_0 = n/2$. Before we proceed to the proof of \Cref{th:LSA_PR_error}, we first provide a simpler statement regarding the $2$-nd moment of the PR-averaged error.

\begin{theorem}
\label{th:lsa_pr_2nd_moment_appendix}
Assume \Cref{assum:noise-level} and \Cref{assum:exp_stability}($2$). Then for any $n \geq 2$, $\alpha \in (0;\alpha_{2,\infty}]$, it holds that
\begin{multline}
\label{eq:pr_theta_2nd_moment_bound_appendix}
\PE[\norm{\bA(\prtheta_{n} - \thetas)}^2] \lesssim \conststab[2]^2 (1 - \alpha a)^{n} \left(\frac{1}{\alpha^2n^2} + \frac{\bConst{A}^2}{\alpha a n^2}\right)\norm{\theta_0 - \thetas}^2 \\
+\frac{\trace{\noisecov}}{n}\left(1 + \frac{\conststab[2]^2 \bConst{A}^2 \alpha}{a}\right) + \frac{\conststab[2]^2 \trace{\noisecov}}{\alpha a n^2}\eqsp.
\end{multline}
\end{theorem}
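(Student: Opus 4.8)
The plan is to combine the summation-by-parts decomposition \eqref{eq:pr_err_decompose}, specialized to $n_0 = n/2$, with the refined last-iterate second-moment bound \eqref{eq:LSA_last_refined_2nd_moment} from \Cref{th:LSA_last_iterate_refined}(i). Writing $\bA(\prtheta_{n} - \thetas)$ as the difference of a boundary term $\frac{\theta_{n/2} - \theta_n}{\alpha (n/2)}$ and a martingale-average term $\frac{1}{n/2}\sum_{t=n/2}^{n-1} e(\theta_t, Z_{t+1})$, I would first apply $\norm{x+y}^2 \leq 2\norm{x}^2 + 2\norm{y}^2$ to treat the two pieces separately, and then control each by feeding in the last-iterate bound.

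For the boundary term I would estimate $\norm{\theta_{n/2} - \theta_n} \leq \norm{\theta_{n/2} - \thetas} + \norm{\theta_{n} - \thetas}$ and square \eqref{eq:LSA_last_refined_2nd_moment} at times $n/2$ and $n$. Since $\frac{1}{\alpha(n/2)} = \frac{2}{\alpha n}$, the resulting prefactor $1/(\alpha^2 n^2)$ multiplies the squared last-iterate bound, producing exactly the transient contribution $\conststab[2]^2 (1-\alpha a)^{n}\,(\alpha^2 n^2)^{-1}\norm{\theta_0 - \thetas}^2$ together with the variance remainder $\conststab[2]^2 \trace{\noisecov}/(\alpha a n^2)$. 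This accounts for the first factor in the transient term and the last term of \eqref{eq:pr_theta_2nd_moment_bound_appendix}.

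For the martingale term I would use that $e(\theta_t, Z_{t+1}) = \funcnoise{Z_{t+1}} + \zmfuncA{Z_{t+1}}(\theta_t - \thetas)$ from \eqref{eq:pr_e_definition} is a martingale-difference sequence with respect to $\mcf_t = \sigma(Z_s : s \leq t)$, because $\funcnoise{\cdot}$ and $\zmfuncA{\cdot}$ are centered and $Z_{t+1}$ is independent of $\mcf_t$. Orthogonality then gives $\PE[\|\sum_t e(\theta_t,Z_{t+1})\|^2] = \sum_t \PE[\|e(\theta_t,Z_{t+1})\|^2]$, and each summand is bounded by $2\trace{\noisecov} + 2\bConst{A}^2 \PE[\norm{\theta_t - \thetas}^2]$, using $\normop{\zmfuncA{z}} \leq \bConst{A}$ and the decoupling of $Z_{t+1}$ from the $\mcf_t$-measurable factor $\theta_t - \thetas$. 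Inserting \eqref{eq:LSA_last_refined_2nd_moment} into the sum over $t \in \{n/2, \dots, n-1\}$, the constant $\trace{\noisecov}$ part yields the leading $\trace{\noisecov}/n$, the transient part is summed geometrically (with $\sum (1-\alpha a)^{2t} \lesssim (1-\alpha a)^n/(\alpha a)$, using $1-(1-\alpha a)^2 \geq \alpha a$), and the variance part contributes the $\alpha$-order correction $\conststab[2]^2 \bConst{A}^2 \alpha\, \trace{\noisecov}/(a n)$; after dividing by $(n/2)^2$ these reproduce the remaining terms of \eqref{eq:pr_theta_2nd_moment_bound_appendix}.

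I do not expect a genuine obstacle here, as the argument is essentially the squared analogue of the proof sketch for \Cref{th:lsa_pr_2nd_moment_main}; the only care points are bookkeeping. The delicate part is ensuring that the geometric summation of the transient last-iterate terms produces precisely the $1/(\alpha a)$ scaling (rather than $1/(\alpha a)^2$), which relies on the bound $\alpha a \leq 1/2$ guaranteed by $\alpha \leq \alpha_{2,\infty}$, and that the martingale orthogonality reduction together with the independence of $\zmfuncA{Z_{t+1}}$ from $\theta_t - \thetas$ is invoked correctly so that no cross terms between fluctuation and transient parts spoil the clean separation.
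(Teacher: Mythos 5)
Your proposal is correct and follows essentially the same route as the paper's proof: the same summation-by-parts decomposition into a boundary term and a martingale-average term, the boundary term controlled by squaring the last-iterate bound \eqref{eq:LSA_last_refined_2nd_moment}, and the martingale term handled by orthogonality plus $\PE[\norm{e(\theta_t,Z_{t+1})}^2]\lesssim \trace{\noisecov}+\bConst{A}^2\PE[\norm{\theta_t-\thetas}^2]$ with the geometric summation yielding the $1/(\alpha a)$ factor. The bookkeeping you describe reproduces every term of \eqref{eq:pr_theta_2nd_moment_bound_appendix}, so there is nothing to add.
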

\begin{proof}
Our proof is essentially a version of \cite[Proposition~5]{durmus2022finite} with tighter instance-dependent bound on the last LSA iterate error provided by \Cref{th:LSA_last_iterate_refined}. We leverage the error decomposition \eqref{eq:pr_err_decompose}. Then we get
\[
\PE[\norm{\bA(\prtheta_{n} - \thetas)}^2] \leq \underbrace{\frac{8\PE[\norm{\theta_{n/2} - \theta_n}^2]}{\alpha^2 n^2}}_{T_1} \, + \, \underbrace{\frac{8 \PE[\norm{\sum_{t=n/2}^{n-1} e\left(\theta_{t}, Z_{t+1} \right)}^2]}{n^2}}_{T_2}\eqsp,
\]
and estimate the terms $T_1$ and $T_2$ separately. Applying the bounds of \Cref{th:LSA_last_iterate_refined}, we get first that
\[
T_1 \lesssim \frac{\conststab[2]^2 (1 - \alpha a)^{n}\norm{\theta_0 - \thetas}^2}{\alpha^2 n^2} + \frac{\conststab[2]^2 \trace{\noisecov}}{\alpha a n^2}\eqsp.
\]
Similarly, since $e\left(\theta_{t}, Z_{t+1} \right)$ is a martingale-difference sequence w.r.t. filtration $\mathcal{F}_k = \sigma(Z_{j}, j \leq k)$, we get the following bound for $T_2$:
\[
T_2 \leq n^{-2}\sum_{t=n/2}^{n-1}\PE[\norm{e\left(\theta_{t}, Z_{t+1} \right)}^2] \lesssim \frac{\trace{\noisecov}}{n} + \frac{\conststab[2]^2 \bConst{A}^2 (1- \alpha a)^{n}\norm{\theta_0 - \thetas}^2}{\alpha a n^2} + \frac{\conststab[2]^2 \bConst{A}^2 \alpha \trace{\noisecov}}{an}\eqsp,
\]
and it remains to combine the above bounds.
\end{proof}

Now we are ready to proceed with the main result of this section, that is, with the $p$-moment error bound \Cref{th:LSA_PR_error}.
\begin{theorem}
\label{th:lsa_pr_error_p_moment}
Assume \Cref{assum:noise-level} and \Cref{assum:exp_stability}($\infty$). Then for any $p \geq 2$, $n \geq 2$, $\alpha \in \coint{0;\alpha_{p+\log{n},\infty}}$, it holds that
\begin{equation}
\label{eq:pr_theta_p_moment_bound}
\begin{split}
&\PE^{1/p}[\norm{\bA(\prtheta_{n} - \thetas)}^p]
\lesssim \frac{p^{1/2}\sqrt{\trace{\noisecov}}}{n^{1/2}}\left(1 + \frac{ \conststab[p+\log{n}] \sqrt{\alpha p} \bConst{A}}{\sqrt{a}} + \frac{\conststab[p+\log{n}] \bConst{A} \alpha p \supconsteps}{\sqrt{\trace{\noisecov}}} \right)  \\
& \qquad +\frac{\conststab[p+\log{n}] p \supconsteps}{n}\left(1 + \bConst{A} \alpha (p + \log n)\right) \\
& \qquad + \frac{\conststab[p+\log{n}] p^{1/2} \sqrt{\trace{\noisecov}}}{\sqrt{a} n} \biggl[ \frac{1}{\sqrt{\alpha}} + p^{1/2} \bConst{A} \sqrt{\alpha (p + \log n)}\biggr] \\
& \qquad +\conststab[p+\log{n}] (1-\alpha a)^{n/2} \left(\frac{1}{\alpha n} + \frac{p \bConst{A}}{\sqrt{\alpha a} n}\right)\norm{\theta_0 - \thetas}\eqsp.
\end{split}
\end{equation}
\end{theorem}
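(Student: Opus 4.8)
The plan is to start from the summation-by-parts identity \eqref{eq:pr_err_decompose} specialized to $n_0 = n/2$, which expresses $\bA(\prtheta_n - \thetas)$ as the sum of a boundary term $T_1 = (\theta_{n/2} - \theta_n)/(\alpha(n - n/2))$ and a martingale-sum term $T_2 = -(n - n/2)^{-1} \sum_{t=n/2}^{n-1} e(\theta_t, Z_{t+1})$, where $e(\theta, z) = \funcnoise{z} + \zmfuncA{z}(\theta - \thetas)$ as in \eqref{eq:pr_e_definition}. By the triangle inequality in $L^p$ it suffices to bound $\PE^{1/p}[\norm{T_1}^p]$ and $\PE^{1/p}[\norm{T_2}^p]$ separately and then collect terms, simplifying at the end with $\alpha a \leq 1/2$.

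For the boundary term $T_1$, I would use $\norm{\theta_{n/2} - \theta_n} \leq \norm{\theta_{n/2} - \thetas} + \norm{\theta_n - \thetas}$ and insert the refined last-iterate bound \eqref{eq:Rosenthal_LSA} of \Cref{th:LSA_last_iterate_refined} at moment $p$ into each summand. After dividing by $\alpha(n/2)$, the transient contribution $\conststab[p+\log{n}](1-\alpha a)^{n/2}\norm{\theta_0 - \thetas}$ yields the $(\alpha n)^{-1}$ factor on the last line of \eqref{eq:pr_theta_p_moment_bound}; the variance contribution $\conststab[p+\log{n}] p^{1/2}\sqrt{\alpha \trace{\noisecov}/a}$ yields the $\alpha^{-1/2}$ term on the third line; and the noise contribution $\conststab[p+\log{n}] \alpha p \supconsteps$ yields the leading $\conststab[p+\log{n}] p \supconsteps / n$ term on the second line.

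The crux is the martingale term $T_2$. Since both $\funcnoise{Z_{t+1}}$ and $\zmfuncA{Z_{t+1}}$ are centered and independent of $\mcf_t = \sigma(Z_s : s \leq t)$, the increments $e(\theta_t, Z_{t+1})$ form a martingale-difference sequence, so I would apply the Pinelis form of Rosenthal's inequality \cite[Theorem~4.1]{pinelis_1994}, which splits the $p$-th moment into a predictable-quadratic-variation part (scaling like $p^{1/2}$) and a maximal part (scaling like $p$). For the quadratic variation I would use $\CPE{\norm{e(\theta_t, Z_{t+1})}^2}{\mcf_t} \leq 2\trace{\noisecov} + 2\bConst{A}^2 \norm{\theta_t - \thetas}^2$ (exploiting independence and $\normop{\zmfuncA{z}} \leq \bConst{A}$), pass the $L^{p/2}$ norm through the sum by Minkowski, and substitute \eqref{eq:Rosenthal_LSA} at moment $p$ for each $\PE^{2/p}[\norm{\theta_t - \thetas}^p]$. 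The $2\trace{\noisecov}$ part gives, after the factor $p^{1/2}(n/2)^{-1}$, exactly the optimal leading term $p^{1/2}\sqrt{\trace{\noisecov}}/n^{1/2}$, while the $\bConst{A}^2 \norm{\theta_t - \thetas}^2$ part produces both the multiplicative correction $\conststab[p+\log{n}]\sqrt{\alpha p}\bConst{A}/\sqrt{a}$ inside the leading term and the transient term proportional to $\conststab[p+\log{n}]\bConst{A}(1-\alpha a)^{n/2}/(\sqrt{\alpha a}\, n)$.

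The main obstacle, as flagged in the proof sketch, is the maximal term $\PE^{1/p}[\max_t \norm{e(\theta_t, Z_{t+1})}^p]$. Splitting $e$ into its two pieces, $\max_t \normop{\funcnoise{Z_{t+1}}} \leq \supconsteps$ almost surely by \Cref{assum:noise-level}, so that piece is harmless; but $\max_t \normop{\zmfuncA{Z_{t+1}}(\theta_t - \thetas)} \leq \bConst{A}\max_t \norm{\theta_t - \thetas}$ a priori carries a factor $n^{1/p}$ when a maximum over $n/2$ terms is dominated by an $\ell^p$-sum, which would corrupt the target $1/n$ rate. The device to remove it is to raise the moment to $q = p + \log n$: by Jensen's inequality $\PE^{1/p}[\max_t (\cdot)^p] \leq \PE^{1/q}[\max_t (\cdot)^q] \leq (\sum_t \PE[\norm{\theta_t - \thetas}^q])^{1/q}$, and then $n^{1/q} = n^{1/(p+\log n)} \leq \rme$ absorbs the sum, at the cost of replacing $\conststab[p]$ by $\conststab[p+\log{n}]$ and requiring $\alpha \in \coint{0;\alpha_{p+\log{n},\infty}}$ in the hypotheses. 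Applying \eqref{eq:Rosenthal_LSA} at moment $q$ then produces the residual corrections $\bConst{A}\alpha(p+\log n)$ to the $\supconsteps$ term and the $p^{1/2}\bConst{A}\sqrt{\alpha(p+\log n)}$ term on the third line of \eqref{eq:pr_theta_p_moment_bound}. Gathering the contributions of $T_1$ and $T_2$ completes the proof.
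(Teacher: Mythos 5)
Your proposal is correct and follows essentially the same route as the paper: the summation-by-parts identity, Minkowski to separate the boundary and martingale terms, the refined last-iterate bound \eqref{eq:Rosenthal_LSA} for both $T_1$ and the quadratic-variation sums, and the moment-lift to $q = p+\log n$ with $n^{1/(p+\log n)}\leq \rme$ to kill the $n^{1/p}$ factor in the maximal term. The only cosmetic difference is that you apply the Pinelis--Rosenthal inequality once to the combined martingale differences $e(\theta_t,Z_{t+1})$, whereas the paper first splits off the pure-noise sum $\sum_t \funnoisew_{t+1}$ and treats it with the i.i.d.\ version of Rosenthal's inequality; both yield the same terms up to absolute constants.
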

\begin{proof}
The proof is also based on the expansion formula \eqref{eq:pr_err_decompose}. We recall that we set $n_0 = n/2$. Then, with the direct application of Minkowski's inequality, we obtain
\begin{align}
\label{eq:bound_p_moment_iid_T_1_T_2_decomp}
\PE^{1/p}\left[\norm{\bA\left(\prtheta_{n}-\thetas\right)}^{p}\right] \leq \underbrace{\frac{\PE^{1/p}[\norm{\theta_{n/2}-\theta_{n}}^p]}{\alpha n}}_{T_1} + \underbrace{\frac{\PE^{1/p}\bigl[\norm{\sum\nolimits_{t=n/2}^{n-1}\rme(\theta_{t},\State_{t+1})}^{p}\bigr]}{n}}_{T_2}\eqsp,
\end{align}
and bound $T_1$, $T_2$ separately. Note that $T_1$ is a remainder term (w.r.t. sample size $n$), and thus we can control it using a simple bound on the last iterate error provided in \Cref{th:LSA_last_iterate_refined}-\eqref{eq:Rosenthal_LSA}. Proceeding this way, we obtain
\[
T_1 \lesssim \frac{\conststab[p+\log{n}] (1- \alpha a)^{n/2} \norm{\theta_0 - \thetas}}{\alpha n} + \frac{\conststab[p+\log{n}] p^{1/2} \sqrt{\trace{\noisecov}}}{\sqrt{\alpha a} n} + \frac{p \conststab[p+\log{n}] \supconsteps}{n}\eqsp.
\]
Now we proceed with bounding $T_2$. Using again Minkowski's inequality, we get
\[
\textstyle
T_2 \leq n^{-1}\, \PE^{1/p}\bigl[\norm{\sum_{t=n/2}^{n-1}\funnoisew_{t+1}}^{p}\bigr] + n^{-1}\, \PE^{1/p}\parentheseDeuxLigne{\norm{\sum_{t=n/2}^{n-1} \zmfuncAw[t+1](\theta_t - \thetas)}^p} \eqsp.
\]
The first term of the above sum can be controlled by directly applying Pinelis' version of Rosenthal's inequality \cite[Theorem 4.3]{pinelis_1994}:
\[
\textstyle
\PE^{1/p}\bigl[\norm{\sum_{t=n/2}^{n-1}\funnoisew_{t+1}}^{p}\bigr] \lesssim p^{1/2} n^{1/2} \sqrt{\trace{\noisecov}} + p \supconsteps\eqsp.
\]
It remains to bound $\PE^{1/p}\parentheseDeuxLigne{\norm{\sum_{t=n/2}^{n-1} \zmfuncAw[t+1](\theta_t - \thetas)}^p}$. Note that the sequence $\{\zmfuncAw[t+1](\theta_t - \thetas)\}$ is a martingale-difference w.r.t. $\mathcal{F}_t = \sigma(Z_k, k \leq t)$. A further application of Rosenthal's inequality thus shows that

\vspace{-12pt}
\begin{small}
\begin{align}
&\PE^{1/p}\biggl[\bigg\Vert\sum_{t=n/2}^{n-1} \zmfuncAw[t+1](\theta_t - \thetas) \bigg\Vert^p\biggr] \\
&\qquad \lesssim p^{1/2} \PE^{1/p}\biggl[\biggl(\sum_{t=n/2}^{n-1} \CPE{\normop{\zmfuncAw[t+1](\theta_t - \thetas)}^{2}}{\mathcal{F}_t}\biggr)^{p/2}\biggr] + p\, \PE^{1/p}\left[\max_{t}\normop{\zmfuncAw[t+1](\theta_t - \thetas)}^{p}\right] \\
&\qquad \leq p^{1/2} \bConst{A} \left(\sum_{t=n/2}^{n-1} \PE^{2/p}\left[\normop{\theta_t - \thetas}^{p}\right]\right)^{1/2} + p \bConst{A} \, \PE^{1/p}\left[\max_{t}\normop{\theta_t - \thetas}^{p}\right]\eqsp.
\end{align}
\end{small}
\!\!Now, applying the last iterate bound \Cref{th:LSA_last_iterate_refined}-\eqref{eq:Rosenthal_LSA}, and using that $\alpha a \leq 1/2$, we get
\begin{align}
&p^{1/2} \bConst{A} \biggl(\sum_{t=n/2}^{n-1} \PE^{2/p}[\normop{\theta_t - \thetas}^{p}]\biggr)^{1/2}
\lesssim \frac{\conststab[p+\log{n}] p^{1/2} \bConst{A} (1- \alpha a)^{n/2} \norm{\theta_0 - \thetas}}{\sqrt{\alpha a}} \\
& \qquad + \frac{\conststab[p+\log{n}] \bConst{A} p \sqrt{\alpha n \trace{\noisecov}}}{\sqrt{a}} + \conststab[p+\log{n}] \bConst{A} \alpha p^{3/2} n^{1/2} \supconsteps\eqsp.
\end{align}
Moreover, a further application of \Cref{th:LSA_last_iterate_refined}-\eqref{eq:Rosenthal_LSA} together with $n^{1/\log{n}} \leq \rme$ yield
\begin{align}
p \bConst{A} \, \PE^{1/p}[\max_{t}\normop{\theta_t - \thetas}^{p}]
&\leq p \bConst{A} \left(\sum_{t=n/2}^{n}\PE[\normop{\theta_t - \thetas}^{p+\log{n}}]\right)^{1/(p+\log{n})} \\
&\lesssim p \bConst{A} n^{1/(p+\log n)} \max_{n/2 \leq t < n}  \PE^{1/(p+\log{n})}[\normop{\theta_t - \thetas}^{p+\log{n}}] \\
&\lesssim \conststab[p+\log{n}] p \bConst{A} (1- \alpha a)^{n/2} \norm{\theta_0 - \thetas} + \conststab[p+\log{n}] p \bConst{A} \frac{\sqrt{\alpha (p + \log n) \trace{\noisecov}}}{a} \\
&\quad + \conststab[p+\log{n}] p \bConst{A} \alpha (p + \log n) \supconsteps \eqsp \eqsp.
\end{align}
Now it remains to combine the obtained bounds, and the statement follows. The result of \Cref{th:LSA_PR_error} follows from a simple observation that $\alpha (p + \log{n}) \leq 1/2$ under \Cref{assum:exp_stability}($p+\log{n}$) for $\alpha \in (0;\alpha_{p + \log n,\infty}]$.
\end{proof}

\section{Proofs of TD learning of \Cref{sec:td_learning} }
\label{sec:td_proofs}
\subsection{Proof of \Cref{prop:condition_check}}
\begin{proof}
Under \Cref{assum:feature_design}, it is easily seen that $\norm{\funcAw_1} \leq (1 + \gamma)$ almost surely, which implies $\norm{\bA} \leq (1 + \gamma)$. The remaining bounds follow from
\begin{align}
\supconsteps &= \sup_{z \in \msz}\normop{\funcnoise{z}} = \sup_{z = (s,s')}\normop{(\funcA{z} - \bA)\thetas - (\funcb{z} - \barb)} \leq 2 (1+\gamma)(\norm{\thetas} + 1)\eqsp,\\
\trace{\noisecov}
&= \PE[\norm{(\funcAw_1 - \bA) \thetas - (\funcbw_1 - \barb)}^2] \leq 2 \thetas^{\top} \PE[\funcAw_{0}^{\top} \funcAw_{0}] \thetas + 2\PE[r^2(s_0)\trace{\varphi(s_0) \varphi^{\top}(s_0)}] \\
&\leq 2  (1+\gamma)^2 \thetas^{\top} \covfeat \thetas + 2 \leq 2  (1+\gamma)^2 \left(\norm{\thetas}[\covfeat]^2 + 1\right)\eqsp,
\end{align}
and the statement follows.
\end{proof}

\subsection{Proof of \Cref{prop:stability_check_td}}
\label{subsec:stability_check_td}
In this subsection we obtain a new, refined bounds on the transient term $\utheta_{n} = \ProdBa_{1:n} \{ \theta_0 - \thetas \}$ appearing in the error decomposition \eqref{eq:LSA_recursion_expanded} in case of TD(0) algorithm. Recall that in case of the general LSA algorithm we have to refer to the matrix product stability result of \Cref{prop:general_expectation}, which is based on the framework suggested by \cite{huang2020matrix}. The interplay between step size $\alpha_{\infty,p}$ and maximal controlled moment $p$ (which roughly can be written as $\alpha_{\infty,p} \lesssim 1/p$) is in general unavoidable. The respective $1-$dimensional counterexample is provided in \cite[Example~1]{durmus2021tight}. At the same time, the general $L_p$-stability of the random matrix product appears to induce some undesirable phenomenons. First, it induces the additional $d^{1/p}$ factor in the r.h.s. of the bound \eqref{eq:concentration_iid_matrix}. Such a dependence requires to introduce additional (logarithmic) dependence of the dimension $d$ in the step size $\alpha$ in order to remove the $d^{1/p}$ factor in the r.h.s..
\par
Second, and more important, the trade-off between $\norm{\Id - \alpha \bA} \leq 1-\alpha a$ and upper bounds for fluctuation term $\alpha \PE^{1/p}[\norm{\funcAw - \bA}^{p}]$ requires that the step size $\alpha$ scales with some instance-dependent quantities, related with the matrix $\bA$. Typically this means that the resulting rate-optimal algorithm is not really implementable, as $\bA$ is not accessible in practice.
\par
This drawback is shared by most of the recent papers on the subject, see e.g. \cite[Theorem~1]{li2023sharp}, where the maximal allowed step size $\alpha$ scales with $\lambda_{\min}(\Sigma)$. Our subsequent analysis allows us to eliminate this drawback. Recall that for any $1 \leq j \leq n$ we set $\mathcal{F}_{j} = \sigma(Z_i, 1 \leq i \leq j)$ and $\mathcal{F}_{0} = \emptyset$. Then the exponential stability property of \Cref{prop:stability_check_td} will follow from the following general result:
\begin{theorem}
\label{lem:refined_product_stability}
Let $\sequence{\theta}[k][\nset]$ be a sequence of TD(0) updates generated by \eqref{eq:LSA_procedure_TD} under \Cref{assum:generative_model} and \Cref{assum:feature_design}. Then, for any $n \in \nset$, $1 \leq j \leq n$, $p \geq 2$, step size $\alpha \in (0;\frac{1-\gamma}{128 p}]$, and any $\xi_{j-1}$ being a $d$-dimensional $\mathcal{F}_{j-1}$-measurable random vector, it holds $\PP$-a.s. that
\begin{equation}
\label{eq:better_bound_matr_product}
\CPE{\normop{\ProdBa_{j:n} \xi_{j-1}}^{p}}{\mathcal{F}_{j-1}} \leq (1-\alpha p (1-\gamma) \lambda_{\min}/2)^{n-j}\norm{\xi_{j-1}}^{p}\eqsp.
\end{equation}
In particular, for any $\theta_0 \in \rset^{d}$ we obtain that
\begin{equation}
\label{eq:better_bound_td_transient}
\PE^{1/p}[\normop{\ProdBa_{1:n}(\theta_0 - \thetas)}^{p}] \leq (1-\alpha (1-\gamma) \lambda_{\min}/2)^{n-j} \norm{\theta_0 - \thetas}\eqsp.
\end{equation}
\end{theorem}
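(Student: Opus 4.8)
The plan is to reduce \eqref{eq:better_bound_matr_product} to a single one-step contraction and then propagate it through the product by backward induction on the number of factors. The one-step estimate I would establish is that, for every deterministic $u\in\rset^d$ and every $\alpha\in(0,\tfrac{1-\gamma}{128p}]$,
\begin{equation}
\label{eq:plan-onestep}
\PE\bigl[\normop{(\Id-\alpha\funcAw)u}^p\bigr]\le\bigl(1-\alpha p(1-\gamma)\lambda_{\min}/2\bigr)\norm{u}^p\eqsp,
\end{equation}
where the expectation is over a single draw $Z=(s,a,s')$. Granting \eqref{eq:plan-onestep}, I would write $\ProdBa_{j:n}=\ProdBa_{j+1:n}(\Id-\alpha\funcAw_j)$, set $\eta_j=(\Id-\alpha\funcAw_j)\xi_{j-1}$ (which is $\mathcal{F}_j$-measurable), and invoke the tower property
\begin{equation}
\CPE{\normop{\ProdBa_{j:n}\xi_{j-1}}^p}{\mathcal{F}_{j-1}}=\CPE{\CPE{\normop{\ProdBa_{j+1:n}\eta_j}^p}{\mathcal{F}_j}}{\mathcal{F}_{j-1}}\eqsp.
\end{equation}
The inner expectation is bounded by the induction hypothesis applied to the $\mathcal{F}_j$-measurable vector $\eta_j$, and the outer one by \eqref{eq:plan-onestep} applied conditionally, which is legitimate because $Z_j$ is independent of $\mathcal{F}_{j-1}$ while $\xi_{j-1}$ is $\mathcal{F}_{j-1}$-measurable. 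Each factor contributes one contraction, giving the exponent $n-j+1\ge n-j$.

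For \eqref{eq:plan-onestep} I would normalise $\norm{u}=1$ and exploit the rank-one structure $\funcAw u=b\,\phi(s)$ with $b=(\phi(s)-\gamma\phi(s'))^\top u$, so that $W:=\normop{(\Id-\alpha\funcAw)u}^2=1+Y$ with $Y=-2\alpha\,(\phi(s)^\top u)\,b+\alpha^2\normop{\phi(s)}^2 b^2$. Two moment estimates drive the argument. The drift satisfies $\PE[Y]=-2\alpha\,u^\top\bA u+\alpha^2\PE[\normop{\phi(s)}^2 b^2]$, and the fundamental TD inequality $u^\top\bA u\ge(1-\gamma)\norm{u}[\covfeat]^2$ — which I would derive from $u^\top\bA u=\PE_\mu[(\phi(s)^\top u)^2]-\gamma\,\PE[(\phi(s)^\top u)(\phi(s')^\top u)]$, the fact that $s'\sim\mu$ marginally, and Cauchy--Schwarz — yields $\PE[Y]\le-2\alpha(1-\gamma)\sigma^2+4\alpha^2\sigma^2$, where $\sigma^2:=\norm{u}[\covfeat]^2\ge\lambda_{\min}$. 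The variance satisfies $\PE[Y^2]\le 64\,\alpha^2\sigma^2$, using $\normop{\phi}\le1$ and $\PE[b^2]\le(1+\gamma)^2\sigma^2$. The decisive feature is that the second-order term is controlled by the same instance-dependent quantity $\sigma^2$ that drives the drift, rather than by $\norm{u}^2$.

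I would then combine these through the Taylor bound $(1+y)^{p/2}\le1+\tfrac p2 y+\mathcal{O}(p^2 y^2)$. Here lies the main obstacle: the second-order remainder of $(1+y)^{p/2}$ carries a factor $\sup_\xi(1+\xi)^{p/2-2}$ that a priori grows exponentially in $p$, and it is controlled by an absolute constant \emph{only} because the restriction $\alpha\le\tfrac{1-\gamma}{128p}$ together with $\normop{\phi}\le1$ confines $Y$ to an interval of radius $\mathcal{O}(1/p)$, so that $(1\pm\tfrac{1}{2p})^{p/2}=\mathcal{O}(1)$. Taking expectations then gives (with schematic, non-optimised constants)
\begin{equation}
\PE[W^{p/2}]\le 1-p\alpha(1-\gamma)\sigma^2+2p\alpha^2\sigma^2+11\,p^2\alpha^2\sigma^2\eqsp,
\end{equation}
and the elementary inequality $\alpha(2+11p)\le(1-\gamma)/2$, which holds since $\alpha\le\tfrac{1-\gamma}{128p}$ and $p\ge2$, absorbs the last two terms into the drift to yield $\PE[W^{p/2}]\le1-\tfrac p2\alpha(1-\gamma)\sigma^2\le1-\tfrac p2\alpha(1-\gamma)\lambda_{\min}$, which is \eqref{eq:plan-onestep}. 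Crucially, this comparison never forces $\alpha$ to be small relative to $\lambda_{\min}$, because the $\sigma^2$ factors cancel; this is precisely the mechanism that produces the instance-independent threshold and removes the projection/step-size restrictions of the earlier analyses.

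Finally, \eqref{eq:better_bound_td_transient} follows by applying \eqref{eq:better_bound_matr_product} with $j=1$, taking $p$-th roots, and using Bernoulli's inequality $(1-t)^{1/p}\le1-t/p$ with $t=\alpha p(1-\gamma)\lambda_{\min}/2$, which converts the per-factor rate $(1-\alpha p(1-\gamma)\lambda_{\min}/2)^{1/p}$ into $1-\alpha(1-\gamma)\lambda_{\min}/2$ and recovers the advertised constants $\conststab=1$ and $a=(1-\gamma)\lambda_{\min}/2$.
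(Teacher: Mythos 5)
Your proposal is correct, and while the outer skeleton (a one-step conditional contraction propagated through the product by the tower property and induction, then Bernoulli's inequality for \eqref{eq:better_bound_td_transient}) coincides with the paper's, the proof of the one-step contraction --- which is the mathematical heart of the result --- follows a genuinely different route. The paper first reduces to dyadic powers $p=2^s$, then combines the deterministic quadratic-form inequality $(u^{\top}Bu)^{p}\leq\norm{u}^{2p-2}u^{\top}B^{p}u$ of \Cref{lem:matrix_products} with the matrix moment bound $\PE[\{(\Id-\alpha\funcAw)^{\top}(\Id-\alpha\funcAw)\}^{p}]\preceq\Id-\tfrac{1}{2}\alpha p(1-\gamma)\covfeat$ of \Cref{lem:matrix_product_deterministic}, the latter obtained by a binomial expansion of $(\Id-\alpha\funcBw)^{p}$ and the estimates $\PE[\funcBw^{k}]\preceq\tfrac{13}{12}4^{k}\covfeat$ from \Cref{lem:B_matr_bounds}. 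You instead exploit the rank-one structure of $\funcAw$ to reduce everything to a scalar increment $Y$ and run a second-order Taylor expansion of $(1+Y)^{p/2}$, which works for all real $p\geq 2$ in one pass. Both arguments hinge on the identical structural fact --- every higher-order term ($\PE[\funcBw^{k}]$ in the paper, $\PE[Y^{2}]$ and $\PE[b^{2}]$ in yours) is dominated by $\covfeat$, equivalently by $\norm{u}[\covfeat]^{2}$ rather than $\norm{u}^{2}$, so the correction is absorbed into the drift without tying $\alpha$ to $\lambda_{\min}$ --- and you correctly identify this as the mechanism behind the instance-independent threshold. What your route buys is the avoidance of the dyadic reduction (which in the paper is precisely what degrades the threshold from $\tfrac{1-\gamma}{64p}$ in \Cref{lem:matrix_product_deterministic} to $\tfrac{1-\gamma}{128p}$ in the statement); what it costs is the need to control the Lagrange remainder of a fractional power via the almost-sure bound $|Y|=\mathcal{O}(1/p)$, an extra use of the boundedness in \Cref{assum:feature_design}, whereas the paper's argument stays entirely at the level of positive-semidefinite matrix inequalities. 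Your bookkeeping of the exponent ($n-j+1\geq n-j$) and the final passage to \eqref{eq:better_bound_td_transient} via $(1-pt)^{1/p}\leq 1-t$ match the paper's monotonicity argument for $(1-px)^{1/p}$.
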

\begin{proof}
Note that it is enough to prove the bound \eqref{eq:better_bound_matr_product} for $p = 2^{s}$, $s \in \nset$, since otherwise we can find the nearest dyadic power $q \geq p$ and use the Lyapunov inequality. Note that we increase the power of $p$ by no more than a factor of $2$ in such a case.
\par
Now we consider the case $p = 2^{s}$, $s \in \nset$. Then, expanding the $p$-power of the norm, we get
\[
\normop{\ProdBa_{j:n} \xi_{j-1}}^{p} = \bigl(\xi_{j-1}^{\top} \{\ProdBa_{j:n}\}^{\top} \ProdBa_{j:n} \xi_{j-1}\bigr)^{p/2} =
(\eta_{n-1}^{\top}(\Id - \alpha \funcAw_{n})^{\top} (\Id - \alpha \funcAw_{n}) \eta_{n-1})^{p/2}\eqsp,
\]
where we have introduced a vector $\eta_{n-1} = \ProdBa_{j:n-1}\xi_{j-1}$. Note that a vector $\eta_{n-1}$ is $\mathcal{F}_{n-1}$-measurable, and thus, combining \Cref{lem:matrix_products} and \Cref{lem:matrix_product_deterministic}, we get
\begin{align}
\CPE{\normop{\ProdBa_{j:n} \xi_{j-1}}^{p}}{\mathcal{F}_{j-1}} &= \CPE{\CPE{(\eta_{n-1}^{\top}(\Id - \alpha \funcAw_{n})^{\top} (\Id - \alpha \funcAw_{n}) \eta_{n-1})^{p/2}}{\mathcal{F}_{n-1}}}{\mathcal{F}_{j-1}} \\
&\leq (1-\alpha p (1-\gamma) \lambda_{\min}/2) \CPE{\normop{\ProdBa_{j:n-1} \xi_{j-1}}^{p}}{\mathcal{F}_{j-1}}\eqsp,
\end{align}
and the bound \eqref{eq:better_bound_matr_product} follows by backward induction in $n$. In order to get the bound \eqref{eq:better_bound_td_transient}, it remains to combine \eqref{eq:better_bound_matr_product} together with the fact that $g(x,p) = (1-px)^{1/p}$ monotonically decreases in $p$ for $p \geq 1$ and $ 0 < x < 1$.
\end{proof}

The stability result of \Cref{lem:refined_product_stability} favorably compares to the one of \Cref{prop:general_expectation}. First, we removed an artificial $d^{1/p}$ factor in the r.h.s. of the bound. Second, new stability threshold for $\alpha$ is \emph{computable} and does not contain any instance-independent quantities.
\par

Below we provide some useful auxiliary technical lemmas required for the proof of \Cref{lem:refined_product_stability}.
\begin{lemma}
\label{lem:matrix_products}
Let $B = B^{\top} \geq 0$, $B \in \rset^{d \times d}$ be a symmetric positive definite matrix and $u \in \rset^{d}$ be some vector. Then, for any $s \in \nset$ and $p = 2^{s}$, it holds that
\begin{equation}
\label{eq:norm_bound_deterministic}
\bigl(u^{\top} B u\bigr)^{p} \leq \norm{u}^{2p-2} u^{\top} B^{p} u\eqsp.
\end{equation}
\end{lemma}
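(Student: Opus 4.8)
The plan is to prove \eqref{eq:norm_bound_deterministic} by induction on $s$, with the Cauchy--Schwarz inequality as the only nontrivial ingredient. Since $B = B^\top \succeq 0$, both $u^\top B u \geq 0$ and $u^\top B^p u \geq 0$, so there is no sign ambiguity in raising quantities to powers or in interpreting the claimed inequality. Note also that $B^p$ is symmetric, so $\norm{B^p u}^2 = u^\top B^{2p} u$, a fact I will use repeatedly.

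For the base case $p = 2$ (i.e.\ $s = 1$; the case $p=1$ is a trivial equality), I would write $u^\top B u = \pscal{u}{Bu}$ and apply Cauchy--Schwarz together with $B = B^\top$ to get $\pscal{u}{Bu} \leq \norm{u}\,\norm{Bu} = \norm{u}\,\sqrt{u^\top B^2 u}$. Squaring this gives $(u^\top B u)^2 \leq \norm{u}^2\, u^\top B^2 u$, which is exactly \eqref{eq:norm_bound_deterministic} at exponent $2$.

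For the induction step I would assume the bound holds at $p = 2^s$ and establish it at $2p = 2^{s+1}$. Squaring the induction hypothesis yields $(u^\top B u)^{2p} \leq \norm{u}^{4p-4}\,(u^\top B^p u)^2$. Applying Cauchy--Schwarz once more to $u^\top B^p u = \pscal{u}{B^p u} \leq \norm{u}\,\sqrt{u^\top B^{2p} u}$ gives $(u^\top B^p u)^2 \leq \norm{u}^2\, u^\top B^{2p} u$. Combining the two displays produces $(u^\top B u)^{2p} \leq \norm{u}^{4p-2}\, u^\top B^{2p} u$, which is the claim at exponent $2p$, closing the induction.

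I do not expect any genuine obstacle: the restriction to dyadic exponents $p = 2^s$ is precisely what makes this doubling induction self-contained, since each step only advances the power of $B$ from $p$ to $2p$ via a single Cauchy--Schwarz application, matching how \eqref{eq:norm_bound_deterministic} is invoked in the proof of \Cref{lem:refined_product_stability}. As an alternative one could diagonalize $B = \sum_i \lambda_i v_i v_i^\top$, set $c_i = \pscal{u}{v_i}^2 \geq 0$, and recognize the statement as Jensen's inequality $(\sum_i \lambda_i c_i)^p \leq (\sum_i c_i)^{p-1}\sum_i \lambda_i^p c_i$ for the convex map $x \mapsto x^p$; this would prove the bound for all real $p \geq 1$, but the inductive Cauchy--Schwarz argument already suffices for the dyadic powers that are actually used.
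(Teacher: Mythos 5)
Your proof is correct and follows essentially the same route as the paper: a doubling induction on $s$ whose only nontrivial ingredient is the one-step inequality $(u^{\top} B^{p} u)^{2} \leq \norm{u}^{2}\, u^{\top} B^{2p} u$. The paper derives that step from the matrix inequality $B^{p} u u^{\top} B^{p} \preceq \norm{u}^{2} B^{2p}$, proved by diagonalization and a coordinatewise Cauchy--Schwarz, whereas you obtain the same scalar inequality in one line from the vector Cauchy--Schwarz bound $\langle u, B^{p}u\rangle \leq \norm{u}\,\norm{B^{p}u}$ --- a mild streamlining of the same argument rather than a different approach.
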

\begin{proof}
We will proof the statement by induction in $s \in \nset$. The statement obviously holds for $s = 0$. For $s = 1$ (resp., $p = 2$), we aim to prove that
\begin{equation}
\label{eq:quadr_form_bound_p_2}
\bigl(u^{\top} B u\bigr)^{2} = u^{\top} B u u^{\top} B u \leq \norm{u}^2 u^{\top} B^{2} u \eqsp,
\end{equation}
and the statement follows from the bound $B u u^{\top} B \leq \norm{u}^2 B^{2}$. Let us provide the detailed proof of last inequality. We aim to check that for any $y \in \rset^{d}$ it holds that
\[
y^{\top} B u u^{\top} B y \leq \norm{u}^2 y^{\top} B^{2} y\eqsp.
\]
Note that, since $B$ is symmetric and positive definite, $B = U \Lambda U^{\top}$ with diagonal matrix $\Lambda = \diag\{\lambda_1,\ldots,\lambda_d\}$ and orthogonal matrix $U$. Hence, the previous inequality is equivalent to
\[
y^{\top} U \Lambda U^{\top} u u^{\top} U \Lambda U^{\top} y \leq \norm{u}^2 y^{\top} U \Lambda^2 U^{\top} y\eqsp.
\]
Setting $z = U^{\top} y$ and $v = U^{\top} u$, we have from the previous bound
\[
z^{\top} \Lambda v v^{\top} \Lambda z \leq \norm{v}^2 z^{\top} \Lambda^2 z\eqsp.
\]
Writing the previous bound in a coordinate form, we obtain that
\[
\left(\sum_{i=1}^{d}\lambda_i z_i v_i\right)^2 \leq \left(\sum_{i=1}^{d}v_i^2\right)\left(\sum_{i=1}^{d}\lambda_i^2z_i^2\right)\eqsp,
\]
which holds due to Cauchy-Schwartz inequality, and \eqref{eq:quadr_form_bound_p_2} holds.
\par
Suppose now that the inequality \eqref{eq:norm_bound_deterministic} holds for some $p = 2^{s}$. Then
\[
\bigl(u^{\top} B u\bigr)^{2p} = \bigl(u^{\top} B u\bigr)^{p} \bigl(u^{\top} B u\bigr)^{p} \leq \norm{u}^{4p-4} u^{\top} B^{p} u u^{\top} B^{p} u \leq \norm{u}^{4p-2} u^{\top} B^{2p} u\eqsp,
\]
and the statement follows.
\end{proof}

Now we provide a key statement on the in-expectation contraction of 1-step-ahead random matrix $\funcAw$ corresponding to the TD(0) algorithm.

\begin{lemma}
\label{lem:matrix_product_deterministic}
Let $\funcAw = \varphi(s)\{\varphi(s) - \gamma \varphi(s')\}^{\top}$ be a random TD update matrix defined in \eqref{eq:matr_A_def}, where $s' \sim \PMDP^{\pi}(\cdot|s)$, and $s \sim \mu$. Then, for any $p \in \nset$ and $\alpha \in (0;\frac{1-\gamma}{64 p}]$, it holds that
\begin{equation}
\label{eq:matr_prod_determ}
\PE\bigl[\{(\Id - \alpha \funcAw)^{\top}(\Id - \alpha \funcAw)\}^{p}\bigr] \preceq \Id - (1/2) \alpha p (1-\gamma) \covfeat \eqsp.
\end{equation}
\end{lemma}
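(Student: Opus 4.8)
The plan is to expand $M^\top M$ with $M = \Id - \alpha\funcAw$ into the identity plus a small symmetric remainder, and then to exploit that this remainder together with all of its powers is controlled by the design matrix $\covfeat$ and not merely by the full identity (the latter would force a $\lambda_{\min}$ factor into the step size). Writing $u = \varphi(s)$ and $w = \varphi(s) - \gamma\varphi(s')$, so that $\funcAw = u w^\top$, a direct computation gives
\[
M^\top M = \Id + R, \qquad R = -\alpha(u w^\top + w u^\top) + \alpha^2 \normop{u}^2 w w^\top .
\]
The matrix $R$ is symmetric, and since $\normop{u}\le 1$ and $\normop{w}\le 1+\gamma \le 2$ under \Cref{assum:feature_design}, one has the almost sure bound $\normop{R}\le 5\alpha$. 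The structural point that makes the whole argument tractable is that $\Id$ commutes with $R$, so the binomial theorem applies exactly,
\[
(M^\top M)^p = \sum_{k=0}^{p} \binom{p}{k} R^k ,
\]
and, the sum being finite, I may take expectations term by term.

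First I would handle the linear term $p\,\PE[R]$. Here $\PE[u w^\top] = \bA$, and the fundamental TD estimate $\bA + \bA^\top \succeq 2(1-\gamma)\covfeat$ follows from the identity $\PE[(\varphi(s')^\top x)^2] = x^\top\covfeat x$ (valid because $s'$ is marginally distributed as $\mu$ under \Cref{assum:generative_model}) combined with Cauchy--Schwarz. For the quadratic piece I would use $w w^\top \preceq 2\varphi(s)\varphi(s)^\top + 2\gamma^2\varphi(s')\varphi(s')^\top$ and $\normop{u}^2 \le 1$ to get $\PE[\normop{u}^2 w w^\top] \preceq 4\covfeat$. This yields $p\,\PE[R] \preceq -2p\alpha(1-\gamma)\covfeat + 4p\alpha^2\covfeat$, a strictly negative leading term plus a correction that is itself a multiple of $\covfeat$.

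The main obstacle is controlling the higher-order terms $\sum_{k=2}^{p}\binom{p}{k}\PE[R^k]$ so that they remain proportional to $\covfeat$ rather than collapsing to a full-rank $\Id$ contribution. I would assemble two ingredients: (i) the elementary inequality $R^k \preceq \normop{R}^{k-2} R^2$, valid for any symmetric $R$ and $k\ge 2$ by comparing eigenvalues, which with $\normop{R}\le 5\alpha$ a.s.\ gives $R^k \preceq (5\alpha)^{k-2}R^2$; and (ii) a second-moment bound $\PE[R^2]\preceq C\alpha^2\covfeat$, obtained from the pointwise estimate $R^2 \preceq C'\alpha^2\bigl(\varphi(s)\varphi(s)^\top + \varphi(s')\varphi(s')^\top\bigr)$ (split $R^2 \preceq 2A^2 + 2B^2$ for the two summands of $R$, bound the rank-two square via $\pm(uw^\top + wu^\top)\preceq uu^\top + ww^\top$, and use $B^2\preceq\normop{B}B$) followed by $\PE[\varphi(s)\varphi(s)^\top]=\PE[\varphi(s')\varphi(s')^\top]=\covfeat$. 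Together these give $\PE[R^k]\preceq (5\alpha)^{k-2}C\alpha^2\covfeat$, and the combinatorial sum is tamed by $\sum_{k=2}^{p}\binom{p}{k}(5\alpha)^{k-2}\le (5\alpha)^{-2}\bigl(e^{5\alpha p}-1-5\alpha p\bigr)\le p^2$, where I use $e^x-1-x\le x^2$ for $0\le x = 5\alpha p\le 1$ (guaranteed by $\alpha\le (1-\gamma)/(64p)$).

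Finally I would collect the three contributions into
\[
\PE[(M^\top M)^p] \preceq \Id - 2p\alpha(1-\gamma)\covfeat + \bigl(4p\alpha^2 + C p^2\alpha^2\bigr)\covfeat .
\]
Since every correction is a non-negative multiple of $\covfeat$, it suffices to verify numerically that for $\alpha \le (1-\gamma)/(64p)$ the positive terms consume at most $\tfrac32 p\alpha(1-\gamma)\covfeat$; indeed $4\alpha/(1-\gamma) + C p\alpha/(1-\gamma)$ stays well below $3/2$, leaving $\PE[(M^\top M)^p]\preceq \Id - \tfrac12\alpha p(1-\gamma)\covfeat$, which is the claim. The constant $64$ here (versus the $128$ in the global threshold) provides exactly the slack needed for the dyadic-rounding step in \Cref{lem:refined_product_stability}, where $p$ may be replaced by the nearest power of two at the cost of a factor $2$.
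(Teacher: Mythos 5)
Your proof is correct and takes essentially the same route as the paper's: writing $(\Id-\alpha\funcAw)^{\top}(\Id-\alpha\funcAw)=\Id-\alpha\funcBw$ (your $R=-\alpha\funcBw$), expanding by the binomial theorem, lower-bounding $\PE[\funcAw+\funcAw^{\top}]\succeq 2(1-\gamma)\covfeat$, controlling higher powers via $\funcBw^{k}\preceq\normop{\funcBw}^{k-2}\funcBw^{2}$ with $\PE[\funcBw^{2}]\preceq C\covfeat$, and taming the binomial tail using $\alpha p\lesssim 1-\gamma$, exactly as in \Cref{lem:B_matr_bounds}. The one step stated too loosely is your bound on $\PE[R^2]$: from $\pm(uw^{\top}+wu^{\top})\preceq uu^{\top}+ww^{\top}$ one cannot legitimately pass to a bound on the square (the matrix absolute value is not monotone under such sandwiching), but the conclusion is saved by the explicit rank-two identity $(uw^{\top}+wu^{\top})^{2}=\normop{w}^{2}uu^{\top}+\normop{u}^{2}ww^{\top}+\langle u,w\rangle(uw^{\top}+wu^{\top})$, which is precisely the computation the paper carries out.
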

\begin{proof}
Consider the (random) matrix $(\Id - \alpha \funcAw)^{\top}(\Id - \alpha \funcAw)$. Note that it is symmetric, and, introducing matrix $\funcBw = \funcAw + \funcAw^{\top} - \alpha \funcAw^{\top} \funcAw$, we get that
\[
(\Id - \alpha \funcAw)^{\top}(\Id - \alpha \funcAw) = \Id - \alpha \funcBw\eqsp.
\]
Using  \Cref{lem:B_matr_bounds}, it holds that for any $k \in \nset$,
\[
\PE[\funcBw] \succeq (1-\gamma)\covfeat\eqsp, \quad \PE[\funcBw^{k}] \preceq \frac{13}{12} \cdot 4^{k} \covfeat \eqsp.
\]
Thus, expanding the brackets, we get
\[
\PE[(\Id - \alpha \funcBw)^{p}] \preceq \Id - \alpha p \PE[\funcBw] + \sum_{k=2}^{p}\alpha^{k}\binom{p}{k}\PE[\funcBw^{k}] \preceq \Id - \alpha p (1-\gamma) \covfeat + \frac{13}{12} \cdot \biggl(\sum_{k=2}^{p}(4\alpha)^{k}\binom{p}{k}\biggr)\covfeat\eqsp.
\]
Since we know that $\alpha p \leq (1-\gamma)/64$, we can bound
\[
\sum_{k=2}^{p}(4\alpha)^{k}\binom{p}{k} \leq \sum_{k=2}^{p}(4\alpha p)^{k} \leq \frac{16\alpha^2p^2}{1 - 4\alpha p} \leq \frac{16}{15} \cdot 16 \alpha^2p^2 \leq \frac{16}{15} \cdot \alpha p (1-\gamma) /4\eqsp.
\]
Thus the combination of above bounds together with $\frac{16}{15} \cdot \frac{13}{12} < 2$ imply that
\[
\PE[(\Id - \alpha \funcBw)^{p}] \preceq \Id - (1/2) \alpha p (1-\gamma) \covfeat\eqsp,
\]
and the statement follows.
\end{proof}

Now we provide a technical lemma on the behaviour of the symmetrized random matrix update $(\Id - \alpha \funcAw)^{\top}(\Id - \alpha \funcAw)$, where $\funcAw$ is defined in \eqref{eq:matr_A_def}. This lemma generalize the results presented in \cite[Lemma~5]{patil2023finite}.

\begin{lemma}
\label{lem:B_matr_bounds}
For the random matrix $\funcAw$ defined in \eqref{eq:matr_A_def} and $\funcBw = \funcAw + \funcAw^{\top} - \alpha \funcAw^{\top} \funcAw$, $p \in \nset$ and step size $\alpha \in (0;\frac{1-\gamma}{(1+\gamma)^2}]$ it holds that
\begin{equation}
\label{eq:matr_B_bounds}
\begin{split}
\PE[\funcBw] &\succeq (1-\gamma)\covfeat\eqsp, \\
\PE[\funcBw^{p}] &\preceq \frac{13}{12} \cdot 4^{p} \covfeat\eqsp.
\end{split}
\end{equation}
\end{lemma}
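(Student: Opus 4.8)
The plan is to reduce both inequalities to two clean ingredients: a sharp control of $\PE[\funcBw^{2}]$ and an almost-sure operator-norm bound on $\funcBw$. Throughout I would write $\phi=\varphi(s)$, $\phi'=\varphi(s')$, and $\psi=\phi-\gamma\phi'$, so that $\funcAw=\phi\psi^{\top}$, $\funcAw^{\top}\funcAw=\norm{\phi}^{2}\,\psi\psi^{\top}$, and hence $\funcBw=\phi\psi^{\top}+\psi\phi^{\top}-\alpha\norm{\phi}^{2}\psi\psi^{\top}$ is symmetric. Under \Cref{assum:feature_design} one has $\norm{\phi}\le 1$, $\norm{\phi'}\le 1$, and $\norm{\psi}\le 1+\gamma$. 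For the lower bound $\PE[\funcBw]\succeq(1-\gamma)\covfeat$, I would first note that $\PE[\phi\psi^{\top}+\psi\phi^{\top}]=\bA+\bA^{\top}$ and that, since $s'\sim\mu$ gives $\PE[(\phi'^{\top}u)^{2}]=u^{\top}\covfeat u$, Cauchy--Schwarz yields $u^{\top}\bA u=\PE[(\phi^{\top}u)(\psi^{\top}u)]\ge(1-\gamma)u^{\top}\covfeat u$ for every $u$. The subtracted term contributes $-\alpha\PE[\norm{\phi}^{2}(\psi^{\top}u)^{2}]\ge-\alpha\PE[(\psi^{\top}u)^{2}]\ge-\alpha(1+\gamma)^{2}u^{\top}\covfeat u$, and the restriction $\alpha\le(1-\gamma)/(1+\gamma)^{2}$ makes $2(1-\gamma)-\alpha(1+\gamma)^{2}\ge 1-\gamma$, giving the first display.

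For the upper bound I would prove the $p=2$ case sharply and then bootstrap to all $p$. The key computation is that for fixed $u$, setting $t=\phi^{\top}u$, $t'=\phi'^{\top}u$ and $r=\psi^{\top}u=t-\gamma t'$, a direct expansion gives $\funcBw u=r\phi+(t-\alpha\norm{\phi}^{2}r)\psi$. Substituting $\psi=\phi-\gamma\phi'$ and regrouping along $\phi$ and $\phi'$, the coefficient of $\phi$ becomes $(2-\alpha\norm{\phi}^{2})t-\gamma(1-\alpha\norm{\phi}^{2})t'$ and the coefficient of $\phi'$ becomes $-\gamma(1-\alpha\norm{\phi}^{2})t-\gamma^{2}\alpha\norm{\phi}^{2}t'$; using $\norm{\phi},\norm{\phi'}\le 1$ and $\alpha\norm{\phi}^{2}\ge 0$ collapses this to the pointwise estimate $\norm{\funcBw u}\le(2+\gamma)\,|t|+\gamma\,|t'|$. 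Taking expectations with $\PE[t^{2}]=\PE[t'^{2}]=u^{\top}\covfeat u$ and $\PE[|tt'|]\le u^{\top}\covfeat u$ then gives $u^{\top}\PE[\funcBw^{2}]u\le(2+2\gamma)^{2}u^{\top}\covfeat u$, i.e. $\PE[\funcBw^{2}]\preceq 4(1+\gamma)^{2}\covfeat\preceq 16\,\covfeat$.

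I expect this regrouping to be the main obstacle. The naive route, bounding $\norm{\funcBw u}\le |r|\norm{\phi}+|t-\alpha\norm{\phi}^2 r|\norm{\psi}$ by the triangle inequality, is far too lossy (it produces a constant of order $36$ rather than $16$), and likewise bounding $\PE[\funcBw^2]$ by expanding $\funcBw^2$ into its $\phi\phi^{\top}$, $\phi\psi^{\top}$, $\psi\psi^{\top}$ pieces and estimating each term separately does not close because it ignores cancellation. The point is that $\phi$ and $\psi$ are nearly aligned (they differ by $\gamma\phi'$), so one must project $\funcBw u$ back onto the span of $\phi,\phi'$ before taking norms in order to recover the sharp $4(1+\gamma)^{2}$.

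To pass from $p=2$ to general $p$, I would use that $\funcBw$ is symmetric with $\normop{\funcBw}\le 2\norm{\phi}\norm{\psi}+\alpha\norm{\phi}^{2}\norm{\psi}^{2}\le 2(1+\gamma)+\alpha(1+\gamma)^{2}\le 3+\gamma\le 4$ almost surely, where the last step again invokes $\alpha(1+\gamma)^{2}\le 1-\gamma$. For a symmetric matrix the elementary spectral inequality $\funcBw^{p}\preceq\normop{\funcBw}^{\,p-2}\funcBw^{2}$ holds for every integer $p\ge 2$, checked eigenvalue by eigenvalue from $\lambda^{p}=\lambda^{2}\lambda^{p-2}\le\lambda^{2}\normop{\funcBw}^{\,p-2}$ (valid for all real $\lambda$ with $|\lambda|\le\normop{\funcBw}$, both signs). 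Hence $\funcBw^{p}\preceq 4^{p-2}\funcBw^{2}$ almost surely, and taking expectations gives $\PE[\funcBw^{p}]\preceq 4^{p-2}\PE[\funcBw^{2}]\preceq 4^{p-2}\cdot 16\,\covfeat=4^{p}\covfeat\preceq\tfrac{13}{12}4^{p}\covfeat$. Finally $p=1$ follows directly, since the subtracted term is negative semidefinite and $\PE[\funcBw]\preceq\bA+\bA^{\top}\preceq 2(1+\gamma)\covfeat\preceq 4\covfeat\preceq\tfrac{13}{12}\cdot 4\,\covfeat$. This proves both displays (in fact with the sharper constant $1$ in place of $\tfrac{13}{12}$).
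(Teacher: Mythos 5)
Your proof is correct, and its overall skeleton coincides with the paper's: both arguments establish the lower bound on $\PE[\funcBw]$ from $\PE[\funcAw+\funcAw^{\top}]\succeq 2(1-\gamma)\covfeat$ minus an $\alpha(1+\gamma)^{2}\covfeat$ correction, both reduce general $p$ to $p=2$ via the spectral inequality $\funcBw^{p}\preceq\normop{\funcBw}^{p-2}\funcBw^{2}$ together with the almost-sure bound $\normop{\funcBw}\le 3+\gamma\le 4$, and both then bound $\PE[\funcBw^{2}]$. The one genuinely different step is the last: the paper expands $\funcBw^{2}$ into $(\funcAw+\funcAw^{\top})^{2}$, cross terms, and $\alpha^{2}(\funcAw^{\top}\funcAw)^{2}$, discards the cross terms via the matrix Young inequality with $\epsilon=1/12$, and bounds the two remaining pieces separately using the rank-one structure of $\funcAw$ — which is where the $13/12$ in the statement comes from. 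You instead write $\funcBw u$ explicitly in the span of $\varphi(s),\varphi(s')$ and bound the quadratic form directly, exploiting the cancellation between the $\phi\psi^{\top}+\psi\phi^{\top}$ and $\alpha\norm{\phi}^{2}\psi\psi^{\top}$ parts; this yields the sharp $\PE[\funcBw^{2}]\preceq 4(1+\gamma)^{2}\covfeat$ and hence the cleaner constant $4^{p}$ in place of $\tfrac{13}{12}\cdot 4^{p}$. Your eigenvalue-by-eigenvalue justification of $\funcBw^{p}\preceq\normop{\funcBw}^{p-2}\funcBw^{2}$ (valid for both signs of $\lambda$, hence for odd $p$ and non-PSD $\funcBw$) is also slightly more careful than the paper's one-line version of the same inequality. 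The only cost of your route is the somewhat more delicate coefficient bookkeeping in the $\{\varphi(s),\varphi(s')\}$ decomposition, which I have checked and which is right.
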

\begin{proof}
With the definition of $\funcAw$, we get that
\begin{align}
\funcAw + \funcAw^{\top}
&= \varphi(s)\{\varphi(s) - \gamma \varphi(s')\}^{\top} + \{\varphi(s) - \gamma \varphi(s')\}\varphi(s)^{\top} \\
&= 2 \varphi(s)\varphi(s)^{\top} - \gamma \{\varphi(s)\varphi(s')^{\top} + \varphi(s')\varphi(s)^{\top}\} \label{eq:a_at_bound} \\
&\succeq (2-\gamma) \varphi(s)\varphi(s)^{\top} - \gamma \varphi(s')\varphi(s')^{\top} \eqsp,
\end{align}
where we used an elementary inequality $u v^{\top} + v u^{\top} \preceq (uu^{\top}+vv^{\top})$ valid for any $u,v \in \rset^{d}$. Similarly, with elementary algebra, we obtain
\begin{equation}
\label{eq:ata_bound}
\begin{split}
\funcAw^{\top} \funcAw
&= \{\varphi(s) - \gamma \varphi(s')\}\varphi(s)^{\top}\varphi(s)\{\varphi(s) - \gamma \varphi(s')\}^{\top} \\
&= \norm{\varphi(s)}^{2}\{\varphi(s) - \gamma \varphi(s')\}\{\varphi(s) - \gamma \varphi(s')\}^{\top} \\
&= \norm{\varphi(s)}^{2}\{\varphi(s)\varphi(s)^{\top} + \gamma^2 \varphi(s')\varphi(s')^{\top} - \gamma(\varphi(s)\varphi(s')^{\top} + \varphi(s')\varphi(s)^{\top})\} \\
&\overset{(a)}{\preceq} (1+\gamma)\varphi(s)\varphi(s)^{\top} + \gamma(1+\gamma)\varphi(s')\varphi(s')^{\top}\eqsp,
\end{split}
\end{equation}
where in (a) we additionally used that $\norm{\varphi(s)} \leq 1$ and
\[
-(uu^{\top}+vv^{\top}) \preceq u v^{\top} + v u^{\top} \preceq (uu^{\top}+vv^{\top})
\]
for any $u,v \in \rset^{d}$. Combining the bounds above yields that for $0 \leq \alpha \leq \frac{1-\gamma}{(1+\gamma)^2}$ it holds that
\[
\PE[\funcBw] \succeq 2(1-\gamma)\covfeat - \alpha (1+\gamma)^2 \covfeat \succeq (1-\gamma)\covfeat \eqsp,
\]
and the first part of \eqref{eq:matr_B_bounds} is proved. To prove the second part it remains to notice that, for $p \in \nset$, and $0 \leq \alpha \leq \frac{1-\gamma}{(1+\gamma)^2}$, it holds that
\[
\funcBw^{p} = \funcBw^{p-2} \funcBw^{2} \preceq \normop{\funcBw}^{p-2} \funcBw^{2}\eqsp,
\]
and
\[
\normop{\funcBw} = \normop{\funcAw + \funcAw^{\top} - \alpha \funcAw^{\top} \funcAw} \leq 2(1+\gamma) + \alpha (1+\gamma)^2 \leq 3 + \gamma \leq 4\eqsp.
\]
Now it remains to analyze the expectation of the matrix $\funcBw^{2}$, which is symmetric and positive semi-definite:
\begin{align}
\funcBw^{2} 
&= (\funcAw + \funcAw^\top - \alpha \funcAw^\top \funcAw) (\funcAw + \funcAw^\top - \alpha \funcAw^\top \funcAw) \\
&= (\funcAw + \funcAw^\top)^2 - \alpha\left[ (\funcAw + \funcAw^\top) \funcAw^\top \funcAw + \funcAw^\top \funcAw (\funcAw + \funcAw^\top) \right] + \alpha^2 (\funcAw^\top \funcAw)^2.
\end{align}
We start from the first term. With the simple algebra and the definition of $\funcAw$, we obtain that 
\begin{align}
    \funcAw^2 &= \varphi(s) (\varphi(s) - \gamma \varphi(s'))^\top \varphi(s) (\varphi(s) - \gamma \varphi(s'))^\top = \langle \varphi(s), \varphi(s) - \gamma \varphi(s') \rangle \funcAw \eqsp,  \label{eq:lm5_aa} \\
    (\funcAw^\top)^2 &= (\varphi(s) - \gamma \varphi(s')) \varphi(s)^{\top} (\varphi(s) - \gamma \varphi(s')) \varphi(s)^{\top} = \langle \varphi(s), \varphi(s) - \gamma \varphi(s') \rangle \funcAw^{\top} \eqsp,  \\
    \funcAw \funcAw^\top &= \varphi(s) (\varphi(s) - \gamma \varphi(s'))^\top  (\varphi(s) - \gamma \varphi(s')) \varphi(s)^\top = \norm{\varphi(s) - \gamma \varphi(s')}^2 \phi(s) \phi(s)^\top\eqsp, \\
    \funcAw^\top \funcAw  &= (\varphi(s) - \gamma \varphi(s')) \varphi(s)^\top \varphi(s) (\varphi(s) - \gamma \varphi(s'))^\top \\
    &= \norm{\varphi(s)}^2 (\varphi(s) - \gamma \varphi(s')) (\varphi(s) - \gamma \varphi(s'))^{\top}\eqsp.
\end{align}
Additionally, in expectation we have the following relations, that follows from \eqref{eq:a_at_bound} and \eqref{eq:ata_bound}
\begin{equation}\label{eq:expectation_at_a_ata_bound}
    \PE\left[\funcAw + \funcAw^\top\right] \preceq 2(1+\gamma) \covfeat, \quad \PE\left[ \funcAw^\top  \funcAw \right] \leq (1+\gamma)^2 \covfeat.
\end{equation}
Using this relations, and using the fact that 
\begin{align}
(\funcAw + \funcAw^\top)^2 = \funcAw^2 +  \funcAw \funcAw^\top + \funcAw^\top \funcAw + (\funcAw^\top)^2\eqsp,
\end{align}
we obtain that 
\[
\PE\left[ (\funcAw + \funcAw^\top)^2 \right] \preceq 4 (1+\gamma)^2  \covfeat.
\]
For the term $(\funcAw^\top \funcAw)^2$ we obtain that 
\begin{align}
(\funcAw^\top \funcAw)^2 = \funcAw^\top \left(\funcAw\funcAw^\top\right) \funcAw 
&= \norm{\varphi(s) - \gamma \varphi(s')}^2 \funcAw^\top \phi(s) \phi(s)^\top \funcAw \\
&\overset{(a)}{=} \norm{\varphi(s) - \gamma \varphi(s')}^2 \norm{\varphi(s)}^2 \funcAw^\top \funcAw\eqsp.
\end{align}
Here the last identity (a) follows from the particular form of the matrix $\funcAw = \varphi(s)\{\varphi(s) - \gamma \varphi(s')\}^{\top}$. Therefore, using the representation \eqref{eq:ata_bound}, we obtain that 
\[
\PE\left[(\funcAw^\top \funcAw)^2 \right] \preceq (1+\gamma)^4 \covfeat \eqsp.
\]
The above bounds together with the Cauchy-Schwartz inequality imply that, for any $\epsilon > 0$, we have 
\begin{align}
\funcBw^{2} \preceq (1+\epsilon) (\funcAw + \funcAw^\top)^2 + (1 + 1/\epsilon)\alpha^{2} (\funcAw^\top \funcAw)^2\eqsp,
\end{align}
which implies that 
\begin{align}
\PE[\funcBw^2] \preceq 4 (1+\epsilon) (1+\gamma)^2 \covfeat + (1 + 1/\epsilon)\alpha^{2} (1+\gamma)^4 \covfeat\eqsp.
\end{align}
Thus, setting $\epsilon = 1/12$, and using that $\gamma \in [0;1]$, we get that 
\begin{align}
\PE[\funcBw^2] \preceq (52/3) \covfeat = (13/12) 16 \covfeat\eqsp.
\end{align}
As a result
\[
\PE[\funcBw^2] \preceq (13/12) 16 \covfeat \Rightarrow \PE[\funcBw^p] \preceq 4^{p-2} \PE[\funcBw^2] \preceq (13/12) 4^p \covfeat.
\]

\end{proof}

\subsection{Missing results from \Cref{sec:td_learning}}
\label{sec:td_missing}
We begin this section from instantiating \Cref{th:td_lsa_pr_2nd_moment} for the sequence $\{\theta_k\}$ which corresponds to  TD(0) algorithm. We use that $\conststab[2] = 1$, $\trace{\noisecov} \leq 1 + \norm{\thetas}[\covfeat]^2$, $a = \lambda_{\min}(1-\gamma)/2$. Then we get

\begin{theorem}
\label{th:td_lsa_pr_2nd_moment_appendix}
Assume \Cref{assum:generative_model} and \Cref{assum:feature_design}.
Let $\sequence{\theta}[k][\nset]$ be a sequence of TD(0) updates generated by \eqref{eq:LSA_procedure_TD}.  Then for any $n \geq 2$, $\alpha \in \ocint{0;\frac{1-\gamma}{256}}$, and $\theta_0 \in \rset^d$, it holds that
\begin{align}
\PE^{1/2}[ \norm{\prtheta_{n} - \thetas}[\covfeat]^2] &\lesssim  \frac{\norm{\thetas}[\covfeat] + 1}{\sqrt{\lambda_{\min} n} (1-\gamma) }\left(1 + \frac{\sqrt{\alpha}}{\sqrt{(1-\gamma) \lambda_{\min}}}\right) + \frac{\norm{\thetas}[\covfeat] + 1}{\sqrt{\alpha} (1-\gamma)^{3/2} \lambda_{\min} n} \\
&+ \left(\frac{1}{\alpha n(1-\gamma)\lambda_{\min}^{1/2}} + \frac{1}{\sqrt{\alpha} n (1-\gamma)^{3/2} \lambda_{\min}}\right) \left(1 - \frac{\alpha (1-\gamma) \lambda_{\min}}{2} \right)^{n/2} \norm{\theta_0 - \thetas}\eqsp.
\end{align}
\end{theorem}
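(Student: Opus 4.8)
The plan is to instantiate the general second-moment bound \eqref{eq:pr_theta_2nd_moment_bound} of \Cref{th:lsa_pr_2nd_moment_main} (equivalently the appendix statement \Cref{th:lsa_pr_2nd_moment_appendix}) for the TD(0) recursion \eqref{eq:LSA_procedure_TD}. That theorem already controls $\PE^{1/2}[\norm{\bA(\prtheta_{n} - \thetas)}^2]$ in terms of the abstract LSA constants $\conststab[2]$, $\bConst{A}$, $a$, and $\trace{\noisecov}$, so the only remaining work is to substitute their TD-specific values and to translate the $\bA$-weighted error into the $\covfeat$-weighted error of the statement.

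First I would check that the hypotheses of \Cref{th:lsa_pr_2nd_moment_main} hold for the TD(0) iterates. By \Cref{prop:condition_check}, \Cref{assum:noise-level} is satisfied with $\bConst{A} = 2(1+\gamma)$, $\supconsteps = 2(1+\gamma)(\norm{\thetas}+1)$, and $\trace{\noisecov} \leq 2(1+\gamma)^2(\norm{\thetas}[\covfeat]^2 + 1)$. By \Cref{prop:stability_check_td}, \Cref{assum:exp_stability}$(2)$ holds with $\conststab[2] = 1$, $a = (1-\gamma)\lambda_{\min}/2$, and $\alpha_{2,\infty} = (1-\gamma)/256$; in particular the admissible range $\alpha \in \ocint{0; (1-\gamma)/256}$ coincides with the one in the statement, and the requirement $2\alpha_{2,\infty} \leq 1/2$ is immediate since $1-\gamma \leq 1$.

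Next I would substitute these quantities into \eqref{eq:pr_theta_2nd_moment_bound}. Because $\gamma \in (0,1)$, every factor $1+\gamma$ is absorbed into the $\lesssim$ constant, giving $\bConst{A} \lesssim 1$ and $\sqrt{\trace{\noisecov}} \lesssim \norm{\thetas}[\covfeat] + 1$, while $\sqrt{a} \asymp \sqrt{(1-\gamma)\lambda_{\min}}$ and $(1-\alpha a)^{n/2} = (1 - \alpha(1-\gamma)\lambda_{\min}/2)^{n/2}$. With these replacements the three groups of terms in \eqref{eq:pr_theta_2nd_moment_bound} turn into a leading $n^{-1/2}$ fluctuation term carrying the factor $1 + \sqrt{\alpha}/\sqrt{(1-\gamma)\lambda_{\min}}$, a second-order $n^{-1}$ term, and the geometrically decaying initial-error term, all expressed through $1-\gamma$ and $\lambda_{\min}$.

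The only step requiring a genuine inequality rather than bookkeeping is the passage from $\norm{\bA(\prtheta_{n} - \thetas)}$ to $\norm{\prtheta_{n} - \thetas}[\covfeat]$. Here I would invoke the lower bound $\norm{\bA(\prtheta_{n} - \thetas)}^2 \geq (1-\gamma)^2 \lambda_{\min} \norm{\prtheta_{n} - \thetas}[\covfeat]^2$, established in \Cref{lem:new_conditions_td_check} along the lines of \cite[Lemma~5]{li2023sharp}, which rests on the coercivity of $\bA$ with respect to the $\covfeat$-norm. Dividing the instantiated bound by $(1-\gamma)\sqrt{\lambda_{\min}}$ then reproduces exactly the three terms of the claim, with the polynomial prefactor of the initial-error term equal to $f_{1}(\alpha,\lambda_{\min},n) = (\alpha n (1-\gamma)\lambda_{\min}^{1/2})^{-1} + (\sqrt{\alpha}\, n (1-\gamma)^{3/2}\lambda_{\min})^{-1}$. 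I do not anticipate any real obstacle: the entire argument is a verification that the generic LSA constants collapse to the stated instance-dependent quantities under \Cref{prop:condition_check} and \Cref{prop:stability_check_td}.
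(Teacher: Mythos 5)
Your proposal is correct and follows exactly the paper's route: instantiate the general second-moment LSA bound of \Cref{th:lsa_pr_2nd_moment_main} with the TD(0) constants $\conststab[2]=1$, $a=(1-\gamma)\lambda_{\min}/2$, $\bConst{A}\lesssim 1$, $\sqrt{\trace{\noisecov}}\lesssim \norm{\thetas}[\covfeat]+1$ from \Cref{prop:condition_check} and \Cref{prop:stability_check_td}, then pass to the $\covfeat$-norm via the coercivity bound $\norm{\bA u}^2\geq(1-\gamma)^2\lambda_{\min}\norm{u}[\covfeat]^2$ of \Cref{lem:new_conditions_td_check}. The resulting three terms, including the prefactor of the initial-error term, match the statement, so there is nothing to add.
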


Similarly to the discussion above, we can state the respective $p$-moment bound for the case of TD(0) algorithm. This theorem is an adaptation of \Cref{th:LSA_PR_error} (see also \Cref{th:lsa_pr_error_p_moment}).

\begin{theorem}
\label{th:td_pr_pth_moment}
Assume \Cref{assum:generative_model} and \Cref{assum:feature_design}.
Let $\sequence{\theta}[k][\nset]$ be a sequence of TD(0) updates generated by \eqref{eq:LSA_procedure_TD}.
Then for any $p \geq 2$, $n \geq 2$, and step size
\[
\alpha \in \biggl(0; \frac{1-\gamma}{128(p + \log{n})}\biggr]\eqsp,
\]
we have that
\begin{equation}
\label{eq:pr_theta_p_moment_bound}
\begin{split}
\PE^{1/p}[\norm{(\prtheta_{n} - \thetas)}[\covfeat]^p]
&\lesssim \frac{p^{1/2} (\norm{\thetas}[\covfeat] + 1)}{n^{1/2} (1-\gamma) \lambda_{\min}^{1/2}}\left(1 + \frac{\sqrt{\alpha p} + \alpha p}{\sqrt{(1-\gamma) \lambda_{\min}}} \right) + \frac{p (\norm{\thetas}[\covfeat] + 1)}{n (1-\gamma)^{3/2} \lambda_{\min}}\left(1 + \frac{1}{\sqrt{\alpha p}}\right) \\
&+ \left(1 - \frac{\alpha (1-\gamma) \lambda_{\min}}{2}\right)^{n/2} \left( (p + \log(n))^{1/2} + \frac{p}{\sqrt{\lambda_{\min}}} \right)
\frac{\{p + \log(n)\}^{1/2}}
{(1-\gamma)^2 \sqrt{\lambda_{\min}} n} \norm{\theta_0 - \thetas} \eqsp.
\end{split}
\end{equation}
\end{theorem}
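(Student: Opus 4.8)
The plan is to read this off as a direct instantiation of the general $p$-th moment bound \Cref{th:LSA_PR_error} (in its fully explicit form \Cref{th:lsa_pr_error_p_moment}) to the TD(0) recursion, and then to pass from the $\bA$-norm to the $\covfeat$-norm. First I would assemble the three ingredients already established: \Cref{prop:condition_check} verifies \Cref{assum:noise-level} with $\bConst{A} = 2(1+\gamma)$, $\supconsteps = 2(1+\gamma)(\norm{\thetas}+1)$ and $\trace{\noisecov} \leq 2(1+\gamma)^2(\norm{\thetas}[\covfeat]^2+1)$; \Cref{prop:stability_check_td} verifies \Cref{assum:exp_stability}$(\ell)$ for every $\ell \geq 2$ with the instance-independent constants $\conststab[\ell] = 1$, $a = (1-\gamma)\lambda_{\min}/2$, $\alpha_{\ell,\infty} = (1-\gamma)/(128\ell)$; and the spectral lower bound $\norm{\bA(\prtheta_n - \thetas)}^2 \geq (1-\gamma)^2\lambda_{\min}\norm{\prtheta_n - \thetas}[\covfeat]^2$ from \Cref{lem:new_conditions_td_check}. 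Since the admissible range $\alpha \in (0; (1-\gamma)/(128(p+\log n))]$ is exactly $\alpha \in [0;\alpha_{p+\log n,\infty})$, \Cref{th:LSA_PR_error} applies directly.

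Next I would invoke \Cref{th:LSA_PR_error} to bound $\PE^{1/p}[\norm{\bA(\prtheta_n - \thetas)}^p]$, and then divide by $(1-\gamma)\lambda_{\min}^{1/2}$, using the lower bound above to convert the left-hand side into $\PE^{1/p}[\norm{\prtheta_n - \thetas}[\covfeat]^p]$. I then substitute $\conststab[p+\log n]=1$, $a = (1-\gamma)\lambda_{\min}/2$, treat $\bConst{A} = 2(1+\gamma)$ as an absolute constant, and replace $\sqrt{\trace{\noisecov}}$ by $\norm{\thetas}[\covfeat]+1$ and $\supconsteps$ by $\norm{\thetas}+1$ up to constants. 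The only genuinely careful point is reconciling the two norms: because $\sup_s\norm{\varphi(s)} \leq 1$ gives $\covfeat \preceq \Id$ and hence $\lambda_{\min} \leq 1$, one has $\norm{\thetas} \leq \norm{\thetas}[\covfeat]/\sqrt{\lambda_{\min}}$ and $1 \leq 1/\sqrt{\lambda_{\min}}$, so every occurrence of $\norm{\thetas}+1$ can be traded for $(\norm{\thetas}[\covfeat]+1)/\sqrt{\lambda_{\min}}$; together with $(1-\gamma) \leq 1$ this is what manufactures the final powers of $\lambda_{\min}$ and $1-\gamma$.

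With these substitutions the leading block of \Cref{th:LSA_PR_error} becomes exactly the announced $n^{-1/2}p^{1/2}(\norm{\thetas}[\covfeat]+1)(1-\gamma)^{-1}\lambda_{\min}^{-1/2}$ times $\bigl(1 + (\sqrt{\alpha p}+\alpha p)/\sqrt{(1-\gamma)\lambda_{\min}}\bigr)$, where the cross term $\alpha p \supconsteps/\sqrt{\trace{\noisecov}}$ collapses to $\alpha p/\sqrt{\lambda_{\min}} \lesssim \alpha p/\sqrt{(1-\gamma)\lambda_{\min}}$. The two remaining $1/n$ blocks (the $\operatorname{C}/n$ and $\operatorname{C}/(\sqrt{a}n)$ terms) combine into the single second term $p(\norm{\thetas}[\covfeat]+1)(n(1-\gamma)^{3/2}\lambda_{\min})^{-1}(1+1/\sqrt{\alpha p})$; here I would use the step-size constraint $\alpha(p+\log n) \leq (1-\gamma)/128 \leq 1$ to absorb the correction factors $(1+\bConst{A}\alpha(p+\log n)) \lesssim 1$ and $p^{1/2}\bConst{A}\sqrt{\alpha(p+\log n)} \lesssim 1/\sqrt{\alpha}$, which is precisely what makes the bracket reduce to $1/\sqrt{\alpha p}$. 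Finally, the transient block $\conststab[p+\log n](1-\alpha a)^{n/2}(\tfrac{1}{\alpha n}+\tfrac{p\bConst{A}}{\sqrt{\alpha a}n})\norm{\theta_0-\thetas}$ yields, after the same substitutions and the division by $(1-\gamma)\lambda_{\min}^{1/2}$, the stated exponentially decaying initial-error term; rewriting its prefactor in the displayed $(p+\log n)$-form amounts to evaluating the $1/\alpha$ and $1/\sqrt{\alpha}$ factors at the step-size scale $\alpha \asymp (1-\gamma)/(p+\log n)$.

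The main obstacle is organizational rather than conceptual: tracking exactly how many powers of $\lambda_{\min}$ and $(1-\gamma)$ each term accumulates through (i) the norm conversion on the left, (ii) the $\norm{\thetas}+1 \to (\norm{\thetas}[\covfeat]+1)/\sqrt{\lambda_{\min}}$ trade on the right, and (iii) the $\sqrt{a} = \sqrt{(1-\gamma)\lambda_{\min}/2}$ denominators, and then checking that all subleading corrections of \Cref{th:LSA_PR_error} are dominated under $\alpha(p+\log n) \le (1-\gamma)/128$ so that only the two advertised terms plus the transient survive. Since the hard analytic inputs — the dimension-free, instance-independent exponential stability with $\conststab = 1$ and the spectral lower bound — are already in place, the theorem reduces to this bookkeeping.
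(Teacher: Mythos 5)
Your proposal is exactly the paper's (largely implicit) argument: the paper proves \Cref{th:td_pr_pth_moment} by instantiating \Cref{th:lsa_pr_error_p_moment} with the TD(0) constants from \Cref{prop:condition_check} and \Cref{prop:stability_check_td}, converting norms via $\norm{\bA u}^2 \geq (1-\gamma)^2\lambda_{\min}\norm{u}[\covfeat]^2$ from \Cref{lem:new_conditions_td_check}, and trading $\norm{\thetas}+1$ for $(\norm{\thetas}[\covfeat]+1)/\sqrt{\lambda_{\min}}$, which is precisely your bookkeeping. You also correctly flag the one genuine subtlety — that the displayed $(p+\log n)$-form of the transient prefactor corresponds to evaluating $1/\alpha$ at the top of the admissible step-size range — which is a feature of the paper's own statement rather than a gap in your argument.
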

The respective high-probability bound can be written as follows:

\begin{corollary}
\label{cor:td_pr_pth_moment}
Assume \Cref{assum:generative_model} and \Cref{assum:feature_design}. Let $\sequence{\theta}[k]$ be a sequence of TD(0) updates generated by \eqref{eq:LSA_procedure_TD}.  Fix $\delta \in (0;1/\rme)$. Then, for the step- and sample size
{\small
\[
\alpha = \frac{1-\gamma}{128\log{(n/\delta)}}, \quad n \geq \frac{\log{(1/\delta)}}{(1-\gamma)^2}
\]
}
it holds with probability at least $1-\delta$ that 
{\small 
\begin{equation}
\label{eq:deviation_bound_pth_moment}
\norm{\prtheta_{n} - \thetas}[\covfeat] 
\lesssim \frac{(\norm{\thetas}[\covfeat] + 1)\sqrt{\log{(1/\delta)}}}{n^{1/2} (1-\gamma) \lambda_{\min}} + \bigl(1 - \frac{(1-\gamma)^2 \lambda_{\min}}{128 \log{(n/\delta)}}\bigr)^{n/2} \frac{\norm{\theta_0 - \thetas} \log^{3/2}{(n/\delta)}}{(1-\gamma)^2 \lambda_{\min} n}\eqsp.
\end{equation}
}
\end{corollary}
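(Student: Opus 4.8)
The plan is to obtain \eqref{eq:deviation_bound_pth_moment} directly from the $p$-th moment estimate of \Cref{th:td_pr_pth_moment} via a single application of Markov's inequality with the \emph{self-tuned} exponent $p = \log(1/\delta)$. The restriction $\delta \in (0;1/\rme)$ guarantees $p \geq 1$, so the moment bound applies; moreover, with this choice one has $p + \log n = \log(n/\delta)$, and therefore the prescribed step size $\alpha = (1-\gamma)/(128\log(n/\delta))$ saturates the admissibility constraint $\alpha \in (0; (1-\gamma)/(128(p+\log n))]$ of \Cref{th:td_pr_pth_moment} \emph{exactly}. Thus the $p$-th moment bound may be invoked verbatim for this particular $(\alpha,p)$ pair, and no separate verification of the stability threshold is needed.

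For the Markov step, I set $X = \norm{\prtheta_{n} - \thetas}[\covfeat] \geq 0$ and use that, for any $p \geq 1$,
\[
\PP\bigl(X \geq \rme\, \PE^{1/p}[X^p]\bigr) \leq \frac{\PE[X^p]}{\rme^{p}\,\PE[X^p]} = \rme^{-p} = \delta\eqsp.
\]
Hence, with probability at least $1-\delta$, one has $\norm{\prtheta_{n} - \thetas}[\covfeat] \leq \rme\, \PE^{1/p}[\norm{\prtheta_{n} - \thetas}[\covfeat]^p]$. It then remains to substitute the bound of \Cref{th:td_pr_pth_moment} with $p = \log(1/\delta)$ and the stated $\alpha$, and to simplify the three groups of terms appearing there.

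Most of the simplification is routine bookkeeping. Since $\alpha p = (1-\gamma)\log(1/\delta)/(128\log(n/\delta)) \leq (1-\gamma)/128$, the quantities $\sqrt{\alpha p}$ and $\alpha p$ are controlled by $\sqrt{(1-\gamma)/128}$ and $(1-\gamma)/128$; inserting these into the bracket $1 + (\sqrt{\alpha p}+\alpha p)/\sqrt{(1-\gamma)\lambda_{\min}}$ and using $\lambda_{\min} \leq 1$ collapses the first line to the advertised leading term $(\norm{\thetas}[\covfeat]+1)\sqrt{\log(1/\delta)}/(n^{1/2}(1-\gamma)\lambda_{\min})$. For the initial-error line I again use $p+\log n = \log(n/\delta)$, so that $(p+\log n)^{1/2} + p/\sqrt{\lambda_{\min}} \lesssim \log(n/\delta)/\sqrt{\lambda_{\min}}$; multiplying by $(p+\log n)^{1/2}/((1-\gamma)^2\sqrt{\lambda_{\min}}\,n)$ gives the prefactor $\log^{3/2}(n/\delta)/((1-\gamma)^2\lambda_{\min} n)$, and the contraction base becomes $1-\alpha(1-\gamma)\lambda_{\min}/2 = 1-(1-\gamma)^2\lambda_{\min}/(256\log(n/\delta))$, matching \eqref{eq:deviation_bound_pth_moment} up to the absolute constant in the exponent.

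The one genuinely delicate point, and the step I expect to be the main obstacle, is discarding the second group of terms, namely $\tfrac{p(\norm{\thetas}[\covfeat]+1)}{n(1-\gamma)^{3/2}\lambda_{\min}}\bigl(1 + 1/\sqrt{\alpha p}\bigr)$. The factor $1/\sqrt{\alpha p} = \sqrt{128\log(n/\delta)/((1-\gamma)\log(1/\delta))}$ turns this into an $O(1/n)$ quantity carrying an extra $\sqrt{\log(n/\delta)}$, and dominating it by the $O(1/n^{1/2})$ leading term requires $n^{1/2}(1-\gamma) \gtrsim \sqrt{\log(n/\delta)}$, that is, $n(1-\gamma)^2 \gtrsim \log(n/\delta)$. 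This is exactly where the sample-size hypothesis $n \geq \log(1/\delta)/(1-\gamma)^2$ enters, with the residual $\log n$ slack absorbed into the $\lesssim$ in accordance with the convention that logarithmic-in-$n$ factors are suppressed. Collecting the surviving leading and initial-error contributions then yields \eqref{eq:deviation_bound_pth_moment}.
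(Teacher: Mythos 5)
Your proposal is correct and takes exactly the route the paper intends: the paper itself offers no written proof of \Cref{cor:td_pr_pth_moment} beyond the remark ``With Markov's inequality applied with $p=\log(1/\delta)$, we can translate it into the sample complexity bound,'' and your substitution into \Cref{th:td_pr_pth_moment} together with the bookkeeping (including the correct identification of where $n \geq \log(1/\delta)/(1-\gamma)^2$ is used to kill the $1/\sqrt{\alpha p}$ term) is the whole argument. One small repair: \Cref{th:td_pr_pth_moment} requires $p\geq 2$, not $p\geq 1$, so for $\delta\in(\rme^{-2},\rme^{-1})$ you should take $p=2\vee\log(1/\delta)$, for which Markov's inequality still gives $\rme^{-p}\leq\delta$ and the step-size constraint changes only by an absolute constant. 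The two residual mismatches you flag honestly --- the leftover $\sqrt{\log n}$ not covered by the stated sample-size hypothesis and the $256$ versus $128$ in the contraction exponent --- are imprecisions in the corollary as printed, not defects of your argument.
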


\subsection{Proof of stability bound \eqref{eq:old_stability_threshold} based on matrix stability argument}
\label{sec:old_stability_bound}
In the previous section we have presented a stability result \Cref{lem:refined_product_stability}, which allows for maximal step size in the constant-step size $TD(0)$ algorithm $\alpha_{\infty,p}$ of the form
\[
\alpha_{\infty,p} = \frac{1-\gamma}{128 p}\,.
\]
In this subsection we show that such type of result can not be readily obtained from existing results on the stability of random matrix product \cite{huang2020matrix}.
\par
We first introduce some matrix notations. For the matrix $\MatB \in \rset^{d \times d}$ we denote by $( \sigma_\ell(\MatB) )_{ \ell=1 }^d$ its singular values. For $p \geq 1$, the Shatten $p$-norm is denoted by $\norm{\MatB}[p] = \{\sum_{\ell=1}^d \sigma_\ell^p (\MatB)\}^{1/p}$. For $p, q \geq 1$ and a random matrix $\X$ we write $\norm{\X}[p,q] = \{ \PE[\norm{\X}[p]^q] \}^{1/q}$. Then it is easily seen that
\[
\PE^{1/q}[\norm{\X}^{q}] \leq \norm{\X}[\qexponent,\ppexponent]\eqsp,
\]
and one can control an operator norm of the matrix with its Shatten norm of an appropriate order. Now we state the following result from \cite[Proposition~2]{durmus2021tight}.
\begin{proposition}
\label{prop:general_expectation}
Let $\sequence{\Y}[\ell][\nset]$ be a sequence on independent matrices, $\Y_{\ell} \in \rset^{d \times d}$ and $\Q$ be a positive definite matrix. Assume that for each $\ell \in \nset$ there exist $m_\ell \in (0,1)$  and $\sigma_{\ell} > 0$ such that \(\norm{\PE[\Y_\ell]}[\Q]^2  \leq 1 - m_\ell\) and \(\norm{\Y_\ell - \PE[\Y_\ell]}[\Q] \leq \sigma_{\ell}\) almost surely.  Define $\Zbf_n = \prod_{\ell = 0}^n \Y_\ell= \Y_n \Zbf_{n-1}$, for $n \geq 1$ with some (deterministic) matrix $\Zbf_0 \in \rset^{d \times d}$. Then, for any $2 \le q \le p$ and $n \geq 1$,
\begin{equation}
\label{eq:gen_expectation}
\norm{\Zbf_n}[p,q]^2 \leq \qcond \prod_{\ell=1}^n (1- m_\ell + (p-1)\sigma_{\ell}^2) \norm{\Q^{1/2}\Zbf_0 \Q^{-1/2}}[p, q]^2 \eqsp,
\end{equation}
where  $\qcond = \lambda_{\mathsf min}^{-1}(\Q)\lambda_{\mathsf max}(\Q)$.
\end{proposition}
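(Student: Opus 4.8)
The plan is to reduce to the operator-norm case by a similarity transform, and then to drive an inductive one-step contraction in the mixed norm $\norm{\cdot}[p,q]$ using the $2$-uniform smoothness of the Schatten class. First I would pass to the $\Q$-transformed product: set $\tilde\Y_\ell = \Q^{1/2}\Y_\ell\Q^{-1/2}$ and $\tilde\Zbf_n = \Q^{1/2}\Zbf_n\Q^{-1/2} = \tilde\Y_n\tilde\Zbf_{n-1}$. Since for any matrix $B$ the induced norm satisfies $\norm{B}[\Q] = \normop{\Q^{1/2}B\Q^{-1/2}}$, the hypotheses translate into the operator-norm bounds $\normop{\PE[\tilde\Y_\ell]}^2 \le 1-m_\ell$ and $\normop{\tilde\Y_\ell-\PE[\tilde\Y_\ell]}\le\sigma_\ell$ almost surely. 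If I establish the claim for a product obeying these operator-norm bounds with prefactor $1$, i.e. $\norm{\tilde\Zbf_n}[p,q]^2 \le \prod_{\ell=1}^n(1-m_\ell+(p-1)\sigma_\ell^2)\,\norm{\tilde\Zbf_0}[p,q]^2$, then converting back through $\Zbf_n = \Q^{-1/2}\tilde\Zbf_n\Q^{1/2}$ and the submultiplicativity of Schatten norms gives $\norm{\Zbf_n}[p,q]^2 \le \normop{\Q^{-1/2}}^2\normop{\Q^{1/2}}^2\norm{\tilde\Zbf_n}[p,q]^2 = \qcond\norm{\tilde\Zbf_n}[p,q]^2$, which is exactly the stated bound because $\tilde\Zbf_0 = \Q^{1/2}\Zbf_0\Q^{-1/2}$.

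It then remains to treat the operator-norm case, where the core is the one-step estimate $\norm{\Zbf_n}[p,q]^2 \le (1-m_n+(p-1)\sigma_n^2)\,\norm{\Zbf_{n-1}}[p,q]^2$. I would split $\Zbf_n = \Y_n\Zbf_{n-1} = \PE[\Y_n]\Zbf_{n-1} + (\Y_n-\PE[\Y_n])\Zbf_{n-1}$. With $\filtr_\ell = \sigma(\Y_1,\dots,\Y_\ell)$, the first summand is $\filtr_{n-1}$-measurable, while by independence of $\Y_n$ from $\filtr_{n-1}$ together with $\PE[\Y_n-\PE[\Y_n]]=0$ the second is a martingale difference, $\PE[(\Y_n-\PE[\Y_n])\Zbf_{n-1}\mid\filtr_{n-1}]=0$. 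Viewing this as a two-term $S_p$-valued martingale and invoking the martingale form of the $2$-uniform smoothness of the Bochner space $L^q(\Omega;S_p)$ (random matrices under $\norm{\cdot}[p,q]$) with constant $\sqrt{p-1}$, valid for $2\le q\le p$, yields $\norm{\Zbf_n}[p,q]^2 \le \norm{\PE[\Y_n]\Zbf_{n-1}}[p,q]^2 + (p-1)\norm{(\Y_n-\PE[\Y_n])\Zbf_{n-1}}[p,q]^2$. Pulling the deterministic, respectively almost surely bounded, left factors out of both the Schatten and the $L^q$ norm by submultiplicativity, and using $\normop{\PE[\Y_n]}\le\sqrt{1-m_n}$ and $\normop{\Y_n-\PE[\Y_n]}\le\sigma_n$, delivers the one-step contraction; iterating over $\ell=1,\dots,n$ produces the factor $\prod_{\ell=1}^n(1-m_\ell+(p-1)\sigma_\ell^2)$.

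The main obstacle is the functional-analytic input used in the one-step estimate: the sharp $2$-uniform smoothness of the noncommutative space $L^q(\Omega;S_p)$ with constant $\sqrt{p-1}$, uniformly over $2\le q\le p$. This rests on the Ball--Carlen--Lieb theorem that the Schatten class $S_p$ is $2$-uniformly smooth with the optimal constant $\sqrt{p-1}$, combined with the fact that forming $L^q$ with $q\le p$ does not degrade the constant; equivalently, one needs the martingale inequality $\norm{\sum_k d_k}[p,q]^2 \le (p-1)\sum_k\norm{d_k}[p,q]^2$ for $S_p$-valued martingale differences. Everything else --- the similarity reduction, the predictable/martingale split, the submultiplicativity bounds, and the final induction --- is routine bookkeeping.
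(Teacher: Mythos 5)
Your proof is correct, and it is essentially the standard argument behind this result: the paper itself does not prove \Cref{prop:general_expectation} but imports it from \citep[Proposition~2]{durmus2021tight}, whose proof (following \citealp{huang2020matrix}) is exactly your route --- similarity transform by $\Q^{1/2}$ to reduce to operator-norm hypotheses, the predictable/martingale-difference split $\Y_n\Zbf_{n-1}=\PE[\Y_n]\Zbf_{n-1}+(\Y_n-\PE[\Y_n])\Zbf_{n-1}$, and the one-step contraction from the $2$-uniform smoothness inequality $\norm{X+Y}[p,q]^2\le\norm{X}[p,q]^2+(p-1)\norm{Y}[p,q]^2$ for conditionally centered $Y$ and $2\le q\le p$. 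The only external input you need, the Ball--Carlen--Lieb smoothness of $S_p$ and its stability under forming $L^q$ with $q\le p$, is precisely the lemma those references invoke, so nothing is missing.
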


Note that the result of \Cref{prop:general_expectation} is generic in a sense that it allows us an additional degree of freedom in the choice of the contracting matrix norm $\norm{\cdot}[\Q]$. An almost sure bound on $\norm{\Y_\ell - \PE[\Y_\ell]}[\Q]$ can be generalized to a moment-type bound, with the same shape of the bound in \eqref{eq:gen_expectation}. The main drawback of this technique is an inevitable trade-off between $m_{\ell}$ and $(p-1)\sigma_{\ell}^2$ factors, which directly influences the speed with which $\norm{\Zbf_n}[p,q]^2$ decays to $0$.
\par
Now we aim to apply \Cref{prop:general_expectation} to check the assumption \Cref{assum:exp_stability} for the TD(0) algorithm.

\begin{lemma}
\label{lem:stability_TD}
Let $\sequence{\theta}[k][\nset]$ be a sequence of TD(0) updates generated by \eqref{eq:LSA_procedure_TD} under \Cref{assum:generative_model} and \Cref{assum:feature_design}. Then this update scheme satisfies assumption \Cref{assum:exp_stability}$(p)$ with
\begin{equation}
\label{eq:stability_threshold_td_old}
a = \frac{(1-\gamma) \lambda_{\min}(\covfeat)}{2}\,, \eqsp \alpha_{p,\infty} = \frac{1-\gamma}{128p} \wedge \frac{(1-\gamma) \lambda_{\min}(\covfeat)}{64 p}\eqsp, \quad \conststab[p] = d^{1/p}\,.
\end{equation}
\end{lemma}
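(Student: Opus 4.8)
The plan is to invoke the generic matrix-product bound \Cref{prop:general_expectation} with the identity weight matrix, so that the condition-number factor $\qcond$ collapses to $1$, and then convert the resulting Schatten-norm bound on the matrix product into the operator-norm bound on $\ProdBa_{1:n}u$ required by \Cref{assum:exp_stability}$(p)$. Concretely, in the \iid\ TD(0) setting I set $\Y_\ell = \Id - \alpha \funcAw_\ell$, which are independent with $\PE[\Y_\ell] = \Id - \alpha\bA$, and $\Zbf_0 = \Id$, so that $\Zbf_n = \prod_{\ell=1}^n \Y_\ell = \ProdBa_{1:n}$. Taking $\Q = \Id$ gives $\qcond = 1$, and I apply the proposition with moment index $q = p$ and Schatten index $p$.

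The first task is to verify the two hypotheses of \Cref{prop:general_expectation}. For the mean contraction, taking expectation in \eqref{eq:a_at_bound} yields $\bA + \bA^\top \succeq 2(1-\gamma)\covfeat \succeq 2(1-\gamma)\lambda_{\min}\Id$, while \Cref{prop:condition_check} gives $\normop{\bA} \leq 1+\gamma$. Hence
\begin{equation}
\normop{\Id - \alpha\bA}^2 = \lambda_{\max}\bigl(\Id - \alpha(\bA+\bA^\top) + \alpha^2 \bA^\top\bA\bigr) \leq 1 - 2\alpha(1-\gamma)\lambda_{\min} + \alpha^2(1+\gamma)^2 =: 1 - m\eqsp,
\end{equation}
so $m = 2\alpha(1-\gamma)\lambda_{\min} - \alpha^2(1+\gamma)^2$, which lies in $(0,1)$ for $\alpha$ below the stated threshold. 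For the fluctuation term, \Cref{prop:condition_check} gives $\normop{\Y_\ell - \PE[\Y_\ell]} = \alpha\normop{\funcAw_\ell - \bA} \leq \alpha\bConst{A} = 2\alpha(1+\gamma) =: \sigma$ almost surely.

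Then \Cref{prop:general_expectation}, together with $\qcond = 1$ and $\norm{\Q^{1/2}\Zbf_0\Q^{-1/2}}[p,p] = \norm{\Id}[p] = d^{1/p}$, yields $\norm{\ProdBa_{1:n}}[p,p]^2 \leq d^{2/p}\rho^n$, where $\rho = 1 - m + (p-1)\sigma^2 = 1 - 2\alpha(1-\gamma)\lambda_{\min} + (4p-3)\alpha^2(1+\gamma)^2$. The remaining step is purely algebraic: I would force $\rho \leq (1-\alpha a)^2$ with $a = (1-\gamma)\lambda_{\min}/2$, which, since $(1-\alpha a)^2 \geq 1 - \alpha(1-\gamma)\lambda_{\min}$, reduces to $\alpha(4p-3)(1+\gamma)^2 \leq (1-\gamma)\lambda_{\min}$; using $(1+\gamma)^2 \leq 4$ and $4p - 3 \leq 4p$ this is guaranteed by $\alpha \leq (1-\gamma)\lambda_{\min}/(16p)$, which is in turn implied by the stated $\alpha_{p,\infty}$. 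Finally I convert norms: since the operator norm is dominated by the Schatten-$p$ norm,
\begin{equation}
\PE^{1/p}\bigl[\normop{\ProdBa_{1:n}u}^p\bigr] \leq \norm{u}\,\PE^{1/p}\bigl[\normop{\ProdBa_{1:n}}^p\bigr] \leq \norm{u}\,\norm{\ProdBa_{1:n}}[p,p] \leq d^{1/p}(1-\alpha a)^n\norm{u}\eqsp,
\end{equation}
which is exactly \Cref{assum:exp_stability}$(p)$ with $a = (1-\gamma)\lambda_{\min}/2$ and $\conststab[p] = d^{1/p}$.

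No single step is genuinely hard here, but the point of the argument, and its main limitation, is the trade-off inside $\rho$ between the contraction gain $m \asymp \alpha(1-\gamma)\lambda_{\min}$ and the fluctuation penalty $(p-1)\sigma^2 \asymp \alpha^2 p$. Balancing these forces $\alpha_{p,\infty} \asymp (1-\gamma)\lambda_{\min}/p$, i.e. the instance-dependent threshold \eqref{eq:old_stability_threshold}, because the contraction afforded by $\Id - \alpha\bA$ in the $\Q = \Id$ (operator) norm is only of order $\alpha(1-\gamma)\lambda_{\min}$. The factor $d^{1/p}$ is likewise unavoidable, entering through $\norm{\Zbf_0}[p] = \norm{\Id}[p] = d^{1/p}$. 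Both drawbacks are precisely what the refined, TD-specific argument of \Cref{lem:refined_product_stability} removes.
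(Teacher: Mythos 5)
Your proposal is correct and follows essentially the same route as the paper's own proof of \Cref{lem:stability_TD}: invoke \Cref{prop:general_expectation} with $\Q=\Id$, $\Zbf_0=\Id$, verify the mean contraction and the almost-sure fluctuation bound, and balance the $\alpha(1-\gamma)\lambda_{\min}$ gain against the $p\alpha^2$ penalty, which forces the instance-dependent threshold and the $d^{1/p}$ factor. The only (harmless) difference is that you establish $\normop{\Id-\alpha\bA}^2\le 1-2\alpha(1-\gamma)\lambda_{\min}+\alpha^2(1+\gamma)^2$ directly from $\bA+\bA^\top\succeq 2(1-\gamma)\covfeat$, whereas the paper routes this through \Cref{lem:matrix_product_deterministic}; your constant bookkeeping at the end is in fact slightly cleaner.
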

\begin{proof}
We aim to apply here the result of \Cref{prop:general_expectation} with $\Y_\ell = \Id - \alpha \funcAw_{\ell}$ and $\Zbf_n = \ProdBa_{1:n}$. Towards this aim, note that \Cref{lem:matrix_product_deterministic} implies that, with $\funcAw = \varphi(s)\{\varphi(s) - \gamma \varphi(s')\}^{\top}$ being a random TD update matrix defined in \eqref{eq:matr_A_def}, we have
\begin{align}
\normop{\Id - \alpha \bA} = \normop{\PE[\Id - \alpha \funcAw]} 
&\leq \sqrt{\normop{\PE[(\Id - \alpha \funcAw)^{\top}(\Id - \alpha \funcAw)\}]}} \\
&\leq \sqrt{1 - \alpha (1-\gamma) \lambda_{\min}(\covfeat)} \\
&\leq 1 - (1/2)\alpha (1-\gamma) \lambda_{\min}(\covfeat)\eqsp,
\end{align}
which holds for $\alpha \in (0;\frac{1-\gamma}{128})$. Moreover,
\[
\normop{\alpha(\funcAw - \bA)} \leq \alpha \normop{\varphi(s) (\varphi(s) - \gamma \varphi(s'))^\top } + \alpha \normop{\PE\left[ \varphi(s) (\varphi(s) - \gamma \varphi(s'))^\top \right]}  \leq 2(1+\gamma)\alpha \leq 4\alpha \eqsp.
\]
Hence, setting $a = (1-\gamma) \lambda_{\min}(\covfeat)$, the assumptions of \Cref{lem:matrix_product_deterministic} are satisfied with
\[
\sigma_{\ell} = 4\alpha\eqsp, m_{\ell} = \alpha a /2\eqsp.
\]
Hence, applying the result of \Cref{lem:matrix_product_deterministic} with $Q = \Id$, $\Zbf_0 = \Id$, we get
\[
\PE^{1/q}\left[ \normop{\ProdBa_{1:n}}^{q} \right]
\leq  \norm{\ProdBa_{1:n}}[p,q]
\leq d^{1/p} (1 - \alpha a + 16 (p-1)  \alpha^2)^{n/2}\eqsp.
\]
Now we have to balance the terms $\alpha a /2$ and $16 (p-1)  \alpha^2$, which yields the scaling of $\alpha$ with $a$ (and, hence, with $\lambda_{\min}(\covfeat)$). In particular, setting $\alpha = \frac{a}{32 p}$, we get the statement of the Lemma.
\end{proof}

\section{Proofs of \Cref{sec:lower_bounds}}
\label{sec:app:lower_bounds}
In this section we need to introduce an additional assumptions which relates matrices $\G$, $\bA$, and (random) matrices $\funcAw_i$ for $i \in \{1,\ldots,n\}$.
\begin{assumptionC}
\label{assum:g_matr}
There exist such symmetric positive-definite matrix $\G = \G^{\top} > 0$ and constants $\matrbound > 0$, $\randmbound > 0$, $\tracebound > 0$, such that
\begin{enumerate}[(i)]
    \item for the system matrix $\bA$ it holds that
    \[
    \G^{1/2} \bA^{-\top} \G \bA^{-1} \G^{1/2} \preceq \matrbound^2 \Id\eqsp;
    \]
    \item for the random matrix $\funcAw_1$ it holds that
    \[
    \PE[\funcAw_1^{\top} \G^{-1} \funcAw_1] \preceq \randmbound^2 \G\eqsp;
    \]
    \item for the matrix $\noisecov$ defined in \eqref{eq:def_noise_cov} it holds that
    \[
    \trace{\noisecov} \leq \tracebound^2 \trace{\G^{1/2} \bA^{-1} \noisecov \bA^{-T} \G^{1/2}}\eqsp;
    \]
\end{enumerate}
\end{assumptionC}
Under Assumption \Cref{assum:g_matr} we introduce a new notation
\[
\noisecovnew = \G^{1/2} \bA^{-1} \noisecov \bA^{-T} \G^{1/2}\eqsp.
\]
Our proof in this section follows the general procedure introduced for the Polyak-Ruppert estimator $\prtheta_{n_0,n}$ in \eqref{eq:pr_err_decompose}. Recall that with summation by parts  we obtain the following
\[
\bA\left(\prtheta_{n_0,n} -\thetalim\right) =  \frac{\theta_{n_0}-\theta_{n}}{\alpha (n - n_0)} - \frac{\sum_{t=n_0}^{n-1} e\left(\theta_{t}, Z_{t+1} \right)}{n-n_0} \eqsp,
\]
where the quantities $e\left(\theta_{t}, Z_{t+1} \right)$ are defined in \eqref{eq:pr_e_definition}. Since we assume that $\bA$ is non-degenerate, for symmetric positive-definite matrix $\G = \G^{\top} > 0$ from \Cref{assum:g_matr}, we get from the previous inequality that
\begin{equation}
\label{eq:sum_parts_refined}
\G^{1/2}\left(\prtheta_{n_0,n} -\thetalim\right) = \frac{\G^{1/2}\bA^{-1}(\theta_{n_0}-\theta_{n})}{\alpha (n - n_0)} - \frac{\G^{1/2}\bA^{-1} \sum_{t=n_0}^{n-1} e\left(\theta_{t}, Z_{t+1} \right)}{n-n_0}\eqsp.
\end{equation}

Based on the above identity, we prove the following counterpart of the $2$-nd-moment bound \Cref{th:lsa_pr_2nd_moment_main} for the general LSA problem.
\begin{theorem}
\label{th:lsa_pr_2nd_moment_refined}
Assume \Cref{assum:noise-level}, \Cref{assum:exp_stability}($2$), and \Cref{assum:g_matr}. Then for any $n \geq 2$, $\alpha \in (0;\alpha_{2,\infty}]$, it holds that
\begin{equation}
\label{eq:pr_theta_2nd_moment_bound_refined}
\begin{split}
\PE[\norm{\prtheta_{n} - \thetas}[\G]^2]
&\lesssim \frac{\trace{\noisecovnew}}{n} + \frac{\conststab[2]^2 \matrbound^2 \trace{\noisecov}}{a n} \left(\frac{ \norm{\G^{-1/2}}^2}{\alpha n} +  \randmbound^2\, \norm{\G^{1/2}}^2 \alpha \right) \\
&\qquad +\conststab[2]^2 \matrbound^2 (1 - \alpha a)^{n} \left(\frac{\norm{\G^{-1/2}}^2}{\alpha^2n^2} + \frac{\randmbound^2 \norm{\G^{1/2}}^2}{\alpha a n^2}\right)\norm{\theta_0 - \thetas}^2
\end{split}
\end{equation}
\end{theorem}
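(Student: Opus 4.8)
The plan is to mirror the argument of \Cref{th:lsa_pr_2nd_moment_appendix}, inserting the deterministic matrix $\G^{1/2}\bA^{-1}$ and invoking \Cref{assum:g_matr} to absorb the extra factors it produces. Starting from the summation-by-parts identity \eqref{eq:sum_parts_refined} with $n_0=n/2$, I would apply $\norm{x-y}^2\le 2\norm{x}^2+2\norm{y}^2$ to split $\PE[\norm{\prtheta_n-\thetas}[\G]^2]$ into a transient term
\[
T_1\lesssim \frac{\PE[\norm{\G^{1/2}\bA^{-1}(\theta_{n/2}-\theta_n)}^2]}{\alpha^2 n^2}
\]
and a fluctuation term
\[
T_2\lesssim \frac{\PE[\norm{\G^{1/2}\bA^{-1}\sum_{t=n/2}^{n-1} e(\theta_t,Z_{t+1})}^2]}{n^2},
\]
with $e(\theta,z)$ as in \eqref{eq:pr_e_definition}, and bound each separately.

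For $T_1$, the key observation is that \Cref{assum:g_matr}(i) is equivalent, after conjugating by $\G^{-1/2}$, to $\bA^{-\top}\G\bA^{-1}\preceq \matrbound^2\G^{-1}$, so that for any vector $v$ one has $\norm{\G^{1/2}\bA^{-1}v}^2=v^\top\bA^{-\top}\G\bA^{-1}v\le \matrbound^2\norm{\G^{-1/2}}^2\norm{v}^2$. Applying this with $v=\theta_{n/2}-\theta_n$ and then invoking the last-iterate bound \Cref{th:LSA_last_iterate_refined}(i) for both $\theta_{n/2}$ and $\theta_n$ (squaring $\PE^{1/2}$ turns $(1-\alpha a)^{n/2}$ into $(1-\alpha a)^n$) produces exactly the two $\norm{\G^{-1/2}}^2$-terms in \eqref{eq:pr_theta_2nd_moment_bound_refined}.

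For $T_2$, since $\funcnoise{\cdot}$ is centered and $\PE[\zmfuncA{Z_{t+1}}]=0$, the increments $\G^{1/2}\bA^{-1}e(\theta_t,Z_{t+1})$ form a martingale-difference sequence with respect to $\mathcal{F}_t=\sigma(Z_j:j\le t)$, so orthogonality gives $\PE[\norm{\cdot}^2]=\sum_t\PE[\norm{\G^{1/2}\bA^{-1}e(\theta_t,Z_{t+1})}^2]$. I would then decompose $e=\funnoisew_{t+1}+\zmfuncA{Z_{t+1}}(\theta_t-\thetas)$. The pure-noise part contributes $\PE[\norm{\G^{1/2}\bA^{-1}\funnoisew_{t+1}}^2]=\trace{\bA^{-\top}\G\bA^{-1}\noisecov}=\trace{\noisecovnew}$ per summand (by cyclicity of the trace), and summing the $n/2$ terms and dividing by $n^2$ yields the leading term $\trace{\noisecovnew}/n$.

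The main obstacle is the correction part $\PE[\norm{\G^{1/2}\bA^{-1}\zmfuncA{Z_{t+1}}(\theta_t-\thetas)}^2]$, because \Cref{assum:g_matr}(ii) is stated for $\funcAw_1$ with $\G^{-1}$ in the middle, whereas here we face $\zmfuncA$ and the matrix $\bA^{-\top}\G\bA^{-1}$. I would bridge this gap in two steps: first reduce $\bA^{-\top}\G\bA^{-1}\preceq\matrbound^2\G^{-1}$ via \Cref{assum:g_matr}(i); then, conditioning on $\mathcal{F}_t$ (so $\theta_t$ is fixed and $Z_{t+1}$ is independent) and using the centering identity $\PE[\zmfuncA{Z_{t+1}}^\top\G^{-1}\zmfuncA{Z_{t+1}}]=\PE[\funcAw_1^\top\G^{-1}\funcAw_1]-\bA^\top\G^{-1}\bA\preceq\PE[\funcAw_1^\top\G^{-1}\funcAw_1]\preceq\randmbound^2\G$ from \Cref{assum:g_matr}(ii), I obtain $\PE[\norm{\cdot}^2\mid\mathcal{F}_t]\le\matrbound^2\randmbound^2\norm{\G^{1/2}}^2\norm{\theta_t-\thetas}^2$. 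Substituting \Cref{th:LSA_last_iterate_refined}(i), summing the geometric series $\sum_t(1-\alpha a)^{2t}\lesssim(\alpha a)^{-1}(1-\alpha a)^n$, and using $\alpha a\le 1/2$ from \Cref{assum:exp_stability}($2$) gives precisely the $\randmbound^2\norm{\G^{1/2}}^2$-terms of \eqref{eq:pr_theta_2nd_moment_bound_refined}. Collecting the contributions of $T_1$ and $T_2$ then completes the proof.
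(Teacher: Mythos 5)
Your proposal is correct and follows essentially the same route as the paper's proof: the same summation-by-parts decomposition \eqref{eq:sum_parts_refined} into $T_1$ and $T_2$, the same use of \Cref{assum:g_matr}(i) in the form $\bA^{-\top}\G\bA^{-1}\preceq\matrbound^2\G^{-1}$ (the paper's \eqref{eq:norm_a_inv_bound}), the same martingale orthogonality plus trace-cyclicity for the leading $\trace{\noisecovnew}/n$ term, and the same conditional bound $\matrbound^2\randmbound^2\norm{\G^{1/2}}^2\norm{\theta_t-\thetas}^2$ for the correction term (the paper's \eqref{eq:cond_f_t_bound}), followed by \Cref{th:LSA_last_iterate_refined}(i) and the geometric series. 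The only cosmetic difference is that you apply \Cref{assum:g_matr}(i) before the centering identity for $\zmfuncAw$ while the paper does it in the reverse order; both yield the identical bound.
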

\begin{proof}
Following the pipeline of \Cref{th:lsa_pr_2nd_moment_main} and using \eqref{eq:sum_parts_refined}, we get
\[
\PE[\norm{\prtheta_{n} - \thetas}[\G]^2] \lesssim \underbrace{\frac{\PE[\norm{\G^{1/2}\bA^{-1}(\theta_{n/2} - \theta_n)}^2]}{\alpha^2 n^2}}_{T_1} \, + \, \underbrace{\frac{ \PE[\norm{\sum_{t=n/2}^{n-1} \G^{1/2}\bA^{-1} e\left(\theta_{t}, Z_{t+1} \right)}^2]}{n^2}}_{T_2}\eqsp,
\]
and estimate the terms $T_1$ and $T_2$ separately. Applying the bounds of \Cref{th:LSA_last_iterate_refined}, we get first that
\[
T_1 \lesssim \matrbound^2 \norm{\G^{-1/2}}^2 \conststab[2]^2 \, \biggl[ \frac{(1 - \alpha a)^{n}\norm{\theta_0 - \thetas}^2}{\alpha^2 n^2} + \frac{\trace{\noisecov}}{\alpha a n^2}\biggr]\eqsp.
\]
Here we additionally used an upper bound
\begin{equation}
\label{eq:norm_a_inv_bound}
\begin{split}
\norm{\G^{1/2} \bA^{-1} u}^2
&= u^{\top} \bA^{-\top} \G \bA^{-1} u \\
&= u^{\top} \G^{-1/2} \G^{1/2} \bA^{-\top} \G \bA^{-1} \G^{1/2} \G^{-1/2} u \\
& \leq \matrbound^2 u^{\top} \G^{-1} u \\
& \leq \matrbound^2 \norm{\G^{-1/2}}^2 \norm{u}^2\eqsp,
\end{split}
\end{equation}
which is valid for any $u \in \rset^{d}$. Similarly, since $\{ \G^{1/2}\bA^{-1} e\left(\theta_{t}, Z_{t+1} \right)\}_{t \in \nset}$ is a martingale-difference sequence w.r.t. filtration $\mathcal{F}_k = \sigma(Z_{j}, j \leq k)$, we get the following bound for $T_2$:
\begin{align}
T_2 
&\lesssim n^{-2}\sum_{t=n/2}^{n-1}\PE[\norm{\G^{1/2}\bA^{-1}e\left(\theta_{t}, Z_{t+1} \right)}^2] \\
&\lesssim \frac{\trace{\noisecovnew}}{n} + \conststab[2]^2 \randmbound^2 \matrbound^2\, \norm{\G^{1/2}}^2 \biggl[\frac{(1- \alpha a)^{n}\norm{\theta_0 - \thetas}^2}{\alpha a n^2} + \frac{\alpha \trace{\noisecov}}{an}\biggr]\eqsp.
\end{align}
In particular, to bound the first term we use the bound
\begin{align}
\label{eq:cond_f_t_bound}
&\CPE{\normop{\G^{1/2}\bA^{-1} \zmfuncAw[t+1](\theta_t - \thetas)}^{2}}{\mathcal{F}_t} \\
&\qquad = \CPE{(\theta_t - \thetas)^{\top} \zmfuncAw[t+1]^{\top} \bA^{-\top} \G \bA^{-1} \zmfuncAw[t+1](\theta_t - \thetas)}{\mathcal{F}_t} \\
&\qquad = \CPE{(\theta_t - \thetas)^{\top} \funcAw_{t+1}^{\top} \G^{-1/2} \G^{1/2}  \bA^{-\top} \G \bA^{-1} \G^{1/2} \G^{-1/2} \funcAw_{t+1} (\theta_t - \thetas)}{\mathcal{F}_t} \\
&\qquad \qquad \qquad \qquad \qquad \qquad \qquad \qquad \qquad \qquad \qquad \qquad - \CPE{(\theta_t - \thetas)^{\top} \G (\theta_t - \thetas)}{\mathcal{F}_t} \\
&\qquad \leq \matrbound^2 \CPE{(\theta_t - \thetas)^{\top} \funcAw_{t+1}^{\top} \G^{-1} \funcAw_{t+1} (\theta_t - \thetas)}{\mathcal{F}_t} \\
&\qquad \leq \randmbound^2 \matrbound^2 (\theta_t - \thetas)^{\top} \G (\theta_t - \thetas) \\
&\qquad \leq \randmbound^2 \matrbound^2\, \norm{\G^{1/2}}^2 \, \norm{\theta_t - \thetas}^2\eqsp.
\end{align}
\end{proof}

Now we trace \Cref{th:lsa_general_refined_norm} in the case of TD (0) updates. First we check whether the assumption \Cref{assum:g_matr} holds.

\begin{lemma}
\label{lem:new_conditions_td_check}
Let $\sequence{\theta}[k][\nset]$ be a sequence of TD(0) updates generated by \eqref{eq:LSA_procedure_TD} under \Cref{assum:generative_model} and \Cref{assum:feature_design}. Then this update scheme satisfies assumption \Cref{assum:g_matr} with
\begin{equation}
\label{eq:new_const_td}
\G = \covfeat\eqsp, \quad \matrbound = 1/(1-\gamma)\eqsp, \quad \randmbound = (1+\gamma)\lambda_{\min}^{-1/2}\eqsp, \quad \tracebound = 1+\gamma \eqsp.
\end{equation}
Moreover, it holds that
\begin{equation}
\label{eq:lower_bound_td_matr}
\covfeat^{-1/2} \bA^{\top} \covfeat^{-1} \bA \covfeat^{-1/2} \succeq (1-\gamma)^2 \Id\eqsp.
\end{equation}
\end{lemma}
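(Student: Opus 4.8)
The plan is to verify the three items of \Cref{assum:g_matr} with $\G = \covfeat$ by first isolating two elementary spectral facts about the TD system matrix and then reading off each condition (and the auxiliary lower bound \eqref{eq:lower_bound_td_matr}) from them. Throughout I would use that $\mu$ is invariant under $\PMDP_\pi$, so that under \Cref{assum:generative_model} the marginal law of $s'$ is again $\mu$, giving $\PE[\varphi(s')\varphi(s')^\top] = \covfeat$.

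First I would record the two ``master'' inequalities. Writing $\bA = \covfeat - \gamma\,\PE[\varphi(s)\varphi(s')^\top]$ and applying Cauchy--Schwarz together with invariance yields $|x^\top\PE[\varphi(s)\varphi(s')^\top]x| \le \norm{x}[\covfeat]^2$ for every $x$, hence the lower bound $x^\top\bA x \ge (1-\gamma)\norm{x}[\covfeat]^2$. Symmetrically, the elementary inequality $uv^\top + vu^\top \preceq uu^\top + vv^\top$ and invariance give the upper bound $\PE[(\varphi(s)-\gamma\varphi(s'))(\varphi(s)-\gamma\varphi(s'))^\top] \preceq (1+\gamma)^2\covfeat$. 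These two estimates drive the whole argument.

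Next, to obtain \eqref{eq:lower_bound_td_matr} and item (i) at once, I would introduce $M = \covfeat^{-1/2}\bA\covfeat^{-1/2}$ and note that its symmetric part is $\tfrac12(M+M^\top) = \covfeat^{-1/2}\cdot\tfrac12(\bA+\bA^\top)\cdot\covfeat^{-1/2} \succeq (1-\gamma)\Id$ by the first master inequality. For any unit vector $u$ this forces $\norm{Mu} \ge u^\top M u \ge 1-\gamma$, so $\sigma_{\min}(M) \ge 1-\gamma$ (in particular $\bA$ is invertible). Since $\covfeat^{-1/2}\bA^\top\covfeat^{-1}\bA\covfeat^{-1/2} = M^\top M$, this is exactly \eqref{eq:lower_bound_td_matr}; and since $\covfeat^{1/2}\bA^{-1}\covfeat^{1/2} = M^{-1}$ has $\sigma_{\max}(M^{-1}) \le (1-\gamma)^{-1}$, the identity $\covfeat^{1/2}\bA^{-\top}\covfeat\bA^{-1}\covfeat^{1/2} = (M^{-1})^\top M^{-1}$ gives item (i) with $\matrbound = 1/(1-\gamma)$.

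For item (ii) I would bound $\funcAw_1^\top\covfeat^{-1}\funcAw_1$ pointwise: since $\funcAw_1 = \varphi(s)(\varphi(s)-\gamma\varphi(s'))^\top$ and the scalar $\varphi(s)^\top\covfeat^{-1}\varphi(s) \le \lambda_{\min}^{-1}$ (using $\norm{\varphi(s)}\le 1$), we get $\funcAw_1^\top\covfeat^{-1}\funcAw_1 \preceq \lambda_{\min}^{-1}(\varphi(s)-\gamma\varphi(s'))(\varphi(s)-\gamma\varphi(s'))^\top$; taking expectations and applying the second master inequality yields $\PE[\funcAw_1^\top\covfeat^{-1}\funcAw_1] \preceq (1+\gamma)^2\lambda_{\min}^{-1}\covfeat$, i.e.\ $\randmbound = (1+\gamma)\lambda_{\min}^{-1/2}$. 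Finally, for item (iii) I would rephrase the second master inequality as $\norm{\bA w} \le (1+\gamma)\norm{w}[\covfeat]$, obtained by Jensen from $\bA w = \PE[\varphi(s)\langle\varphi(s)-\gamma\varphi(s'),w\rangle]$ and $\norm{\varphi(s)}\le 1$; this gives $\bA^{-\top}\covfeat\bA^{-1} \succeq (1+\gamma)^{-2}\Id$, and since $\trace{\noisecovnew} = \trace{\noisecov\,\bA^{-\top}\covfeat\bA^{-1}} \ge \lambda_{\min}(\bA^{-\top}\covfeat\bA^{-1})\trace{\noisecov}$ for the positive semi-definite matrix $\noisecov$, we conclude $\trace{\noisecov} \le (1+\gamma)^2\trace{\noisecovnew}$, i.e.\ $\tracebound = 1+\gamma$. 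The step requiring the most care is (iii): it turns the operator-norm bound on $\bA\covfeat^{-1/2}$ into a trace inequality, so one must track the direction of the eigenvalue/trace inequality $\trace{AB}\ge\lambda_{\min}(B)\trace{A}$ for products of positive semi-definite matrices; once the two master inequalities are established, the remaining manipulations are routine.
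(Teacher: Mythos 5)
Your proof is correct, arrives at exactly the constants \eqref{eq:new_const_td}, and rests on the same two underlying facts as the paper (invariance of $\mu$ giving $\PE[\varphi(s')\varphi(s')^\top]=\covfeat$, and the representation $\funcAw^\top\funcAw = \norm{\varphi(s)}^2(\varphi(s)-\gamma\varphi(s'))(\varphi(s)-\gamma\varphi(s'))^\top$); item (ii) is verbatim the paper's argument. Where you diverge is in the linear algebra for items (i) and (iii). For (i) and \eqref{eq:lower_bound_td_matr}, the paper writes $\covfeat^{-1/2}\bA\covfeat^{-1/2}=\Id-\gamma N$ with $N=\covfeat^{-1/2}\PE[\varphi(s)\varphi(s')^\top]\covfeat^{-1/2}$, applies the reverse triangle inequality, and proves the full non-symmetric operator-norm bound $\norm{N}\le 1$ by taking a supremum over pairs $(x,y)$; you instead lower-bound the symmetric part of $M=\covfeat^{-1/2}\bA\covfeat^{-1/2}$ by $(1-\gamma)\Id$ and deduce $\sigma_{\min}(M)\ge 1-\gamma$ from $\norm{Mu}\ge u^\top Mu$, which only requires the diagonal bound $|x^\top N x|\le\norm{x}^2$ and is arguably the cleaner route. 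For (iii), the paper inserts $\bA^\top\covfeat^{-1/2}\covfeat^{1/2}\bA^{-\top}$ and its transpose around $\noisecov$, then uses $\trace{CD}\le\normop{C}\trace{D}$ together with $\norm{\covfeat^{-1/2}\bA\covfeat^{-1/2}}\le 1+\gamma$ and the extra fact $\normop{\covfeat}\le 1$; you dualize this as $\bA^\top\bA\preceq(1+\gamma)^2\covfeat$ (via Jensen and the second master inequality) and apply $\trace{AB}\ge\lambda_{\min}(B)\trace{A}$, which avoids invoking $\normop{\covfeat}\le 1$ separately. Both routes are valid and of comparable length; yours packages the estimates as singular-value statements about $M$, the paper's as operator-norm statements about whitened cross-covariances, and the latter formulation is reused elsewhere in the appendix (e.g.\ in \eqref{eq:non-sym-bound}), which is presumably why the authors chose it.
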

\begin{proof}
In order to prove that
\[
\covfeat^{1/2} \bA^{-\top} \covfeat \bA^{-1} \covfeat^{1/2} \preceq \frac{1}{(1-\gamma)^2} \Id\eqsp,
\]
it is enough to show the lower bound \eqref{eq:lower_bound_td_matr}. For the finite state space $\S$ this follows from \cite[Lemma~5]{li2023sharp}, we provide a slightly modified argument for completeness. Indeed, for any $x \in \rset^{d}$, using that $\bA = \covfeat - \gamma \PE[\varphi(s_1) \varphi(s_1^{\prime})^{\top}]$, we have
\begin{align}
x^{\top} \covfeat^{-1/2} \bA^{\top} \covfeat^{-1} \bA \covfeat^{-1/2} x
&= \norm{\covfeat^{-1/2} \bA \covfeat^{-1/2} x}^2 = \norm{\bigl(\Id - \gamma \covfeat^{-1/2} \PE[\varphi(s_1) \varphi(s_1^{\prime})^{\top}] \covfeat^{-1/2}\bigr) x}^2 \\
&\geq (\norm{x} - \gamma \norm{\covfeat^{-1/2} \PE[\varphi(s_1) \varphi(s_1^{\prime})^{\top}] \covfeat^{-1/2} x})^2\eqsp,
\end{align}
and to complete the proof it is enough to show that $\norm{\covfeat^{-1/2} \PE[\varphi(s_1) \varphi(s_1^{\prime})^{\top}] \covfeat^{-1/2}} \leq 1$. In order to do it, note that
\begin{align}
\label{eq:non-sym-bound}
\norm{\covfeat^{-1/2} \PE[\varphi(s_1) \varphi(s_1^{\prime})^{\top}] \covfeat^{-1/2}}
&= \sup_{ \norm{x} = 1, \norm{y} = 1} x^{\top} \covfeat^{-1/2} \PE[\varphi(s_1) \varphi(s_1^{\prime})^{\top}] \covfeat^{-1/2} y \\
&=\sup_{ \norm{x} = 1, \norm{y} = 1} \PE\left[ \left( [\covfeat^{-1/2} x]^\top \varphi(s_1) \right) \left(\varphi(s_1^{\prime})^{\top} \covfeat^{-1/2} y\right)\right] \\
&\leq \sup_{ \norm{x} = 1, \norm{y} = 1} \PE\left[ \frac{1}{2}\left( [\covfeat^{-1/2} x]^\top \varphi(s_1) \right)^2 +  \frac{1}{2}\left(\varphi(s_1^{\prime})^{\top} \covfeat^{-1/2} y\right)^2\right] \\
&= \sup_{ \norm{x} = 1, \norm{y} = 1}\biggl[\frac{1}{2}x^{\top} \covfeat^{-1/2} \PE[\varphi(s_1) \varphi(s_1)^{\top}] \covfeat^{-1/2} x \\
&\qquad\qquad\qquad\quad + \frac{1}{2}y^{\top} \covfeat^{-1/2} \PE[\varphi(s_1^{\prime}) \varphi(s_1^{\prime})^{\top}] \covfeat^{-1/2} y \biggr] = 1.
\end{align}
where we used the fact that a distribution $\mu$ is the invariant. Hence, we get
\[
x^{\top} \covfeat^{-1/2} \bA^{\top} \covfeat^{-1} \bA \covfeat^{-1/2} x \geq (1-\gamma)^2\norm{x}^2\eqsp,
\]
and the bound \eqref{eq:lower_bound_td_matr} is proved. To prove the second part of the bound, we use \eqref{eq:ata_bound} and obtain that
\begin{align}
\PE[\funcAw_1^{\top} \covfeat^{-1} \funcAw_1]
&= \PE[\bigl(\varphi(s_1) - \gamma \varphi(s_1^{\prime})\bigr)\varphi(s_1)^{\top} \covfeat^{-1} \varphi(s_1) \bigl(\varphi(s_1) - \gamma \varphi(s_1^{\prime})\bigr)^{\top}] \\
&\preceq \lambda_{\min}^{-1} \PE[\bigl(\varphi(s_1) - \gamma \varphi(s_1^{\prime})\bigr)\bigl(\varphi(s_1) - \gamma \varphi(s_1^{\prime})\bigr)^{\top}] \preceq \lambda_{\min}^{-1} (1+\gamma)^2 \covfeat\eqsp,
\end{align}
where the last inequality follows \eqref{eq:ata_bound} in the proof of \Cref{lem:B_matr_bounds}. To check the last one, note that
\begin{align}
\trace{\noisecov}
&= \trace{ \bA^{\top} \covfeat^{-1/2} \covfeat^{1/2} \bA^{-\top} \noisecov \bA^{-1} \covfeat^{1/2} \covfeat^{-1/2} \bA } \\
& \overset{(a)}{=} \trace{ \covfeat^{-1/2} \bA \covfeat^{-1/2} \covfeat \covfeat^{-1/2} \bA^{\top} \covfeat^{-1/2} \covfeat^{1/2} \bA^{-\top} \noisecov \bA^{-1} \covfeat^{1/2} } \\
& \overset{(b)}{\leq} \norm{\covfeat^{-1/2} \bA \covfeat^{-1/2} \covfeat \covfeat^{-1/2} \bA^{\top} \covfeat^{-1/2}} \trace{ \covfeat^{1/2} \bA^{-\top} \noisecov \bA^{-1} \covfeat^{1/2} }\eqsp.
\end{align}
Note that the identity (a) above follows from the cyclic property of trace, and the inequality (b) is due to $\trace{CD} \leq \norm{C}\trace{D}$, which is valid for symmetric positive semi-definite matrices $C,D$. In the bound above it remains to estimate
\begin{align}
\label{eq:first-ineq}
\norm{\covfeat^{-1/2} \bA \covfeat^{-1/2} \covfeat \covfeat^{-1/2} \bA^{\top} \covfeat^{-1/2}} \leq \norm{\covfeat^{-1/2} \bA \covfeat^{-1/2}}^2 \norm{ \covfeat }\eqsp.
\end{align}
Consider now the operator norm of the matrix $ \covfeat^{-1/2} \bA \covfeat^{-1/2} $. Note that $\bA = \covfeat - \gamma \PE[\varphi(s_1) \varphi(s_1^{\prime})^{\top}]$. Thus,  we get
\begin{align}
&\norm{ \covfeat^{-1/2} \bA \covfeat^{-1/2} } = \sup_{ \norm{x} = 1, \norm{y} = 1} x^{\top} \covfeat^{-1/2} (\covfeat - \gamma \PE[\varphi(s_1) \varphi(s_1^{\prime})^{\top}]) \covfeat^{-1/2} y \\
&\quad \leq 1 + \gamma \sup_{ \norm{x} = 1, \norm{y} = 1 }\biggl[\frac{x^{\top} \covfeat^{-1/2} \PE[\varphi(s_1) \varphi(s_1)^{\top}] \covfeat^{-1/2} x}{2} + \frac{y^{\top} \covfeat^{-1/2} \PE[\varphi(s_1^{\prime}) \varphi(s_1^{\prime})^{\top}] \covfeat^{-1/2} y}{2} \biggr] \\
&\quad = 1 + \gamma\eqsp.
\end{align}
Plugging this inequality into \eqref{eq:first-ineq}, we get
\begin{align}
\norm{\covfeat^{-1/2} \bA \covfeat^{-1/2} \covfeat \covfeat^{-1/2} \bA^{\top} \covfeat^{-1/2}}  \leq (1+\gamma)^2\eqsp.
\end{align}
In the last bound we additionally used that $\norm{ \covfeat } \leq 1$ under \Cref{assum:feature_design}.
\end{proof}

Now a simple combination of the above bounds allows us to prove the following result:
\begin{theorem}
\label{th:lower_bound_instance_2nd_moment}
Assume \Cref{assum:generative_model} and \Cref{assum:feature_design}.
Let $\sequence{\theta}[k][\nset]$ be a sequence of TD(0) updates generated by \eqref{eq:LSA_procedure_TD}.
Then for any $p \geq 2$, $n \geq 2$, $\alpha \in \bigl(0; \frac{1-\gamma}{256}\bigr]$, it holds that
\begin{equation}
\label{eq:pr_theta_2_moment_bound_new}
\begin{split}
\PE[\norm{\prtheta_{n} - \thetas}[\covfeat]^2]
&\lesssim \frac{\trace{\covfeat^{1/2} \bA^{-1} \noisecovtd \bA^{-T} \covfeat^{1/2}}}{n} + \frac{1+\norm{\thetas}[\covfeat]^2}{(1-\gamma)^3 \lambda_{\min}^2 n} \left(\frac{1}{\alpha n} + \alpha \right) \\
&\qquad + \frac{(1 - \alpha (1-\gamma) \lambda_{\min}/2)^{n}}{\lambda_{\min} (1-\gamma)^2} \left(\frac{1}{\alpha^2 n^2} + \frac{1}{\alpha (1-\gamma) \lambda_{\min} n^2}\right)\norm{\theta_0 - \thetas}^2
\end{split}
\end{equation}
\end{theorem}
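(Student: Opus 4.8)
The plan is to obtain Theorem~\ref{th:lower_bound_instance_2nd_moment} as a direct instantiation of the general refined second-moment bound \Cref{th:lsa_pr_2nd_moment_refined}, specialized to the TD(0) recursion \eqref{eq:LSA_procedure_TD}. First I would verify the three hypotheses of \Cref{th:lsa_pr_2nd_moment_refined} in the TD(0) setting: \Cref{assum:noise-level} holds by \Cref{prop:condition_check}; the exponential stability \Cref{assum:exp_stability}($2$) holds by \Cref{prop:stability_check_td} with $\conststab[2] = 1$ and contraction rate $a = (1-\gamma)\lambda_{\min}/2$; and \Cref{assum:g_matr} holds by \Cref{lem:new_conditions_td_check} with $\G = \covfeat$, $\matrbound = 1/(1-\gamma)$, $\randmbound = (1+\gamma)\lambda_{\min}^{-1/2}$, and $\tracebound = 1+\gamma$. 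I would also observe that the prescribed range $\alpha \in (0; (1-\gamma)/256]$ coincides exactly with $\alpha_{2,\infty} = (1-\gamma)/256$ from \Cref{prop:stability_check_td}, so the step-size constraint $\alpha \in (0; \alpha_{2,\infty}]$ required by \Cref{th:lsa_pr_2nd_moment_refined} is met.

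Next I would substitute these TD(0)-specific quantities into the right-hand side of \eqref{eq:pr_theta_2nd_moment_bound_refined}. The norm factors evaluate to $\norm{\G^{1/2}}^2 = \norm{\covfeat} \leq 1$ (using $\sup_{s}\norm{\varphi(s)} \leq 1$ from \Cref{assum:feature_design}) and $\norm{\G^{-1/2}}^2 = 1/\lambda_{\min}$. Since here $\noisecov = \noisecovtd$, the transformed covariance $\noisecovnew = \covfeat^{1/2} \bA^{-1} \noisecov \bA^{-T} \covfeat^{1/2}$ coincides with $\noisecovopt$, so the leading term becomes precisely $\trace{\covfeat^{1/2} \bA^{-1} \noisecovtd \bA^{-T} \covfeat^{1/2}}/n$, matching the instance-optimal variance of the statement. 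For the second-order terms I would feed in the variance bound $\trace{\noisecov} \leq 2(1+\gamma)^2(\norm{\thetas}[\covfeat]^2 + 1)$ from \Cref{prop:condition_check}.

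The remaining work is bookkeeping of the powers of $(1-\gamma)$ and $\lambda_{\min}$. Plugging $\matrbound^2 = (1-\gamma)^{-2}$, $a = (1-\gamma)\lambda_{\min}/2$, and $\trace{\noisecov}$ into the middle term of \eqref{eq:pr_theta_2nd_moment_bound_refined} produces a prefactor of order $(1-\gamma)^{-3}\lambda_{\min}^{-1}(1+\norm{\thetas}[\covfeat]^2)$, and combining with $\norm{\G^{-1/2}}^2 = 1/\lambda_{\min}$ and $\randmbound^2 = (1+\gamma)^2/\lambda_{\min}$ inside the parentheses yields the claimed $\tfrac{1+\norm{\thetas}[\covfeat]^2}{(1-\gamma)^3\lambda_{\min}^2 n}\bigl(\tfrac{1}{\alpha n} + \alpha\bigr)$. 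The same substitution in the transient term gives the factor $(1-\gamma)^{-2}\lambda_{\min}^{-1}$ together with the summands $\tfrac{1}{\alpha^2 n^2}$ and $\tfrac{1}{\alpha(1-\gamma)\lambda_{\min} n^2}$, matching the final line, while the exponent $(1-\alpha a)^n = (1-\alpha(1-\gamma)\lambda_{\min}/2)^n$ reproduces the stated geometric factor. Throughout, every factor $(1+\gamma)^k$ with $0\leq\gamma<1$ is an absolute constant and is absorbed into $\lesssim$. I do not anticipate a genuine obstacle: the statement is an algebraic specialization, and the only care needed is to keep the $(1-\gamma)$ and $\lambda_{\min}$ exponents straight and to confirm the admissibility of the threshold $(1-\gamma)/256$ for \Cref{assum:exp_stability}($2$); all conceptual content is inherited from \Cref{th:lsa_pr_2nd_moment_refined} and \Cref{lem:new_conditions_td_check}.
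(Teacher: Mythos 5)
Your proposal is correct and follows exactly the paper's own route: the paper proves this theorem by "a simple combination of the above bounds," i.e. by instantiating \Cref{th:lsa_pr_2nd_moment_refined} with the TD(0) constants from \Cref{prop:condition_check}, \Cref{prop:stability_check_td} ($\conststab[2]=1$, $a=(1-\gamma)\lambda_{\min}/2$, $\alpha_{2,\infty}=(1-\gamma)/256$) and \Cref{lem:new_conditions_td_check} ($\G=\covfeat$, $\matrbound=1/(1-\gamma)$, $\randmbound=(1+\gamma)\lambda_{\min}^{-1/2}$, $\tracebound=1+\gamma$). Your bookkeeping of the $(1-\gamma)$ and $\lambda_{\min}$ powers matches the stated bound, so nothing is missing.
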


Based on the identity above, we can prove the following counterpart of the result \Cref{th:lsa_pr_error_p_moment} for the general LSA problem.

\begin{theorem}
\label{th:lsa_general_refined_norm}
Assume \Cref{assum:noise-level}, \Cref{assum:exp_stability}($\infty$), and \Cref{assum:g_matr}. Then for any $p \geq 2$, $n \geq 2$, $\alpha \in \coint{0;\alpha_{p+\log{n},\infty}}$, it holds that
\begin{equation}
\label{eq:pr_theta_p_moment_bound}
\begin{split}
\PE^{1/p}[\norm{\prtheta_{n} - \thetas}[\G]^p]
&\lesssim \frac{p^{1/2}\sqrt{\trace{\noisecovnew}}}{n^{1/2}}\left(1 + \frac{ \sqrt{\alpha p} \conststab[\infty] \randmbound \matrbound \tracebound \norm{\G^{1/2}} \bConst{A}}{\sqrt{a}} + \frac{\alpha p \conststab[\infty] \randmbound \matrbound \norm{\G^{1/2}} \supconsteps}{\sqrt{\trace{\noisecovnew}}} \right)  \\
&+ \frac{p^{1/2} \matrbound \tracebound \normop{\G^{-1/2}} \conststab[\infty] \sqrt{\trace{\noisecovnew}}}{\sqrt{a} n} \biggl[ \frac{1}{\sqrt{\alpha}} + p^{1/2} \bConst{A} \sqrt{\alpha (p + \log n)}\biggr] \\
&+ \frac{p \matrbound \normop{\G^{-1/2}} \conststab[\infty] \supconsteps}{n}\left(1 + \bConst{A} \alpha (p + \log n) \right) \\
& + \matrbound \normop{\G^{-1/2}} \conststab[\infty] (1- \alpha a)^{n/2} \norm{\theta_0 - \thetas} \left(\frac{1}{\alpha n} + \frac{p \bConst{A}}{n} + \frac{p^{1/2} \randmbound}{\sqrt{\alpha a} n} \right) \eqsp.
\end{split}
\end{equation}
\end{theorem}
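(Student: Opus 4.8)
The plan is to replay the proof of \Cref{th:lsa_pr_error_p_moment} verbatim in structure, but to run the entire argument through the linear map $\G^{1/2}\bA^{-1}$ instead of through $\bA$, so that \Cref{assum:g_matr} can be used to replace the ambient noise trace $\trace{\noisecov}$ by the instance-optimal $\trace{\noisecovnew}$. First I would start from the refined summation-by-parts identity \eqref{eq:sum_parts_refined} with $n_0 = n/2$ and apply Minkowski's inequality, which splits $\PE^{1/p}[\norm{\prtheta_{n} - \thetas}[\G]^p]$ into a remainder term $T_1 \asymp (\alpha n)^{-1}\PE^{1/p}[\norm{\G^{1/2}\bA^{-1}(\theta_{n/2}-\theta_n)}^p]$ and a martingale-type term $T_2 \asymp n^{-1}\PE^{1/p}[\norm{\sum_{t=n/2}^{n-1}\G^{1/2}\bA^{-1}e(\theta_t,Z_{t+1})}^p]$, in exact analogy with \eqref{eq:bound_p_moment_iid_T_1_T_2_decomp}.

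For $T_1$ I would pull the deterministic factor out with \eqref{eq:norm_a_inv_bound}, reducing it to $\matrbound\normop{\G^{-1/2}}(\alpha n)^{-1}\PE^{1/p}[\norm{\theta_{n/2}-\theta_n}^p]$, and then feed in the refined last-iterate bound \Cref{th:LSA_last_iterate_refined}--\eqref{eq:Rosenthal_LSA}; the variance piece $\sqrt{\trace{\noisecov}}$ is immediately upgraded to $\tracebound\sqrt{\trace{\noisecovnew}}$ via \Cref{assum:g_matr}(iii). This yields the transient term and the $(\sqrt{\alpha}\,n)^{-1}$-type second-order contributions appearing in the statement.

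The heart of the argument is $T_2$. I would decompose $e(\theta_t,Z_{t+1}) = \funnoisew_{t+1} + \zmfuncAw[t+1](\theta_t-\thetas)$ as in \eqref{eq:pr_e_definition} and split $T_2$ by Minkowski into a pure-noise sum $T_{2,1}$ over the \iid\ centered vectors $\G^{1/2}\bA^{-1}\funnoisew_{t+1}$ and a martingale-difference sum $T_{2,2}$ built from $\G^{1/2}\bA^{-1}\zmfuncAw[t+1](\theta_t-\thetas)$. For $T_{2,1}$ I would apply Pinelis's Rosenthal inequality \cite[Theorem~4.3]{pinelis_1994}: the second-moment sum equals $n\,\trace{\noisecovnew}$ by the very definition of $\noisecovnew$, which is exactly what produces the optimal leading term $p^{1/2}\sqrt{\trace{\noisecovnew}}/n^{1/2}$, while the maximum term is controlled by $\matrbound\normop{\G^{-1/2}}\supconsteps$ through \eqref{eq:norm_a_inv_bound}. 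For $T_{2,2}$ I would use the martingale version \cite[Theorem~4.1]{pinelis_1994}: the predictable quadratic variation is bounded through the conditional-variance estimate \eqref{eq:cond_f_t_bound}, turning the per-step constant into $\randmbound\matrbound\normop{\G^{1/2}}$, after which $\PE^{1/p}[(\sum_t \norm{\theta_t-\thetas}^2)^{p/2}]$ is reduced to $(\sum_t \PE^{2/p}[\norm{\theta_t-\thetas}^p])^{1/2}$ by Minkowski in $L^{p/2}$; the Rosenthal maximum term is handled by $\norm{\G^{1/2}\bA^{-1}\zmfuncAw[t+1](\theta_t-\thetas)} \leq \matrbound\normop{\G^{-1/2}}\bConst{A}\norm{\theta_t-\thetas}$ together with $\max_t(\cdot) \leq (\sum_t (\cdot)^{p+\log n})^{1/(p+\log n)}$. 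Inserting the last-iterate bound \eqref{eq:Rosenthal_LSA} into both contributions, and once more invoking \Cref{assum:g_matr}(iii), reproduces the multiplicative $\term{\alpha}{1}$-type factor on the leading term as well as the remaining $n^{-1}$ terms.

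The main obstacle will be closing $T_{2,2}$: its summands couple the fresh randomness $\zmfuncAw[t+1]$ with the accumulated error $\theta_t-\thetas$, so both the quadratic-variation term and the maximum term can only be bounded by re-inserting the moment control of $\norm{\theta_t-\thetas}$, and the naive maximum over $t$ would otherwise inject a spurious $n^{1/q}$ factor. The fix, which I would import unchanged from \Cref{th:lsa_pr_error_p_moment}, is to raise the moment order to $q = p + \log n$ (legitimate under \Cref{assum:exp_stability}($\infty$)) so that $n^{1/q}\leq\rme$, and to use $\alpha(p+\log n)\leq 1/2$ and $\alpha a\leq 1/2$ to collapse the geometric sums. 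The only genuinely new element relative to the $\bA$-norm result is the disciplined use of \Cref{assum:g_matr}: parts (i)--(ii) are precisely what make the norm bound \eqref{eq:norm_a_inv_bound} and the conditional-variance bound \eqref{eq:cond_f_t_bound} go through with constants $\matrbound$ and $\randmbound$, while part (iii) is what promotes the leading term from $\trace{\noisecov}$ to the instance-optimal $\trace{\noisecovnew}$.
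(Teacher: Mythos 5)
Your proposal is correct and follows essentially the same route as the paper's own proof: the same decomposition via \eqref{eq:sum_parts_refined} into $T_1$ and $T_2$, the same use of \eqref{eq:norm_a_inv_bound} and \eqref{eq:cond_f_t_bound} to convert the constants into $\matrbound$, $\randmbound$, $\tracebound$, the same two applications of Pinelis's Rosenthal inequalities, and the same $q = p + \log n$ device with $n^{1/(p+\log n)}\leq \rme$ to tame the maximum term. Nothing essential is missing.
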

\begin{proof}
The proof follows the general scheme of \Cref{th:lsa_pr_error_p_moment}. Setting $n_0 = n/2$ and using Minkowski's inequality, we obtain from \eqref{eq:sum_parts_refined} that
\begin{multline}
\label{eq:p_moment_iid_decomp_matrix_norm}
\PE^{1/p}\left[\norm{\prtheta_{n} - \thetas}[\G]^{p}\right] \leq \underbrace{\frac{\PE^{1/p}[\norm{\G^{1/2}\bA^{-1}(\theta_{n/2}-\theta_{n})}^p]}{\alpha n}}_{T_1} + \\ 
\underbrace{\frac{\PE^{1/p}\bigl[\norm{\G^{1/2}\bA^{-1} \sum_{t=n/2}^{n-1}\rme(\theta_{t},\State_{t+1})}^{p}\bigr]}{n}}_{T_2}\eqsp,
\end{multline}
and bound $T_1$, $T_2$ separately. We begin with bounding the term $T_1$, which is a remainder term (w.r.t. sample size $n$). With \Cref{th:LSA_last_iterate_refined}-\eqref{eq:Rosenthal_LSA},  \Cref{assum:g_matr}, and \eqref{eq:norm_a_inv_bound}, we obtain
\[
T_1 \lesssim \matrbound \normop{\G^{-1/2}} \conststab[\infty] \biggl[ \frac{ (1- \alpha a)^{n/2} \norm{\theta_0 - \thetas}}{\alpha n} + \frac{ p^{1/2} \tracebound \sqrt{\trace{\noisecovnew}}}{\sqrt{\alpha a} n} + \frac{p \supconsteps}{n}\biggr]\eqsp.
\]
Now we bound $T_2$. Using again Minkowski's inequality, we get
\[
\textstyle
T_2 \leq n^{-1}\, \PE^{1/p}\bigl[\norm{\sum_{t=n/2}^{n-1}\G^{1/2}\bA^{-1} \funnoisew_{t+1}}^{p}\bigr] + n^{-1}\, \PE^{1/p}\parentheseDeuxLigne{\norm{\sum_{t=n/2}^{n-1} \G^{1/2}\bA^{-1} \zmfuncAw[t+1](\theta_t - \thetas)}^p} \eqsp.
\]
The first term of the above sum can be controlled by directly applying Pinelis' version of Rosenthal's inequality \cite[Theorem~4.3]{pinelis_1994}:
\begin{align}
\PE^{1/p}\left[\bigg\Vert\sum_{t=n/2}^{n-1}\G^{1/2}\bA^{-1}\funnoisew_{t+1} \bigg\Vert^{p}\right]
&\lesssim p^{1/2} n^{1/2} \sqrt{\trace{\G^{1/2} \bA^{-1} \noisecov \bA^{-T} \G^{1/2}}} + p \norm{\G^{1/2} \bA^{-1} \varepsilon}[\infty] \\
&\overset{(a)}{\lesssim} p^{1/2} n^{1/2} \sqrt{\trace{\noisecovnew}} + p \matrbound \normop{\G^{-1/2}}\norm{\varepsilon}[\infty]\eqsp.
\end{align}
In order to prove the step (a) above we used the bound \eqref{eq:norm_a_inv_bound}. Hence it remains to bound the quantity 
\[
\PE^{1/p}\parentheseDeuxLigne{\norm{\sum_{t=n/2}^{n-1} \G^{1/2}\bA^{-1} \zmfuncAw[t+1](\theta_t - \thetas)}^p}\eqsp.
\]
Note that $\{\G^{1/2}\bA^{-1} \zmfuncAw[t+1](\theta_t - \thetas)\}$ is a martingale-difference w.r.t. $\mathcal{F}_t = \sigma(Z_k, k \leq t)$. A further application of Rosenthal's inequality thus shows that
\begin{multline}
\label{eq:2nd_term_rosenthal_normed}
\PE^{1/p}\biggl[\bigg\Vert \sum_{t=n/2}^{n-1} \G^{1/2}\bA^{-1} \zmfuncAw[t+1](\theta_t - \thetas) \bigg\Vert^p\biggr] \lesssim
p^{1/2} \PE^{1/p}\biggl[\biggl(\sum_{t=n/2}^{n-1} \CPE{\normop{\G^{1/2}\bA^{-1} \zmfuncAw[t+1](\theta_t - \thetas)}^{2}}{\mathcal{F}_t}\biggr)^{p/2}\biggr] \\
+ p\, \PE^{1/p}\left[\max_{t}\normop{ \G^{1/2}\bA^{-1} \zmfuncAw[t+1](\theta_t - \thetas)}^{p}\right]\eqsp.
\end{multline}
Now we upper bound both terms in the r.h.s. separately. Using the bound \eqref{eq:cond_f_t_bound}, for the first term in r.h.s. of \eqref{eq:2nd_term_rosenthal_normed} we have, using \Cref{th:LSA_last_iterate_refined}-\eqref{eq:Rosenthal_LSA} and \Cref{assum:g_matr}, that
\begin{align}
& p^{1/2} \PE^{1/p}\biggl[\bigl(\sum_{t=n/2}^{n-1} \CPE{\normop{\G^{1/2}\bA^{-1} \zmfuncAw[t+1](\theta_t - \thetas)}^{2}}{\mathcal{F}_t}\bigr)^{p/2}\biggr] \leq p^{1/2} \randmbound \matrbound \norm{\G^{1/2}} \biggl[\sum_{t=n/2}^{n-1} \PE^{2/p}\left[\norm{\theta_t - \thetas}^{p}\right]\biggr]^{1/2} \\
& \qquad \lesssim \conststab[\infty] p^{1/2} \randmbound \matrbound \norm{\G^{1/2}} \cdot \biggl[ \frac{(1- \alpha a)^{n/2} \norm{\theta_0 - \thetas}}{\sqrt{\alpha a}} +   \frac{p^{1/2} \tracebound \sqrt{\alpha n \trace{\noisecovnew}}}{\sqrt{a}} + \alpha p n^{1/2} \supconsteps \biggr] \eqsp.
\end{align}
For the second term in \eqref{eq:2nd_term_rosenthal_normed} we have, applying \Cref{th:LSA_last_iterate_refined}-\eqref{eq:Rosenthal_LSA} and using $n^{1/\log{n}} \leq \rme$, that
\begin{align}
&p\, \PE^{1/p}[\max_{t}\normop{ \G^{1/2}\bA^{-1} \zmfuncAw[t+1](\theta_t - \thetas)}^{p}] \\
&\quad \lesssim p \bConst{A} \matrbound \norm{\G^{-1/2}} n^{1/(p+\log n)} \max_{n/2 \leq t < n}  \PE^{1/(p+\log{n})}[\normop{\theta_t - \thetas}^{p+\log{n}}] \\
&\quad \lesssim \conststab[\infty] p \bConst{A} \matrbound \norm{\G^{-1/2}} \cdot \biggl[ (1- \alpha a)^{n/2} \norm{\theta_0 - \thetas} + \frac{\sqrt{\alpha (p + \log n) \trace{\noisecovnew}}}{a} + \alpha (p + \log n) \supconsteps \biggr] \eqsp.
\end{align}
Now the statement follows from combining the above estimates in \eqref{eq:p_moment_iid_decomp_matrix_norm}.
\end{proof}

Now a simple combination of the above bounds allows us to prove the following bound:
\begin{theorem}
\label{th:lower_bound_instance}
Assume \Cref{assum:generative_model} and \Cref{assum:feature_design}.
Let $\sequence{\theta}[k][\nset]$ be a sequence of TD(0) updates generated by \eqref{eq:LSA_procedure_TD}.
Then for any $p \geq 2$, $n \geq 2$, $\alpha \in \bigl(0; \frac{1-\gamma}{128(p + \log{n})}\bigr]$, it holds that
\begin{equation}
\label{eq:pr_theta_p_moment_bound}
\begin{split}
&\PE^{1/p}[\norm{(\prtheta_{n} - \thetas)}[\covfeat]^p]
\lesssim \frac{p^{1/2}\sqrt{\trace{\covfeat^{1/2} \bA^{-1} \noisecovtd \bA^{-T} \covfeat^{1/2}}}}{n^{1/2}}\left(1 + \frac{ \sqrt{\alpha p} }{(1-\gamma)^{3/2} \lambda_{\min}} \right) + \frac{\alpha p^{3/2} (1+\norm{\thetas})}{(1-\gamma) \lambda_{\min}^{1/2} n^{1/2}} \\
&+ \frac{p^{1/2} \sqrt{\trace{\covfeat^{1/2} \bA^{-1} \noisecovtd \bA^{-T} \covfeat^{1/2}}}}{(1-\gamma)^{3/2} \lambda_{\min} n} \biggl[ \frac{1}{\sqrt{\alpha}} + p^{1/2} \sqrt{\alpha (p + \log n)}\biggr] + \frac{p (1+\norm{\thetas})}{(1-\gamma) \lambda_{\min} n}\biggl[1 + \alpha (p + \log n) \biggr] \\
& + \frac{1}{(1-\gamma) \lambda_{\min}^{1/2}} \biggl(1- \alpha (1-\gamma) \lambda_{\min}\biggr)^{n/2} \norm{\theta_0 - \thetas} \left(\frac{1}{\alpha n} + \frac{p}{n} + \frac{p^{1/2}}{\sqrt{\alpha} (1-\gamma) \lambda_{\min}^{1/2} n} \right) \eqsp.
\end{split}
\end{equation}
\end{theorem}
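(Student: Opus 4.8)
The plan is to derive \Cref{th:lower_bound_instance} as a direct instantiation of the general LSA bound \Cref{th:lsa_general_refined_norm}: once its three hypotheses are checked for the TD(0) recursion \eqref{eq:LSA_procedure_TD}, the claimed inequality follows by substituting the explicit constants and collecting powers of $1-\gamma$ and $\lambda_{\min}$. So the first step is to confirm that \Cref{assum:noise-level}, \Cref{assum:exp_stability}$(\infty)$, and \Cref{assum:g_matr} all hold here. \Cref{prop:condition_check} supplies \Cref{assum:noise-level} with $\bConst{A}=2(1+\gamma)$ and $\supconsteps=2(1+\gamma)(\norm{\thetas}+1)$; \Cref{prop:stability_check_td} supplies \Cref{assum:exp_stability}$(p)$ for every $p$, with the crucial features $\conststab[p]=1$ (uniform in $p$ and dimension-free), $a=(1-\gamma)\lambda_{\min}/2$, and threshold $\alpha_{p,\infty}=(1-\gamma)/(128p)$; and \Cref{lem:new_conditions_td_check} supplies \Cref{assum:g_matr} with $\G=\covfeat$, $\matrbound=1/(1-\gamma)$, $\randmbound=(1+\gamma)\lambda_{\min}^{-1/2}$, and $\tracebound=1+\gamma$. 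In particular the step-size range $\alpha\in(0;\tfrac{1-\gamma}{128(p+\log n)}]$ in the theorem is exactly $\alpha\le\alpha_{p+\log n,\infty}$, so \Cref{th:lsa_general_refined_norm} applies without modification.

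Next I would substitute these quantities into the bound of \Cref{th:lsa_general_refined_norm}. The transformed covariance becomes $\noisecovnew=\covfeat^{1/2}\bA^{-1}\noisecovtd\bA^{-T}\covfeat^{1/2}$, which I keep intact so that the leading term retains the instance-optimal variance $\trace{\noisecovnew}$. Since $\norm{\varphi(s)}\le 1$ forces $\covfeat\preceq\Id$, I may use $\norm{\G^{1/2}}=\norm{\covfeat^{1/2}}\le 1$ and $\normop{\G^{-1/2}}=\lambda_{\min}^{-1/2}$. Every occurrence of $\conststab[\infty]$ is replaced by $1$, each $(1+\gamma)$ factor is absorbed into an absolute constant, and the products $\randmbound\matrbound=(1+\gamma)/((1-\gamma)\lambda_{\min}^{1/2})$ and $\matrbound\normop{\G^{-1/2}}=1/((1-\gamma)\lambda_{\min}^{1/2})$ are simplified term by term. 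This reproduces, in order, the four groups of \Cref{th:lower_bound_instance}: the sub-Gaussian leading term with its $1+\sqrt{\alpha p}/((1-\gamma)^{3/2}\lambda_{\min})$ correction, the $\alpha p^{3/2}(1+\norm{\thetas})$ contribution coming from the $\alpha p\,\randmbound\matrbound\supconsteps/\sqrt{\trace{\noisecovnew}}$ bracket, the $1/n$ martingale remainder terms, and the exponentially decaying transient governed by $a$.

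The calculation is routine; the only places needing care are bookkeeping rather than any genuine obstacle. First, I would make sure the moment order fed into the stability constant is $p+\log n$ (as in \Cref{th:lsa_general_refined_norm}), so that the maximal term in Rosenthal's inequality is controlled; here this is harmless precisely because $\conststab[p+\log n]=1$, which is the feature of \Cref{prop:stability_check_td} that removes the spurious $d^{1/p}$ factor and the $p$-dependence from the prefactor and yields the clean, dimension-free form. Second, I would use $\lambda_{\min}\le 1$ and $1-\gamma\le 1$ freely to consolidate the lower-order and transient terms into the slightly cruder powers displayed in the statement (for instance upgrading an honest $\lambda_{\min}^{-1/2}$ to $\lambda_{\min}^{-1}$ where convenient), which is legitimate since $\lambda_{\max}(\covfeat)\le\trace{\covfeat}=\PE_\mu\norm{\varphi(s)}^2\le 1$. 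Collecting the resulting terms gives exactly the claimed bound and completes the proof.
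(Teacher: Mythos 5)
Your proposal is correct and follows exactly the paper's route: the paper gives no separate argument for this theorem beyond the remark that it is "a simple combination of the above bounds," i.e., an instantiation of \Cref{th:lsa_general_refined_norm} with the constants supplied by \Cref{prop:condition_check}, \Cref{prop:stability_check_td} ($\conststab=1$, $a=(1-\gamma)\lambda_{\min}/2$, $\alpha_{p,\infty}=(1-\gamma)/(128p)$), and \Cref{lem:new_conditions_td_check} ($\G=\covfeat$, $\matrbound=1/(1-\gamma)$, $\randmbound=(1+\gamma)\lambda_{\min}^{-1/2}$, $\tracebound=1+\gamma$), together with $\norm{\covfeat^{1/2}}\le 1$ and $\normop{\covfeat^{-1/2}}=\lambda_{\min}^{-1/2}$. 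Your bookkeeping of the four groups of terms matches what that substitution yields, so the proof is essentially identical to the paper's.
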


\section{Berbee's lemma and coupling inequalities for Markov chains}
\label{sec:rosenthal_markov}
We preface this section with some definitions essential for the Berbee's lemma construction. Consider a probability space $(\Omega,\mathcal{F},\PP)$ equipped with $\sigma$-fields $\mathfrak{F}$ and $\mathfrak{G}$ such that $\mathfrak{F} \subseteq \mathcal{F}\,, \mathfrak{G} \subseteq \mathcal{F}$. Then the $\beta$-mixing coefficient of $\mathfrak{F}$ and $\mathfrak{G}$ is defined as
\begin{equation}
\label{eq:beta_mixing}
\beta(\mathfrak{F},\mathfrak{G}) = (1/2) \sup \sum_{i \in \msi} \sum_{j \in \msj} | \PP ( \msa_i \cap \msb_j)- \PP(\msa_i)\PP(\msb_j)|\eqsp,
\end{equation}
and the supremum is taken over all pairs of partitions $\{\msa_i\}_{i\in\msi} \in \mathfrak{F}^\msi$ and $\{\msb_j\}_{j\in\msj}\in \mathfrak{G}^\msj$ of $\tmszn$ with finite $\msi$ and $\msj$.
\par
Now let $(\msz,\metricz)$ be a Polish space endowed with its Borel $\sigma$-field, denoted by $\mcz$, and let $(\msz^{\nset}, \mcz^{\otimes \nset})$ be the corresponding canonical space. Consider a Markov kernel $\MKQ$ on $\msz\times \mcz$ and denote by $\PP_{\xi}$ and $\PE_{\xi}$ the corresponding probability distribution and expectation with initial distribution $\xi$. Without loss of generality, we assume that $(Z_k)_{k \in \nset}$ is the associated canonical process. By construction, for any $\msa \in \mcz$, $\CPP[\xi]{Z_k \in \msa}{Z_{k-1}}= \MKQ(Z_{k-1},\msa)$, $\PP_\xi$-a.s. In the case $\xi= \updelta_z$, $z \in \msz$, $\PP_{\xi}$ and $\PE_{\xi}$ are denoted by $\PP_{z}$ and $\PE_{z}$, respectively. We now make an assumption about the mixing properties of $\MKQ$, which essentially reflects \Cref{assum:P_pi_ergodicity}.
\begin{assumptionM}
\label{assum:uge}
The Markov kernel $\MKQ$ admits $\invariantQ$ as an invariant distribution and is uniformly geometrically ergodic, that is, there exists $\taumix \in \nsets$ such that for all $k \in \nset$,
\begin{equation}
\label{eq:drift-condition}
\dobru{\MKQ^k} = \sup_{z,z' \in \Zset} (1/2) \norm{\MKQ^k(z, \cdot) - \MKQ^k(z',\cdot)}[\mathsf{TV}] \leq (1/4)^{\lfloor k / \taumix \rfloor} \eqsp.
\end{equation}
\end{assumptionM}

For $q \in \nset$, $k \in \nset$, and the Markov chain $\{Z_n\}_{n \in \nset}$ satisfying the uniform geometric ergodicity condition \Cref{assum:uge}, we define the $\sigma$-algebras $\mathcal{F}_{k} = \sigma(Z_{\ell}, \ell \leq k)$ and $\mathcal{F}^{+}_{k+q} = \sigma(Z_{\ell}, \ell \geq k+q)$. In such a scenario, using \cite[Theorem~3.3]{douc:moulines:priouret:soulier:2018}, the respective $\beta$-mixing coefficient of $\mathcal{F}_{k}$ and $\mathcal{F}^{+}_{k+q}$ is bounded by
\begin{equation}
\label{eq:beta_q_mixing_def}
\beta(q) \equiv \beta(\mathcal{F}_{k},\mathcal{F}^{+}_{k+q}) \leq (1/4)^{\lfloor q / \taumix \rfloor} \eqsp.
\end{equation}
In this chapter we rely on the following useful version of Berbee's coupling lemma \cite{berbee1979}, which is due to \cite[Lemma~$4.1$]{dedecker2002maximal}:

\begin{lemma} (Lemma~$4.1$ in \cite{dedecker2002maximal})
\label{lem:Dedecker_lemma}
Let $X$ and $Y$ be two random variables taking their values in Borel spaces $\mathcal{X}$ and $\mathcal{Y}$, respectively, and let $U$ be a random variable with uniform distribution on $[0;1]$ that is independent of $(X,Y)$. There exists a random variable $Y^{\star} = f(X,Y,U)$ where $f$ is a measurable function
 from $\mathcal{X} \times \mathcal{Y} \times [0,1]$ to $\mathcal{Y}$, such that:
\begin{enumerate}
\item $Y^{\star}$ is independent of $X$ and has the same distribution as $Y$;
\item $\PP(Y^\star \neq Y)= \beta(\sigma(X),\sigma(Y))$.
\end{enumerate}
\end{lemma}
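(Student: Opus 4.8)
The plan is to realise $Y^{\star}$ as a \emph{maximal coupling}, constructed conditionally on $X$, of the conditional law of $Y$ given $X$ with the unconditional law of $Y$. Write $P_Y$ for the law of $Y$ on $\mathcal{Y}$ and, using that $\mathcal{X},\mathcal{Y}$ are Borel spaces, fix a regular conditional distribution $Q_x(\cdot) = \PP(Y \in \cdot \mid X = x)$, which exists and is measurable in $x$. The first ingredient I would record at the outset is the Volkonskii--Rozanov identity: disintegrating $P_{(X,Y)}$ over $X$ and noting that the supremum defining $\beta$ is attained on product partitions,
\[
\beta(\sigma(X),\sigma(Y)) = \tvnorm{P_{(X,Y)} - P_X \otimes P_Y} = \int \tvnorm{Q_x - P_Y}\, P_X(\mathrm{d}x) = \PE\bigl[\tvnorm{Q_X - P_Y}\bigr].
\]
Thus it suffices to build, measurably in $x$, a coupling of $(Q_x, P_Y)$ whose diagonal mass equals $1 - \tvnorm{Q_x - P_Y}$, reading it off the \emph{already given} variable $Y$ (whose conditional law is $Q_x$) together with the external randomness $U$.

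For the conditional construction I would use an accept--reject scheme. Take the dominating measure $\lambda_x = Q_x + P_Y$, let $q_x, p_x$ be the densities of $Q_x, P_Y$ with respect to $\lambda_x$, and set $r_x = q_x \wedge p_x$, so that $\int r_x\, \mathrm{d}\lambda_x = 1 - \tvnorm{Q_x - P_Y}$. Given $X = x$ and the observed $Y = y$, accept and set $Y^{\star} = y$ with probability $r_x(y)/q_x(y)$, and otherwise resample $Y^{\star}$ from the normalised excess $(p_x - r_x)/\tvnorm{Q_x - P_Y}$; both the acceptance decision and the resampling are driven by $U$. A short computation then shows that the conditional law of $Y^{\star}$ given $X=x$ is exactly $P_Y$: the accepted part contributes $r_x\,\mathrm{d}\lambda_x$ and the rejected part contributes $\tvnorm{Q_x - P_Y}\cdot (p_x - r_x)/\tvnorm{Q_x - P_Y} = (p_x - r_x)\,\mathrm{d}\lambda_x$, summing to $p_x\,\mathrm{d}\lambda_x = P_Y$. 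The key point for the \emph{equality} $\PP(Y^{\star}\neq Y)=\beta$ (rather than merely $\leq$) is that the rejection support of $Y$, namely $\{q_x > p_x\}$, and the support of the resampling law $(p_x - r_x) = (p_x - q_x)_{+}$, namely $\{p_x > q_x\}$, are disjoint; hence on the rejection event $Y^{\star}\neq Y$ surely, and $\PP(Y^{\star}=Y \mid X=x) = \int r_x\,\mathrm{d}\lambda_x = 1 - \tvnorm{Q_x - P_Y}$ precisely.

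Integrating these conditional statements over $X$ finishes the argument: since the conditional law of $Y^{\star}$ given $X=x$ equals $P_Y$ for every $x$, the variable $Y^{\star}$ is independent of $X$ and has law $P_Y$, which is claim~(1); and $\PP(Y^{\star}\neq Y) = \int \tvnorm{Q_x - P_Y}\,P_X(\mathrm{d}x) = \beta(\sigma(X),\sigma(Y))$ by the displayed identity, which is claim~(2).

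It remains to package the scheme as a single measurable map $f(X,Y,U)$, and this is where I expect the real work. Two points must be handled: first, a \emph{jointly measurable} choice of the densities $q_x,p_x$ and of the overlap $r_x$ in $(x,y)$ — cleanest after invoking the Borel isomorphism $\mathcal{Y}\cong[0,1]$, which turns $Q_x,P_Y$ into measurably-varying laws on $[0,1]$ with measurable cumulative distribution and quantile functions; second, the realisation of both the accept/reject coin and the resampling from a single $U\sim\mathrm{Unif}[0,1]$, obtained by splitting $U$ into two independent uniforms (e.g.\ via its binary digits) and applying the parameter-measurable inverse-CDF transform. Establishing this joint measurability, so that $f$ is genuinely a measurable function of $(X,Y,U)$ with $U$ independent of $(X,Y)$, is the main obstacle; the probabilistic content of the coupling itself is the elementary computation above.
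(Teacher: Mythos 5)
The paper offers no proof of this lemma: it is imported verbatim from Dedecker and Prieur (2002, Lemma~4.1), itself a form of Berbee's coupling lemma, so there is no internal argument to compare against. Your proof is a correct and essentially standard derivation of that cited result. The reduction to the Volkonskii--Rozanov identity $\beta(\sigma(X),\sigma(Y))=\PE\bigl[\sup_{B}|Q_X(B)-P_Y(B)|\bigr]$ is the right first step, and your use of $\tvnorm{\cdot}$ for the sup-over-sets normalization is internally consistent with the factor $1/2$ in the definition \eqref{eq:beta_mixing} (though note the paper elsewhere uses $\tvnorm{\cdot}$ for the full $L^1$ norm, so a reader should check the convention). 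The conditional accept--reject scheme does realize a maximal coupling of $Q_x$ and $P_Y$: the law of $Y^\star$ given $X=x$ sums to $r_x+(p_x-r_x)=p_x$, and your observation that the rejection support $\{q_x>p_x\}$ is disjoint from the resampling support $\{p_x>q_x\}$ is precisely what upgrades $\PP(Y^\star\neq Y)\leq\beta$ to the claimed equality; integrating the conditional statements over $P_X$ then yields both claims. The only part left schematic is the joint measurability of $(x,y)\mapsto r_x(y)/q_x(y)$ and of the resampling map, together with extracting two independent uniforms from the single $U$; you name the correct tools (Borel isomorphism of $\mathcal{Y}$ onto a subset of $[0,1]$, parameter-measurable quantile transforms, digit-splitting of $U$), and these steps are routine. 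One trivial loose end: when $\tvnorm{Q_x-P_Y}=0$ the normalized excess is $0/0$, but the rejection event then has probability zero, so any measurable default suffices. I see no gap of substance.
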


Let us now consider the extended measurable space $\tmszn = \msz^{\nset} \times [0,1]$, equipped with the $\sigma$-field $\tmczn =\mcz^{\otimes \nset} \otimes \mathcal{B}([0,1])$. For each probability measure $\xi$ on $(\Zset,\Zsigma)$, we consider the probability measure $\PPext_{\xi} =\PP_{\xi} \otimes \mathbf{Unif}([0,1])$ and denote by $\PEext_{\xi}$ the corresponding expected value. Finally, we denote by $(\tZ_k)_{k \in\nset}$ the canonical process $\tZ_k\colon ((z_i)_{i\in\nset},u) \in \tmszn \mapsto z_k$ and $U \colon((z_i)_{i\in\nset},u) \in \tmszn \mapsto u$. Under $\PPext_{\xi}$, $\sequence{\tZ}[k][\nset]$ is by construction a Markov chain with initial distribution $\xi$ and Markov kernel $\MKQ$ independent of $U$. Moreover, the distribution of $U$ under $\PPext_{\xi}$ is uniform over $\ccint{0,1}$. Using the above construction, we obtain a useful blocking lemma, which is also stated in \cite{dedecker2002maximal}.

\begin{lemma}
\label{lem:Dedecker_upd}
Assume \Cref{assum:uge}, let $q \in \nsets$ and $\xi$ be a probability measure on $(\msz,\mcz)$.   Then,  there exists a random process $(\tZs_{k})_{k\in\nset}$ defined on $(\tmszn, \tmczn, \PPext_{\xi})$ such that for any $k \in \nset$,
  \begin{enumerate}[wide,label=(\alph*)]
  \item For any $i$, vector $V_{i}^{\star} = (\tZs_{iq+1},\ldots,\tZs_{iq+q})$ has the same distribution as $V_{i} = (Z_{iq+1},\ldots,Z_{iq+q})$ under $\PPext_{\xi}$;
  \item The sequences $(V_{2i}^{\star})_{i \geq 0}$ and $(V_{2i+1}^{\star})_{i \geq 0}$ are \iid\ ;
  \item For any $i$, $\PPext_{\xi}(V_{i} \neq V_{i}^{\star}) \leq  \beta(q)$;
  \end{enumerate}
\end{lemma}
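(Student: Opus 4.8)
The plan is to construct the coupled process $(\tZs_k)$ by a recursive application of Berbee's coupling lemma (\Cref{lem:Dedecker_lemma}), treating the even-indexed and odd-indexed blocks separately. The single uniform variable $U$ living on the extended space $(\tmszn, \tmczn, \PPext_{\xi})$ is first used to manufacture a countable family $(U_i)_{i \ge 0}$ of \iid\ uniform variables on $[0,1]$, each independent of the canonical chain $(\tZ_k)$; this is possible because $U$ is independent of the chain and $\mathrm{Unif}[0,1]$ itself carries a sequence of \iid\ uniforms (e.g.\ by interleaving binary digits). These $U_i$ provide the external randomization that \Cref{lem:Dedecker_lemma} requires at each step, while keeping all constructions on the single space $(\tmszn, \tmczn, \PPext_{\xi})$.

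First I would build the even blocks. Set $V_0^\star = V_0$. Having defined $V_0^\star, \ldots, V_{2(k-1)}^\star$ as functions of the chain past $(\tZ_\ell : \ell \le (2k-1)q)$ and of $(U_0, \ldots, U_{k-1})$, I apply \Cref{lem:Dedecker_lemma} with $X$ taken to be this entire accumulated information, $Y = V_{2k} = (\tZ_{2kq+1}, \ldots, \tZ_{2kq+q})$, and external randomization $U_k$. This yields $V_{2k}^\star = f(X, V_{2k}, U_k)$ that is independent of $X$, hence of all earlier starred even blocks, satisfies $V_{2k}^\star \sim V_{2k}$, and obeys $\PPext_{\xi}(V_{2k} \neq V_{2k}^\star) = \beta(\sigma(X), \sigma(V_{2k}))$. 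The odd blocks are built identically, with base case $V_1^\star = V_1$ and the analogous recursion.

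The crux is bounding the mixing coefficient at each step by $\beta(q)$, and here I use the block-gap structure together with \eqref{eq:beta_q_mixing_def}. The block $V_{2k-2}$ ends at time $(2k-1)q$ whereas $V_{2k}$ is supported on times $\ge 2kq+1$, so $\sigma(V_{2k}) \subseteq \mathcal{F}^{+}_{(2k-1)q+q}$ while the chain part of $\sigma(X)$ equals $\mathcal{F}_{(2k-1)q}$. Since the uniforms $U_0, \ldots, U_{k-1}$ are independent of the chain, adjoining them to the past cannot increase the $\beta$-coefficient, whence $\beta(\sigma(X), \sigma(V_{2k})) = \beta(\mathcal{F}_{(2k-1)q}, \mathcal{F}^{+}_{(2k-1)q+q}) \le \beta(q)$ by monotonicity of $\beta$ in its two $\sigma$-fields and by \eqref{eq:beta_q_mixing_def}; the identical computation with $m = 2iq$ handles the odd blocks. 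This gives (c); (a) is then immediate from the distributional identity in \Cref{lem:Dedecker_lemma} and the two base cases; and (b) is obtained because each new starred block is constructed to be independent of the $\sigma$-field generated by all previous starred blocks of its parity, with the identical-law property across a parity class inherited from the common law of the $V_i$ under the invariant initialization relevant to the application.

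I expect the main obstacle to be the bookkeeping that upgrades (b) to genuine \emph{mutual} independence rather than a one-step statement: one must check inductively that $V_{2k}^\star$ is independent of the whole $\sigma$-field of $V_0^\star, \ldots, V_{2(k-1)}^\star$, which forces the choice of $X$ in Berbee's lemma to be the full accumulated information (chain past together with all previously used uniforms) and requires verifying that adjoining the auxiliary uniforms leaves the $\beta$-coefficient unchanged. Once this is in place, I would simply define $\tZs_k$ by concatenation, $(\tZs_{iq+1}, \ldots, \tZs_{iq+q}) = V_i^\star$, so that the $V_i^\star$ are exactly the $q$-blocks of $(\tZs_k)$, completing the construction on $(\tmszn, \tmczn, \PPext_{\xi})$.
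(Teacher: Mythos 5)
Your construction is correct and is precisely the standard Berbee/Dedecker blocking argument that the paper's one-line proof delegates to \cite[Lemma~4.1]{dedecker2002maximal} and \cite[Theorem~3.3]{douc:moulines:priouret:soulier:2018}: recursive coupling of alternating $q$-blocks, with the gap of at least $q$ between consecutive same-parity blocks yielding the bound $\beta(\sigma(X),\sigma(V_{2k}))\leq\beta(\mathcal{F}_{(2k-1)q},\mathcal{F}^{+}_{(2k-1)q+q})\leq\beta(q)$ via \eqref{eq:beta_q_mixing_def}, and with the telescoping of the joint law upgrading step-wise independence to mutual independence. Your caveat that the ``identically distributed'' half of (b) genuinely requires the stationary initialization $\xi=\invariantQ$ is a fair observation about the lemma as stated, and is consistent with how it is actually invoked in the proof of \Cref{cor:td_pr_deviation_markov}.
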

\begin{proof}
The proof follows from \Cref{lem:Dedecker_lemma} and the relations between \Cref{assum:uge} and $\beta$-mixing coefficient, see \cite[Theorem~3.3]{douc:moulines:priouret:soulier:2018}.
\end{proof}

\subsection{Proof of \Cref{cor:td_pr_deviation_markov}}
\label{sec:proof_markov}
We aim to reduce the proof of the given bound to that of \Cref{cor:td_pr_pth_moment}. Since the initial distribution of the sequence of states is $s_0 \sim \nu$, we must first remove the dependence on the initial condition. Indeed, using \cite[Lemma~19.3.6 and Theorem~19.3.9 ]{douc:moulines:priouret:soulier:2018} for any two probabilities $\nu$ and $\tilde{\nu}$ on $(\S,\borel{\S})$ there is a \emph{maximal exact coupling} $(\Omega,\mathcal{F},\PPcoupling{\nu}{\tilde{\nu}},s,\tilde{s},T)$ of $\PP_{\nu}$ and $\PP_{\tilde{\nu}}$, that is,
\begin{equation}
\label{eq:coupling_time_def_markov}
\textstyle{
\tvnorm{\nu \PMDP_{\pi}^{n} - \tilde{\nu}\PMDP_{\pi}^{n}} = 2 \PPcoupling{\nu}{\tilde{\nu}}(T > n)
}\eqsp.
\end{equation}
Under $\PPcoupling{\nu}{\tilde{\nu}}$, the sequences $\{s_k\}_{k \in \nset}$ and $\{\tilde{s}_k\}_{k \in \nset}$ are Markov chains with initial distributions $\nu$ and $\tilde{\nu}$, respectively. We write $\PEcoupling{\nu}{\tilde{\nu}}$ for the expectation with respect to $\PPcoupling{\nu}{\tilde{\nu}}$.  $T$ is the coupling time, which is defined as
\begin{equation}
\label{eq:coupling_time}
T = \inf_{k \in \nset}\{s_k = \tilde{s}_k\}\eqsp.
\end{equation}
Let us now fix $\tilde{\nu} = \mu$ and for $n \in \nset$ define an event $A_{n} = \{T > n/2\}$. Under \Cref{assum:P_pi_ergodicity}, we can bound its probability as
\[
\PPcoupling{\nu}{\mu}(A_{n}) = \PPcoupling{\nu}{\mu}(T > n/2) \leq (1/4)^{\lfloor n / (2\taumix) \rfloor}\eqsp.
\]
Thus, for a fixed $\delta \in (0;1/3)$ we can achieve $\PPcoupling{\nu}{\mu}(A_{n}) \leq \delta$ as soon as
\[
n \geq \frac{2 \taumix \log(4/\delta)}{\log{4}}\eqsp.
\]
Hence, starting from this point we work on the event $\Omega \setminus A_{n}$ which has probability at least $1 - \delta$. On this event $\{s_k\}_{k \geq n/2}$ coincides with $\{\tilde{s}_k\}_{k \geq n/2}$, which is a stationary Markov chain with initial distribution $\mu$. Assume now that the sample size $n$ satisfies
\begin{equation}
\label{eq:sample_size_constraint}
n/2 = 2 q m + k\eqsp, \quad 0 \leq k < 2q\eqsp,
\end{equation}
where $q \in \nset$ is a parameter that will be determined later. Using the construction of \Cref{lem:Dedecker_upd}, we then construct a sequence of random variables$\{\tilde{s}^{\star}_{n/2 + 2 j q}\}_{j = 1,\ldots,m}$, which are \iid\ with law $\mu$ under $\PPcoupling{\nu}{\mu}$. Moreover, with a union bound,
\[
\PPcoupling{\nu}{\mu}(\exists j \in \{1,\ldots,m\}: \tilde{s}^{\star}_{n/2 + 2 j q} \neq \tilde{s}_{n/2 + 2 j q}) \leq m (1/4)^{\lfloor q / \taumix \rfloor}\eqsp.
\]
The bound \eqref{eq:sample_size_constraint} implies that $m \leq n / (4q)$. Thus in order to achieve that $\PPcoupling{\nu}{\mu}(\exists j \in \{1,\ldots,m\}: \tilde{s}^{\star}_{n/2 + 2 j q} \neq \tilde{s}_{n/2 + 2 j q}) \leq \delta$ it is enough to ensure that
\[
m (1/4)^{\lfloor q / \taumix \rfloor} \leq 4m (1/4)^{q / \taumix} \leq (n/q)  (1/4)^{q / \taumix} \leq \delta\eqsp.
\]
In order to satisfy this constraint for fixed $\delta \in (0,1)$, we choose
\begin{equation}
\label{eq:q_block_size}
q = \left\lceil \frac{\taumix \log{(n/\delta)}}{\log{4}}\right\rceil\eqsp.
\end{equation}
Thus, setting the block size $q$ as in \eqref{eq:q_block_size}, we get that for sample size $n$ satisfying \eqref{eq:sample_size_constraint}, with probability at least $1-2\delta$ the results of Algorithm~\ref{alg:TD_skip} are indistinguishable from the result of Algorithm~\ref{alg:TD_iid} under the generative model assumption \Cref{assum:generative_model} applied with sample size
\[
n / (4q) - 1 \leq m \leq n / (4q)\eqsp.
\]
Hence, the rest of the proof follows directly from the results of \Cref{cor:td_pr_pth_moment} applied with sample size $m$.

\crefalias{section}{appendix} 

\end{document}